\pdfoutput=1

\documentclass[11pt]{article}

\usepackage[preprint]{acl}

\usepackage{times}
\usepackage{latexsym}

\usepackage[T1]{fontenc}

\usepackage[utf8]{inputenc}

\usepackage{microtype}

\usepackage{inconsolata} 

\usepackage{graphicx}
\usepackage{xcolor, soul,todonotes} 
	\definecolor{kmycolor}{rgb}{0.858, 0.188, 0.478}

%
%

\title{Instructions for *ACL Proceedings}


\author{First Author \\
  Affiliation / Address line 1 \\
  Affiliation / Address line 2 \\
  Affiliation / Address line 3 \\
  \texttt{email@domain} \\\And
  Second Author \\
  Affiliation / Address line 1 \\
  Affiliation / Address line 2 \\
  Affiliation / Address line 3 \\
  \texttt{email@domain} \\}


\usepackage{booktabs}
\usepackage{multirow}
\usepackage{tcolorbox}
\usepackage{amssymb}
\usepackage{amsmath}
\usepackage{amsthm}
\usepackage[tikz]{bclogo}
\usepackage{tabularray}
\usepackage{float}
\usepackage{array}
\usepackage{arydshln}
\usepackage{hhline}
\usepackage{cleveref}
\usepackage{caption}
\usepackage{soul}
\usepackage{graphicx}
\usepackage{subcaption}

\newtheorem{corollary}{Corollary}[section]

\crefformat{subsection}{\S#2#1#3}
\crefformat{section}{\S#2#1#3}
\crefformat{appendix}{\S#2#1#3}

\newtheorem{assumptioninner}{Assumption} 
\newcounter{assumption}[section]
\newenvironment{assumption}[1][]{
    \refstepcounter{assumption} 
    \begin{assumptioninner}[#1]
    }{
    \end{assumptioninner}
}

\newtheorem{lemmainner}{Lemma} 
\newcounter{lemma}[section]
\newenvironment{lemma}[1][]{
    \refstepcounter{lemma} 
    \begin{lemmainner}[#1]
    }{
    \end{lemmainner}
}

\newtheorem{theoreminner}{Theorem} 
\newcounter{theorem}[section]
\newenvironment{theorem}[1][]{
    \refstepcounter{theorem} 
    \begin{theoreminner}[#1]
    }{
    \end{theoreminner}
}

\newtheorem{propositioninner}{Proposition} 
\newcounter{proposition}[section]
\newenvironment{proposition}[1][]{
    \refstepcounter{proposition} 
    \begin{propositioninner}[#1]
    }{
    \end{propositioninner}
}

\definecolor{c3}{cmyk}{0.3081,0,0.7209,0.3255}
\newtcbox{\hlprimarytab}{on line, rounded corners, box align=base, colback=c3!10,colframe=white,size=fbox,arc=3pt, before upper=\strut, top=-2pt, bottom=-4pt, left=-2pt, right=-2pt, boxrule=0pt}
\newtcbox{\hlsecondarytab}{on line, box align=base, colback=red!10,colframe=white,size=fbox,arc=3pt, before upper=\strut, top=-2pt, bottom=-4pt, left=-2pt, right=-2pt, boxrule=0pt}

\newcommand{\da}[1]{\tiny{\hlsecondarytab{($\downarrow$#1)}}}

\newcounter{findingcounter}[subsection] 
\renewcommand{\thefindingcounter}{\thesubsection.\arabic{findingcounter}}

\newcommand{\finding}[1]{%
  \refstepcounter{findingcounter}%
  \begin{bclogo}[couleur=black!10, epBord=1, arrondi=0.1, logo=\bclampe, marge=2, ombre=true, blur, couleurBord=black!20, tailleOndu=3, sousTitre={\em \small \textbf{F \thefindingcounter.} #1}]{} 
  \end{bclogo}%
}

\title{A Comparative Analysis of Contextual Representation Flow in \\State-Space and Transformer Architectures}

\author{
    Nhat M. Hoang$^{1}$, Do Xuan Long$^{2}$, Cong-Duy Nguyen$^{1}$, Min-Yen Kan$^{2}$, Luu Anh Tuan$^{1}$\\
    $^{1}$Nanyang Technological University, Singapore,\\
    $^{2}$National University of Singapore\\
    \texttt{\{hoangmin003, nguyentr003\}@e.ntu.edu.sg}, \\
    \texttt{xuanlong.do@u.nus.edu}, \texttt{kanmy@comp.nus.edu.sg}, \texttt{anhtuan.luu@ntu.edu.sg}\\
}



\begin{document}
\maketitle
\begin{abstract}
State Space Models (SSMs) have recently emerged as efficient alternatives to Transformer-Based Models (TBMs) for long-sequence processing with linear scaling, yet how contextual information flows across layers in these architectures remains understudied. We present the first unified, token- and layer-wise analysis of representation propagation in SSMs and TBMs. Using centered kernel alignment, variance-based metrics, and probing, we characterize how representations evolve within and across layers. We find a key divergence: TBMs rapidly homogenize token representations, with diversity reemerging only in later layers, while SSMs preserve token uniqueness early but converge to homogenization deeper. Theoretical analysis and parameter randomization further reveal that oversmoothing in TBMs stems from architectural design, whereas in SSMs, it arises mainly from training dynamics. 
These insights clarify the inductive biases of both architectures and inform future model and training designs for long-context reasoning.
\end{abstract}

\section{Introduction}
Long-context processing remains a critical challenge in natural language processing, with applications spanning document analysis, retrieval systems, and multi-turn dialogue \citep{Beltagy2020Longformer, goldman-etal-2024-really, liu2025comprehensive}. While Transformer-Based Models (TBMs) \citep{vaswani2017attention} perform well, their quadratic complexity poses problems for scalability in long contexts \cite{mamba1}. 
State Space Models (SSMs; i.e., Mamba) have emerged as promising linear-complexity alternatives, yet recent work has highlighted limitations in their long-context modeling \cite{jelassi2024repeat, chen2024stuffed}.

Recent work has begun to probe the internal dynamics of these architectures \citep{fan2024not, skean2024does, ali2025hidden, skean2025layer, klabunde2025similarity, skean2025layer, men2025shortgpt}. For example, \citet{skean2025layer} showed that intermediate layers often outperform final layers for task-relevant information in both TBMs and SSMs, challenging the conventional focus on final-layer outputs. Similarly, \citet{wang2025understanding} identified oversmoothing and recency bias in SSMs, where token representations converge as models favor local over distant context. However, it remains unclear how TBMs and SSMs fundamentally differ in propagating and transforming contextual representations across layers, particularly when token- and layer-wise perspectives are considered together. 
Prior studies have examined some aspects in isolation, but a unified characterization is lacking.

To bridge this gap, we present the first comprehensive empirical and theoretical pairwise comparison of representation flow in TBMs and SSMs. Our analysis spans local (token-wise; \Cref{sssec:token-analysis}), global (layer-wise; \Cref{sssec:layer-analysis}), functional (probing; \Cref{sssec:probing-analysis}), and theoretical (\Cref{sec:theoretical-analysis}) perspectives, with aligned setups and evaluation tasks enabling direct, side-by-side comparison. This reveals the architectural fingerprints that drive success or failure on long-context tasks and offers actionable guidance for future model and training design. Taken together, these findings lay the groundwork for hybrid architectures and model-specific optimizations, paving the way for more robust and efficient long-range reasoning. Our main contributions are:

\begin{enumerate}
    \item \textbf{Unified layer-wise representation propagation.} We characterize token- and layer-wise dynamics, revealing opposing trends of diversity and homogenization in TBMs and SSMs. 
    
    \item \textbf{Architectural bias.} We show that oversmoothing in TBMs stems from architectural design, whereas in SSMs it arises primarily from training dynamics. 
    
    \item 
    \textbf{Intermediate-layer effectiveness.} We find that intermediate layers from both architectures outperform final layers across tasks, model scales, and context lengths. 
    
    \item \textbf{Theoretical analysis.} Under practical assumptions, we provide a theoretical explanation for why SSMs empirically exhibit more stable representation propagation than TBMs.
\end{enumerate}

\section{Related Work}
While TBMs \citep{vaswani2017attention} remain the dominant NLP architecture, their quadratic attention complexity limits scalability to long contexts.

SSMs offer a linear-complexity alternative, achieving efficiency gains in long-sequence tasks through compact state representations \citep{gu2022efficiently}. However, recent studies reveal distinct architectural biases: SSMs often emphasize recency and local information, whereas TBMs maintain a broader contextual focus \citep{jelassi2024repeat, chen2024stuffed, wang2025understanding}. These contrasting inductive biases motivate systematic analysis of how representations propagate within each family.

Recent work showed that intermediate layers often outperform final layers in both TBMs and SSMs, challenging the conventional reliance on final representations \citep{knowbutdonttell, skean2025layer}. Another study emphasized oversmoothing and recency bias in SSMs, where token representations gradually homogenize across layers \citep{wang2025understanding}. These findings suggest that models may fail to fully leverage all their layers, raising a question about how representations propagate across layers.

Layer-wise analyses commonly use Centered Kernel Alignment (CKA) \citep{cka} to measure representational similarity in TBMs \citep{conneau-etal-2020-emerging}, while cosine similarity and variance-based metrics track feature evolution across layers. Probing further reveals where task-relevant information resides \citep{vulic-etal-2020-probing, knowbutdonttell}. For example, probing on SSMs \citep{paulo2024doestransformerinterpretabilitytransfer} revealed that simple probes can recover correct knowledge even when fine-tuned outputs are incorrect, underscoring the richness of SSMs' internal representations.

\section{Empirical Analysis}
\label{sec:empircal-analysis}
We conduct a mechanistic analysis to 
study: \textbf{(i) token-wise analysis} (\Cref{sssec:token-analysis}), which measures token-wise directional flow and token-set cohesiveness across layers (cosine-based diagnostics); \textbf{(ii) layer-wise analysis} (\Cref{sssec:layer-analysis}), which cross-checks these local trends using global layer-geometry measures (CKA and layer-variation statistics); and \textbf{(iii) probing analysis} (\Cref{sssec:probing-analysis}), which evaluates where task-relevant information is most linearly accessible. The first two analyses provide insights into how models transform information, while probing analysis focuses on downstream performance.

\subsection{Setup}

\paragraph{Models.}
We evaluated models pre-trained on the Pile dataset \cite{pile} for a fair comparison, covering both TBM and SSM families. We include two TBMs, \texttt{GPT-Neo-2.7B} \cite{gpt-neo} and \texttt{Pythia-2.7B} \cite{pythia}; alongside three SSMs: \texttt{Mamba2-2.7B} \cite{mamba2}, \texttt{Mamba-2.8B}, and a smaller \texttt{Mamba2-130M}. This selection enables both cross-architecture comparison and scaling effects within SSMs.

\paragraph{Tasks.}
We follow \citet{liu-etal-2024-lost} to adopt two benchmarks emphasizing long-range reasoning where models must process and retrieve information from extended contexts: \textit{(i) Multi-Document Question Answering} (MDQA), where the input contains multiple documents and a question, requiring the model to identify the relevant document and answer; and \textit{(ii) Key--Value Pair Retrieval} (KVPR), where the input contains multiple KV pairs represented as 128-bit random universally unique identifiers. The number of documents or KV pairs yields total context length $n \in \{300, 1K, 2K, 4K\}$. While absolute performance varies across tasks, we report averaged results across tasks as consistent representational trends are observed.

\paragraph{Probing Classifiers.}
For each input (instruction, documents/KV pairs, question), we extract its final token representation, $h^{(l)}_T \in \mathbb{R}^d$, from each Layer~$l$ where $d$ is the hidden size, following prior work \cite{knowbutdonttell}. In SSMs, the final token summarizes the entire context due to sequential state propagation. For each layer, we train a linear probe $f^l: \mathbb{R}^d \to \mathbb{R}^C$ to minimize the cross-entropy loss:

\begin{equation}
\mathcal{L}^l := -\frac{1}{M} \sum_{i=1}^M \sum_{c=1}^C y_{i,c} \log(\hat{y}_{i,c}^l)
\end{equation}

\noindent where $C$ is the number of classes (e.g., number of KV pairs in KVPR), $M$ is the number of train samples, $y_{i,c}$ is the true label and $\hat{y}_{i,c}^l$ is the softmax prediction of data sample $i$.

\paragraph{Implementation Details.} 
We use pre-trained checkpoints of TBMs and SSMs from the Hugging Face \cite{wolf-etal-2020-transformers}. All probes are trained from frozen representations for 150 epochs using Adam optimizer \cite{adam} with a learning rate of 0.05 on the full $20K$-sample training set (no batching), with evaluation on a held-out validation split. Reported results are mean accuracies over five random seeds across both tasks, under consistent hyperparameters and evaluation protocols on NVIDIA L40S GPUs.

\subsection{Token-Wise Analysis}
\label{sssec:token-analysis}

Our token-wise analysis aims to understand: (i) how smoothly representations evolve across layers; (ii) whether tokens maintain their distinctiveness or become homogenized within layers; and (iii) whether these behaviors arise from architectural biases or training dynamics.

\paragraph{Setup.} 
To track token representations' dynamics in TBMs and SSMs, we employ three complementary cosine-similarity–based analyses. Cosine similarity is used here because it is widely adopted in prior representational studies, and can provide valuable insights into directional flow of representations in the model’s native parameterization \cite{ethayarajh-2019-contextual, lauscher-etal-2020-zero}, and it is rotational and scale invariance \cite{smith2023distributioncosinesimilarityapplication}.

First, we compute \textit{adjacent-layer cos. similarity} to measure the layer-wise directional alignment of each token across consecutive layers. For a token representation $h^{(l)}_t \in \mathbb{R}^d$ at layer~$l$ and position $t$:

\[
\mathrm{Sim}(h^{(l)}_t, h^{(l+1)}_t) = \frac{h^{(l)}_t \cdot h^{(l+1)}_t}{\|h^{(l)}_t\| \|h^{(l+1)}_t\|}
\]

\noindent which indicates how strongly the representation direction is \textit{preserved} (high values) versus \textit{changed} (low/negative values), from Layer~$l$ to $l{+}1$ .

Second, we compute \textit{inter-token cosine similarity} within each layer to assess token-set cohesiveness, a proxy for oversmoothing \cite{ali2024hidden}. Given $h^{(l)} \in \mathbb{R}^{n \times d}$, we compute:

\begin{equation}
    \mathrm{InterSim}^{(l)} := \frac{2}{n(n-1)} \sum_{i=1}^{n} \sum_{j=i+1}^{n} \frac{h^{(l)}_i \cdot h^{(l)}_j}{\|h^{(l)}_i\| \|h^{(l)}_j\|}
\end{equation}

\noindent which reflects the average pairwise similarity among tokens, excluding self-similarity. Higher values indicate increased alignment among token representations and reduced token distinctiveness, indicative of oversmoothing.

Finally, to \textbf{disentangle training artifacts from architectural priors}, we analyze both pretrained and randomly initialized models. For random initialization, we explore multiple schemes including Gaussian, Xavier \cite{xavier}, and He Kaiming \cite{he}s. This allows us to isolate oversmoothing tendencies inherent to the architecture itself.

All analyses are performed on MDQA and KVPR with context length $n = 2K$ tokens, matching the effective window size of \texttt{GPT-Neo-2.7B} and \texttt{Pythia-2.8B}.

\subsubsection{Token Evolution Across Layers.}

\begin{figure}[t!]
\centering
\includegraphics[width=\columnwidth]{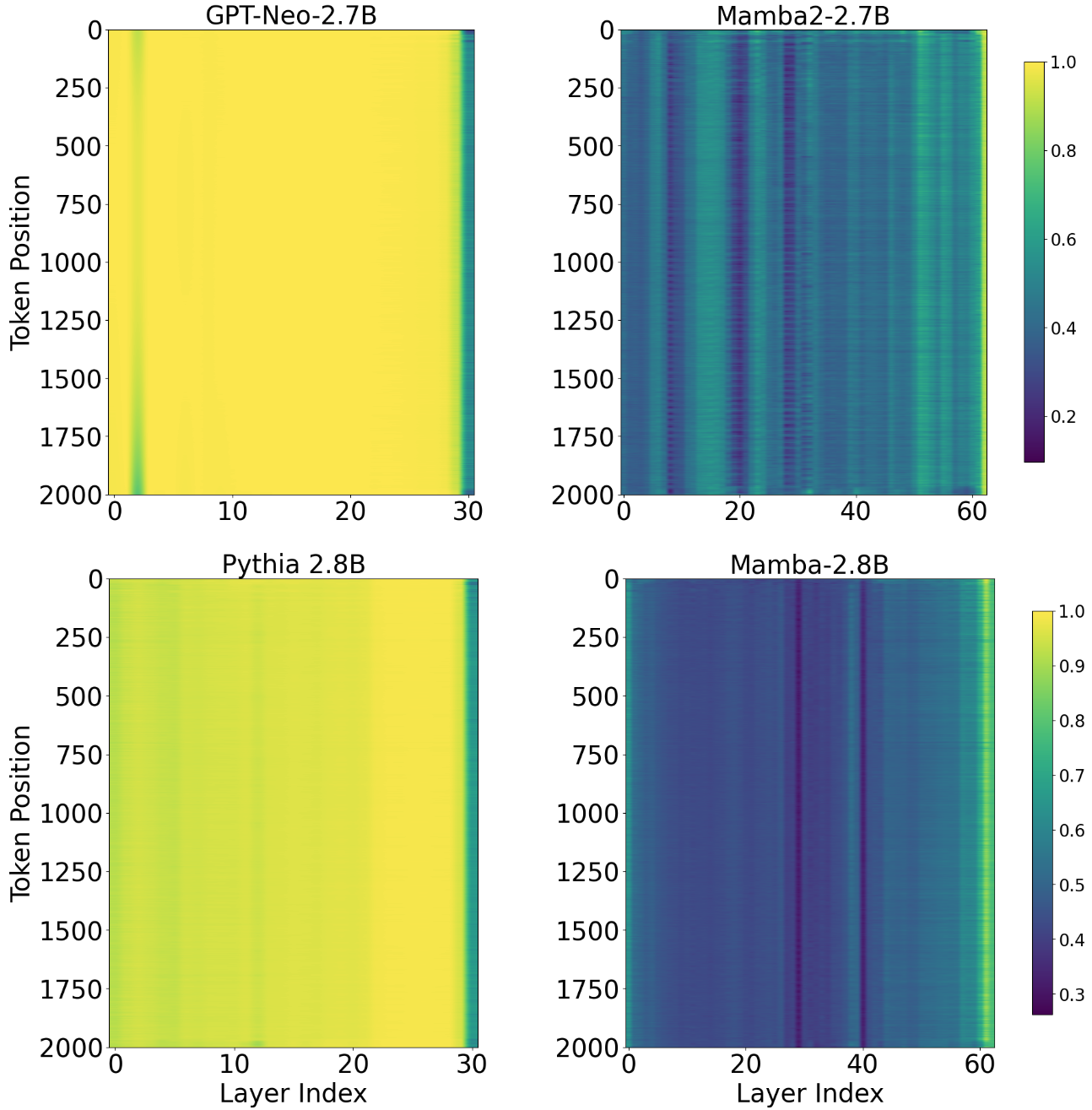}
\caption{
    \textbf{Distinct representation dynamics of TBMs (left) and SSMs (right)}. TBMs maintain consistently high layer-wise similarity until a sharp shift near the final layers, indicating stable token evolution followed by more abstract refinement. In contrast, SSMs reveal greater variability and exploratory changes in early layers, with gradual convergence later.
}
\label{fig:tokenwise-cosine}
\end{figure}

Figure~\ref{fig:tokenwise-cosine} reveals clear differences in how token representations evolve through layers within the two architectures, though the overall trend remains consistent across tokens within each model (i.e., all rows follow a similar pattern). For \texttt{GPT-Neo-2.7B}, similarity starts at nearly $100\%$, dips slightly to $90\%$ by Layer~3, quickly recovers to $100\%$, and holds steady until a sharp decline to $70\%$ at Layer~32. Similarly, \texttt{Pythia-2.8B} begins at ${\sim}90\%$, gradually increases to ${\sim}100\%$ by Layer~23, but also drops from Layer~29 to 70\% by the final layer, mirroring \texttt{GPT-Neo-2.7B}'s late shift. In contrast, \texttt{Mamba2-2.7B} fluctuates between $20\%$ and $40\%$ until Layer~51, reflecting diverse directional changes, before rising steadily to $80\%$ by the last Layer~64. Meanwhile, \texttt{Mamba-2.8B} displays a unique elbow pattern: starting at 30\%, it drops to $0\%$ or even negative values at Layers~30 and 41, then gradually rises to ${\sim}60\%$ by the final layer. This suggests that while TBMs prioritize stability and preservation, SSMs promote continual token evolution and refinement. Such dynamism may be critical for maintaining expressivity in deep models, particularly for long-context tasks.

\finding{\small
TBMs exhibit stable token evolution until a sharp final-layer shift, contrasting with SSMs’ varied directions that are converged in later layers.\label{F1}
}

\subsubsection{Token Uniqueness Within Layers}

\begin{figure}[t!]
\centering
\includegraphics[width=\columnwidth]{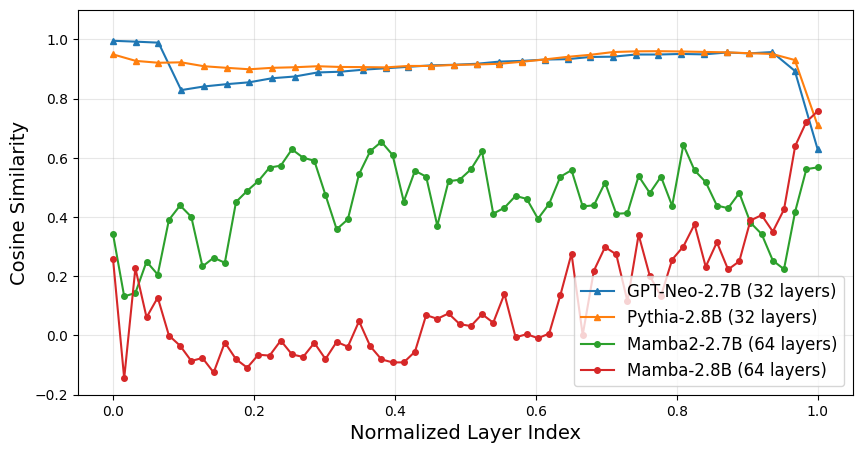}
\caption{
    \textbf{TBMs oversmooth early then recover late while SSMs preserve token diversity longer.} TBMs rapidly increase token similarity early and maintain high homogenization until a late drop. Conversely, SSMs sustain greater token diversity through most layers, with only a late-stage increase in similarity.
}
\label{fig:inter-cosine}
\end{figure}

Figure~\ref{fig:inter-cosine} highlights differences in token distinctiveness inside layers. 
TBMs, \texttt{GPT-Neo-2.7B} (blue line) and \texttt{Pythia-2.8B} (orange line), maintain high inter-token similarity across most layers (around $90\%$), indicative of oversmoothing where token representations become increasingly alike. Both models experience a significant reduction to roughly $60\%$ near the final layer (normalized layer index $= 1.0$), reaching substantially lower similarity, which indicates a late-stage resurgence of token diversity. In contrast, SSMs such as \texttt{Mamba2-2.7B} (green line) and \texttt{Mamba-2.8B} (red line) stay much lower through most of the network (around $50\%$ and $5\%$ respectively), preserving token uniqueness longer. Notably, \texttt{Mamba-2.8B} (red) shows an increase in similarity beginning around Layer~40, reaching $70\%$ by the final layer, while \texttt{Mamba2-2.7B} (green) sustains lower similarity throughout. These trends underscore SSMs' ability to preserve token individuality longer than TBMs, which homogenize early.

\finding{\small
TBMs collapse token distinctions early and recover diversity late, while SSMs maintain the uniqueness longer.\label{F2}
xc  x}

\subsubsection{Architectual Bias On The Oversmoothing Problem}

\begin{figure}[t!]
\centering
\includegraphics[width=\columnwidth]{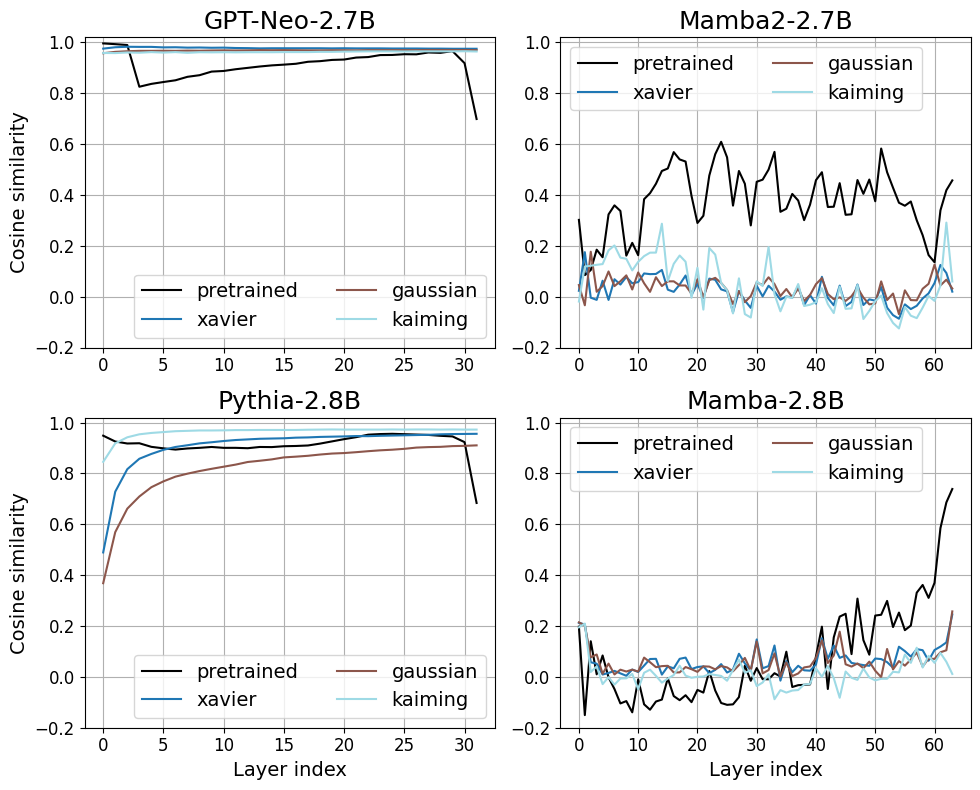}
\caption{
    \textbf{Architectural bias on the oversmoothing problem of TBMs (left) and SSMs (right).} TBMs show high similarity regardless of initializations, following the pretrained model. In constrast, SSMs exhibits near-zero similarity at random initializations.
}
\label{fig:random-init}
\end{figure}

Figure~\ref{fig:random-init} isolates architectural bias by comparing pretrained (black) models with randomly initialized (colored) counterparts. In the left panel, TBMs exhibit consistently high token similarity for both the pretrained curve and all random initializations (Xavier/Gaussian/Kaiming): similarity rises rapidly to
${\sim}80{-}100\%$ within the first few layers and remains saturated after, indicating that oversmoothing is fundamentally driven by the TBM architecture. In constrast, SSMs behaves differently: the random-initialization curves stay near zero throughout most layers, whereas the pretrained curve is substantially higher and increases again near the final layer, implying that oversmoothing in SSMs is primarily learned through training rather than being purely architectural.

\finding{\small
Oversmoothing in TBMs is an intrinsic architectural, while in SSMs it is primarily a consequence of training or optimization process.\label{F3}
}

\subsection{Layer-Wise Analysis} \label{sssec:layer-analysis}
This analysis aims to answer two key questions: {(i) how similar are the sets of token representations between any two layers (global structure), and  (ii) how much these representations change on average as we move through the layers of the models.} Understanding these clarifies whether long-range dependencies are preserved or degraded as representations propagate through the network.

\paragraph{Setups.} To compare layer-wise representations, we adopt the CKA metric \cite{cka} to measure similarity between layer representations. Specifically, we define two layer variance measures: \textbf{Layer-local} $V_\text{loc}$ and \textbf{Layer-global Variance} $V_\text{glob}$:


\begin{equation}
\small
    V_\text{loc} := \frac{1}{n \cdot d} \sum_{t=1}^{n \cdot d} \bigg( \frac{1}{L-2} \sum_{l=0}^{L-2} \left|  h^{(l+1)} - \frac{h^{(l)} + h^{(l+2)}}{2} \right| \bigg)
\label{eq:sm-formula}
\end{equation} 

\begin{equation}
\small
    V_\text{glob} := \frac{1}{n \cdot d} \sum_{t=1}^{n \cdot d} \sqrt{\frac{1}{L} \sum_{l=1}^{L} (h^{(l)} - \bar{h})^2}
\label{eq:st-formula}
\end{equation}

\noindent where $\bar{h}$ represents the mean layer representation across all $L$ layers. 

$V_\text{loc}$ captures short-range changes between adjacent layers (a measure of local ``curvature''), whereas $V_\text{glob}$ summarizes the overall variation across all layers relative to their mean representation.  Lower values indicate smaller representation flow in the model’s parameterization; these metrics are interpreted comparatively (TBM vs SSM) and together with CKA, rather than as intrinsic manifold distances. The analyses are conducted under the same task setup and context length as in the token-wise experiments to ensure comparability.

\begin{figure}[t!]
\centering
\includegraphics[width=\columnwidth]{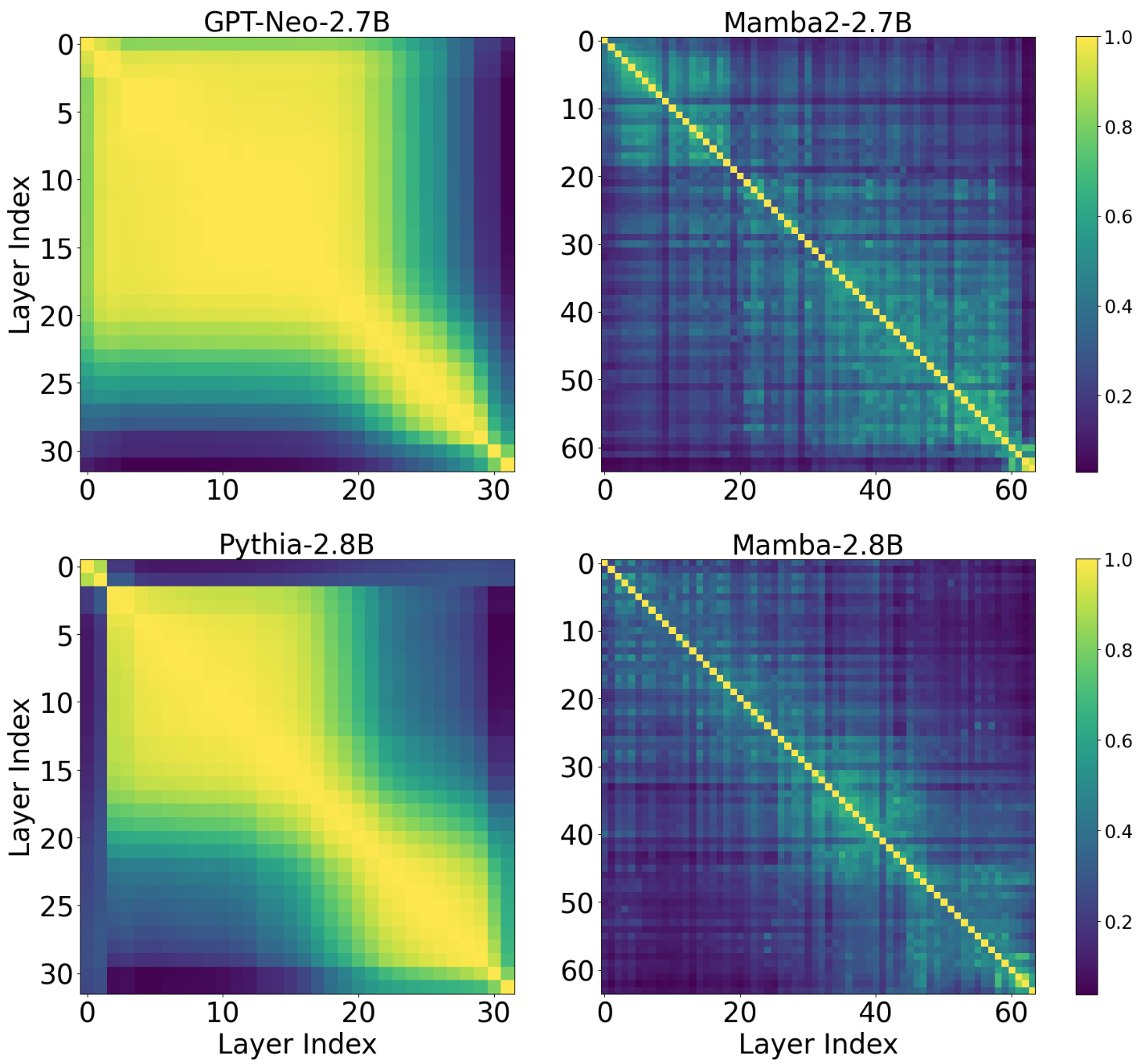}
\caption{
    Layer-wise CKA similarity for every layer pair, averaged over the MDQA and KVPR tasks with $n = 2K$ tokens. TBMs (left) exhibit stable alignment across initial layers (except for Layer 1--2 of \texttt{Pythia-2.8B}), followed by a representation shift towards the final layers, with lower similarity between the early and final layers. In contrast, SSMs (right) show significant fluctuation in the lower layers, followed by a more consistent alignment in deeper layers, indicating a gradual stabilization of feature representations. 
}
\label{fig:cka}
\end{figure}

\begin{table}[t!]
\centering
\small{\resizebox{0.99\columnwidth}{!}{
\begin{tabular}{l c c c c}
\toprule
    \multirow{2}{*}{\textbf{Model}} & \multicolumn{1}{c}{$n = 300$} & \multicolumn{1}{c}{$n = 1K$} & \multicolumn{1}{c}{$n = 2K$} & \multicolumn{1}{c}{$n = 4K$} \\
    \cmidrule(lr){2-2}\cmidrule(lr){3-3}\cmidrule(lr){4-4}\cmidrule(lr){5-5}
    & \textbf{Prob Acc. $\uparrow$} & \textbf{Prob Acc. $\uparrow$} & \textbf{Prob Acc. $\uparrow$} & \textbf{Prob Acc. $\uparrow$} \\
\midrule
    GPT-Neo-2.7B  & $88.7 \da{10.9}$ & $70.7 \da{18.7}$ & $53.1 \da{25.3}$ & - \\
    Pythia-2.8B   & $95.9 \textbf{\da{3.8}}$ & $72.3 \da{14.0}$ & $60.9 \da{15.4}$ & - \\
    Mamba2-130M   & $72.5 \da{17.1}$ & $42.3 \da{13.8}$ & $28.6 \textbf{\da{8.0}}$ & $20.6 \textbf{\da{7.6}}$ \\
    Mamba-2.8B    & $93.0 \da{6.4}$ & $62.5 \textbf{\da{11.7}}$ & $41.7 \da{14.0}$ & $36.7 \da{12.2}$ \\
    Mamba2-2.7B   & $86.0 \da{13.5}$ & $57.2 \da{17.6}$ & $38.2 \da{20.6}$ & $29.7 \da{13.3}$ \\
\bottomrule
\end{tabular}
}}
\caption{
    Probing accuracy (\%) using the \textbf{last layer's representation}. We run all evaluations 5 times and report the average. $\da{x}$ is the accuracy difference between the probe trained on the last layer and on the \textbf{peak layer}. The best results are \textbf{bolded}.
}
\label{tab:probing}
\end{table}

\begin{table}[t!]
\centering
\small{\resizebox{0.99\columnwidth}{!}{
\begin{tabular}{l cc cc cc cc}
\toprule
    \multirow{2}{*}{\textbf{Model}} 
    & \multicolumn{2}{c}{$n = 300$} 
    & \multicolumn{2}{c}{$n = 1K$} 
    & \multicolumn{2}{c}{$n = 2K$} 
    & \multicolumn{2}{c}{$n = 4K$} \\
    \cmidrule(lr){2-3} \cmidrule(lr){4-5} \cmidrule(lr){6-7} \cmidrule(lr){8-9}
    & \textbf{$V_\text{loc}\downarrow$} & \textbf{$V_\text{glob}\downarrow$}
    & \textbf{$V_\text{loc}\downarrow$} & \textbf{$V_\text{glob}\downarrow$}
    & \textbf{$V_\text{loc}\downarrow$} & \textbf{$V_\text{glob}\downarrow$}
    & \textbf{$V_\text{loc}\downarrow$} & \textbf{$V_\text{glob}\downarrow$} \\
\midrule
    GPT-Neo-2.7B  & $0.938$ & $3.635$ & $0.980$ & $3.833$ & $0.979$ & $3.821$ & - & - \\
    Pythia-2.8B   & $0.265$ & $1.059$ & $0.277$ & $1.122$ & $0.280$ & $1.147$ & - & - \\
    Mamba2-130M   & $3.572$ & $5.395$ & $3.704$ & $5.665$ & $3.752$ & $5.735$ & $3.846$ & $5.890$ \\
    Mamba-2.8B    & $\textbf{0.173}$ & $\textbf{0.315}$ & $\textbf{0.189}$ & $\textbf{0.354}$ & $\textbf{0.193}$ & $\textbf{0.363}$ & $\textbf{0.190}$ & $\textbf{0.354}$ \\
    Mamba2-2.7B   & $1.945$ & $3.238$ & $1.939$ & $3.288$ & $1.932$ & $3.275$ & $1.897$ & $3.193$ \\
\bottomrule
\end{tabular}
}}
\caption{
    Layer-local Variance $V_\text{loc}$ and Layer-global Variance $V_\text{glob}$, reported across context lengths. Lower is better (less variation), the best results are \textbf{bolded}.
}
\label{tab:variation}
\end{table}

\subsubsection{Layer-wise Representation Dynamics}

Figure~\ref{fig:cka} shows that \texttt{GPT-Neo-2.7B} maintains near-perfect CKA similarity (${\sim}100\%$) in early layers, with even the first layer retaining 90\% similarity with Layer~20. Consistent with observations in Section~\ref{sssec:token-analysis}, a feature shift occurs at Layer~29, where the similarity score drops to 80\% by the final layer. This suggests the manifold formed by token representations is stable and smoothly transforms before undergoing reconfiguration in late layers. In contrast, \texttt{Pythia-2.8B} starts at 90\% similarity but drops sharply to 30\% by the second layer. However, its propagation pattern in the remaining layers (3--32) closely resembles that of \texttt{GPT-Neo-2.7B}. In constrast, both \texttt{Mamba2-2.7B} and \texttt{Mamba-2.8B} exhibit fluctuating similarity scores between 20\% and 70\% until Layer~51, after which they steadily increase, reaching 90\% by the final Layer~64.
This difference signals distinct architectural strategies in maintaining and reshaping feature spaces at a global level, revealing their contrasting approaches to integrating and refining context beyond individual token trajectories.

\finding{\small
TBMs maintain globally similar layer representations early while SSMs gradually refine global structure towards late-layer stabilization.\label{F4}
}

\subsubsection{Comparative Analysis of $V_\text{loc}$ and $V_\text{glob}$} 

\Cref{tab:variation} reveals that among SSMs, only \texttt{Mamba-2.8B} consistently achieves lower $V_\text{loc}$ (${\sim}0.19$) and $V_\text{glob}$ (${\sim}0.34$) values compared to the TBMs (\texttt{Pythia-2.8B}: ${\sim}0.27$ $V_\text{loc}$ and ${\sim}1.11$ $V_\text{glob}$; \texttt{GPT-Neo-2.7B}: ${\sim}0.97$ $V_\text{loc}$ and ${\sim}3.76$ $V_\text{glob}$), indicating a more gradual and stable evolution of its representations. This behavior may stem from \texttt{Mamba-2.8B}'s smaller state size, which may constrain its capacity to model diverse token features, resulting in smoother transitions across layers. In contrast, \texttt{Mamba2-2.7B}, with an $8\times$ larger state size, shows higher $V_\text{loc}$ and $V_\text{glob}$ values than \texttt{Pythia-2.8B}, suggesting that increased state dimensions amplify representation variability and reduce stability, potentially leading to more abrupt changes. Among TBMs, \texttt{Pythia-2.8B} exhibits $V_\text{loc}$ and $V_\text{glob}$ values roughly three times lower than \texttt{GPT-Neo-2.7B}, indicating that \texttt{Pythia-2.8B}'s architectural refinements foster smoother and more stable feature propagation. These findings underscore how design choices, even within the same model family, significantly influence representation dynamics.

\finding{\small
The variance of representation evolution depend on model-specific factors like scale and parameterization rather than architecture type.\label{F5}
}

\subsection{Probing Analysis} \label{sssec:probing-analysis}
This analysis measures how well task-relevant information can be utilized at each layer, which complements the analyses in \Cref{sssec:token-analysis} and \Cref{sssec:layer-analysis}. 

\subsubsection{Intermediate Layers Outperform The Final Layer} Table~\ref{tab:probing} and Figure~\ref{fig:layerwise-acc} show that \texttt{GPT-Neo-2.7B} and \texttt{Pythia-2.8B} achieve peak accuracy around Layer~10 (out of 32) before dropping by up to 26\% in the final layer. \texttt{Mamba2-2.7B} peaks between Layers~4 and 14 (out of 64) and declines by up to 13.9\%. In contrast, \texttt{Mamba-2.8B} reaches its peak later, around Layer~28 or beyond (out of 64), with a more gradual drop of at most 10.5\% by the final layer.

\finding{\small
Task-relevant representations peak in intermediate layers for both architectures, with SSMs having a smaller gap to the last layer than TBMs.\label{F6}
}

\subsubsection{Effect of Context Length} Table~\ref{tab:variation} shows that while $V_\text{loc}$ and $V_\text{glob}$ metrics remain stable across context lengths ranging from 300 to 4K tokens, probing accuracy steadily declines with increasing input length. This suggests that although internal representations evolve predictably, model capacity constraints limit the ability to retain task-relevant information in longer sequences. This trend holds consistently across models and tasks, illustrating a key trade-off between representational stability and capacity to capture extended context.

\finding{\small
Despite stable representation evolution across varied context lengths, probing accuracy decreases as context length increases, indicating capacity limitations rather than representation instability.\label{F7}
}

\subsubsection{Effect of Model Size} Comparing the smaller (\texttt{Mamba2-130M}) and larger (\texttt{Mamba2-2.7B}) SSM variants reveals that the larger model significantly improves final-layer probing accuracy, particularly for longer contexts (by 2.3\% in the $4K$ MDQA task and 15.9\% in the $4K$ KVPR task). While its intermediate layers capture richer representations, they also exhibit a larger accuracy drop toward the final layer (increasing from 6.7\% to 13.3\% in MDQA and from 24.0\% in KVPR), indicating a more aggressive transformation that may discard fine-grained details. Nonetheless, the overall improvement in task performance suggests that deeper processing benefits outweigh this loss.

\finding{\small
While larger model size generally improves task performance, final-layer representations tend to become more abstract, which might reduce accessibility to certain fine-grained intermediate features.\label{F8}
}

\section{Theoretical Analysis}
\label{sec:theoretical-analysis}
We provide a theoretical analysis to explain our empirical patterns observed in \Cref{sec:empircal-analysis}. We start on the expected value of the layer-global variance metric $\mathbb{E}[V_\text{glob}^2]$ (Eq.~\ref{eq:st-formula}) over a single layer ($L=1$) to compare the variance of TBM and SSM, then generalize the result to $L$-layer model in Appendix~\ref{appdx:theory-generalization}. For TBMs, we analyze a simplified single-head self-attention + FFN block (Eq.~\ref{eq:transformer-eq}); For SSMs, we use \texttt{Mamba}'s formulas as a representative case study. 

\subsection{Backgrounds and Setups} \label{subsec:theory-setups}

\paragraph{Backgrounds.} Given an input sequence $x = [x_1, x_2, \dots, x_n]$ of length $n$, where $x_i$ are tokens coming from a finite vocabulary $\mathcal{V}$, the Transformer \citep{vaswani2017attention} processes it as follows. Initially, each token $x_i$ is mapped to a $d$-dimensional vector $v_i = \text{Embed}(x_i) \in \mathbb{R}^d$ using an embedding layer. To encode positional information, a positional embedding $p_i \in \mathbb{R}^d$ is added to the token representation. The resulting embedded sequence is expressed as a matrix $h^{(0)} = [h^{(0)}_1, \dots, h^{(0)}_n]^T \in \mathbb{R}^{n\times d}$ where $h^{(0)}_i = v_i + p_i$. Subsequently, the sequence passes through $L_t$ transformer blocks, each of which applies the following transformation:

\begin{equation}\label{eq:transformer-eq}
\begin{aligned}
    h^{(l)} := h^{(l-1)} + \text{Attn} (h^{(l-1)})
    + \text{FFN} \big(h^{(l-1)} + \\ \text{Attn} (h^{(l-1)})\big)
\end{aligned}
\end{equation}

\noindent where Attn, FFN stand for single-head self-attention, and feed-forward layers. Note that we use the single-head self-attention instead of multi-head attention, and we skip the layer normalization layers for simplicity following \citet{feng2023towards}. We also skip the layer normalization in the SSM formulations in Equation~\ref{eq:ssm-eq} below. 

With the same input matrix $h^{(0)}$, Mamba also passes it through $L_m$ layers where $L_m$ typically larger than $L_t$:

\begin{align}\label{eq:ssm-eq}
h^{(l)} &:= h^{(l-1)} + g^{(l)} \Big( \text{S6} \big( f^{(l)}, h^{(l-1)} \big) \circ z^{(l)} \Big)
\end{align}

\noindent here S6 is the selective SSM transformation, \( g^{(l)} \) is a linear transformation, and $z^{(l)} = \text{SiLU}(\text{Linear}(h^{(l-1)}))$. 

\paragraph{Setups.} We consider the case where $L=1$ following \citet{feng2023towards,kajitsuka2024are}. The variance-based metric $\text{St}^2$ (Eq.~\ref{eq:st-formula}) is then defined as:
\begin{equation}
V_\text{glob}^2 \propto \|h^{(1)} - h^{(0)}\|^2,
\end{equation}
where $h^{(l)}$ is the representation at Layer~$l$. To enable our analysis, we make the following assumptions:

\begin{assumption}[Random initial representations]\label{asm:A1}
    The initial representation $h^{(0)} \in \mathbb{R}^{n \times d}$ is a Gaussian matrix with $\mathbb{E}[h^{(0)}] = 0$ and covariance $\text{Cov}(h^{(0)}) = \sigma^2 I$, where each $h^{(0)}_i \sim N(0, \sigma^2 I)$.
\end{assumption}

\begin{assumption}[Independence]\label{asm:A2}
    The token representations $h^{(0)}_j$ and $h^{(0)}_t$ are independent for $j \neq t$ (i.e., $\mathbb{E}[h^{(0)}_j (h^{(0)}_t)^T] = 0$).
\end{assumption}

\begin{assumption}[Deterministic parameters]\label{asm:A3}
    Model parameters (e.g., weight matrices) are deterministic, with randomness stemming solely from $h^{(0)}$.
\end{assumption}

\begin{assumption}[Non-linearity approximations]\label{asm:A4}
    Non-linear functions (e.g., GELU in TBMs, SiLU in Mamba) are approximated via Taylor expansion as $\text{GELU}(x) \approx \frac{1}{2}x$ and $\text{SiLU}(x) \approx \frac{1}{2}x$ to simplify the computations.
\end{assumption}

Note that \Cref{asm:A4} is reasonable because the initial representations $x$ of both models are close to the zero vector, 

\subsection{Theoretical Results}

Let \( F: \mathbb{R}^{n \times d} \to \mathbb{R}^{n \times d} \) denote a function that transforms an input matrix \( h^{(0)} \in \mathbb{R}^{n \times d} \) into an output matrix \( h^{(1)} = F(h^{(0)}) \), where \( n \) represents the number of rows (e.g., sequence length) and \( d \) represents the number of columns (e.g., feature dimension). For a TBM, let \( V_\text{glob, Trans}^2 \) denote the variance metric, with \( \Sigma_{F,\text{Trans}} \) as the covariance matrix and \( \mu_{F,\text{Trans}} \) as the mean vector of the output of the Transformer function \( F \). For a Mamba model, let \( V_\text{glob, Mamba}^2 \) denote the squared variance metric, with \( \Sigma_{F,\text{Mamba}} \) as the covariance matrix and \( \mu_{F,\text{Mamba}} \) as the mean vector of the output of the Mamba function \( F \). We aim to prove that  $\mathbb{E}[V_\text{glob, Trans}^2] > \mathbb{E}[V_\text{glob, Mamba}^2]$.

\vspace{3mm}
\begin{proposition}\label{prop:prop-common-formula}
Consider $F$ (e.g., Mamba or Transformer) with input matrix $h^{(0)}$. Then, the expected variance satisfies
\begin{equation}
    \mathbb{E}\big[V_\text{glob}^2\big] \propto \mathbb{E}\Big[ \| F(h^{(0)}) \|_F^2 \Big] 
    = \operatorname{Tr}(\Sigma_F) + \| \mu_F \|_F^2,
\end{equation}
where $\mu_F = \mathbb{E}[F(h^{(0)})]$ is the mean representation, $\Sigma_F = \operatorname{Cov}[F(h^{(0)})]$ is the covariance, and $\|\cdot\|_F$ denotes the Frobenius norm.
\end{proposition}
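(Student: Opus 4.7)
The plan is to treat this proposition as a two-step reformulation: first, reduce $\mathbb{E}[\text{St}^2]$ at $L=1$ to the expected squared Frobenius norm of the single residual update, then apply the standard bias--variance decomposition to that expectation. Concretely, I would first unpack Eq.~\ref{eq:st-formula} at $L=1$. The single-layer case makes the inner average over layers collapse to a scalar multiple of $(h^{(1)} - h^{(0)})^2$ per entry, and summing over tokens and coordinates yields $\text{St}^2 \propto \|h^{(1)} - h^{(0)}\|_F^2$, with the constant absorbed into ``$\propto$''. Defining $F(h^{(0)}) := h^{(1)} - h^{(0)}$, which equals $\text{Attn}(h^{(0)}) + \text{FFN}\bigl(h^{(0)} + \text{Attn}(h^{(0)})\bigr)$ for a Transformer block by Eq.~\ref{eq:transformer-eq} and $g^{(1)}\bigl(\text{S6}(f^{(1)}, h^{(0)}) \circ z^{(1)}\bigr)$ for a Mamba block by Eq.~\ref{eq:ssm-eq}, gives a common starting point usable for both architectures.

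The second step is a direct application of the bias--variance identity. Writing $X := F(h^{(0)})$ as a random matrix with mean $\mu_F = \mathbb{E}[X]$ and covariance $\Sigma_F = \operatorname{Cov}(\operatorname{vec}(X))$, I would expand $\|X\|_F^2 = \operatorname{Tr}(X^\top X)$ and decompose $X = (X - \mu_F) + \mu_F$. Taking expectations, the cross term $\mathbb{E}[\operatorname{Tr}((X-\mu_F)^\top \mu_F)]$ vanishes because $\mathbb{E}[X - \mu_F] = 0$, leaving $\mathbb{E}\|X\|_F^2 = \mathbb{E}\|X - \mu_F\|_F^2 + \|\mu_F\|_F^2$. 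The first summand equals $\operatorname{Tr}\bigl(\mathbb{E}[(X-\mu_F)^\top(X-\mu_F)]\bigr) = \operatorname{Tr}(\Sigma_F)$ via the cyclic property of trace and the definition of covariance. Chaining these gives the claimed identity, and the argument only uses linearity of expectation and trace, so it applies to Transformer and Mamba alike without invoking Assumptions~\ref{asm:A1}--\ref{asm:A4}.

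I do not expect genuine obstacles in the proposition itself: the result is a one-line consequence of the bias--variance identity once one recognizes $\text{St}^2$ as a Frobenius-norm surrogate at $L=1$. The only places requiring care are (i) the $\bar h$ convention in Eq.~\ref{eq:st-formula}, which must be read as an average over layers $0$ through $L$ for the $L=1$ case to be non-degenerate, and (ii) the non-commutativity of expectation with the outer square root in the definition of St, which is precisely why the statement is phrased in terms of $\text{St}^2$ rather than $\text{St}$ directly. The substantive technical work --- and the real obstacle for the theorems that follow --- is not this decomposition but the downstream computation of $\operatorname{Tr}(\Sigma_F)$ and $\|\mu_F\|_F^2$ in closed form for each architecture under Assumptions~\ref{asm:A1}--\ref{asm:A4}, and then establishing the strict inequality $\mathbb{E}[\text{St}^2_{\text{Trans}}] > \mathbb{E}[\text{St}^2_{\text{Mamba}}]$ from those closed forms. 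The present proposition's role is merely to provide the common scaffolding on which the two architecture-specific computations will be built.
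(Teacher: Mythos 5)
Your proposal is correct and follows essentially the same route as the paper: reduce $\text{St}^2$ at $L=1$ to $\|F(h^{(0)})\|_F^2$ via $h^{(1)} = h^{(0)} + F(h^{(0)})$, then apply the bias--variance identity (the paper does this entrywise, writing $\mathbb{E}[|F(h^{(0)})_{ij}|^2] = \operatorname{Var}(F(h^{(0)})_{ij}) + |(\mu_F)_{ij}|^2$ and summing, which is the same decomposition you perform in matrix--trace form). Your observation that the result needs none of Assumptions~\ref{asm:A1}--\ref{asm:A4} beyond well-defined first and second moments is also consistent with the paper's treatment.
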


\vspace{1mm}
\begin{proposition}\label{prop:prop-att-odd}
The attention function is odd, i.e.,
\begin{equation}
    \text{Attn}(-x) = -\text{Attn}(x).
\end{equation}
\end{proposition}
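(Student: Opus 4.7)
My plan is to unpack the definition of the attention map into its two ingredients, the softmax mixing weights and the value projection, and observe that these have opposite parities in the input. Concretely, for input $x \in \mathbb{R}^{n \times d}$ I write
\begin{equation}
    \text{Attn}(x) = \text{softmax}\!\left(\tfrac{1}{\sqrt{d}}\, x W_Q W_K^\top x^\top\right) x W_V,
\end{equation}
so that the pre-softmax score matrix is a quadratic form in $x$ and the value matrix is linear in $x$.

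The key step is then to substitute $-x$ in place of $x$ and track signs. In the score matrix, the two occurrences of $x$ produce an overall factor of $(-1)^2 = 1$, so $(-x) W_Q W_K^\top (-x)^\top = x W_Q W_K^\top x^\top$, and the softmax is applied to exactly the same argument as before; hence the mixing weights are invariant. In the value term, $(-x) W_V = -\, x W_V$ by linearity. Multiplying an unchanged weight matrix by a negated value matrix yields $-\,\text{Attn}(x)$, which is the claim.

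The main obstacle, such as it is, is bookkeeping around what is suppressed in the simplified transformer block of Equation~\ref{eq:transformer-eq}: the argument implicitly uses that the $Q, K, V$ projections carry no additive bias and that layer normalization is omitted (both consistent with the simplifications already adopted from \citet{feng2023towards}), since an additive bias in $W_V$ or a shift inside the softmax would break the clean odd symmetry. Under those conventions the proof is a one-line substitution, and the resulting identity is what Proposition~\ref{prop:prop-common-formula} will eventually combine with the zero-mean assumption on $h^{(0)}$ (Assumption~\ref{asm:A1}) to force $\mu_{F,\text{Trans}}$ contributions from the attention branch to vanish in the subsequent stability calculation.
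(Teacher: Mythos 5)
Your proof is correct and follows essentially the same route as the paper's: negate the input, observe the pre-softmax scores pick up $(-1)^2=1$ so the softmax weights are unchanged, while the value projection is linear and flips sign, giving $\text{Attn}(-x)=-\text{Attn}(x)$. Your explicit remark that bias-free $Q,K,V$ projections and omitted layer normalization are needed for this clean parity is a useful clarification but does not change the argument.
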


\vspace{1mm}
\begin{proposition}\label{propssm:prop10}
For zero-mean inputs $h^{(0)}$, the expected attention output vanishes:
\begin{equation}
\mathbb{E}[\text{Attn}(h^{(0)})] = 0.
\end{equation}
\end{proposition}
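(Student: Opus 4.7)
The plan is to combine Proposition~\ref{prop:prop-att-odd} (oddness of $\text{Attn}$) with the distributional symmetry of a zero-mean Gaussian. By Assumption~\ref{asm:A1}, $h^{(0)}$ is Gaussian with mean $0$ and covariance $\sigma^2 I$, so $-h^{(0)}$ has the same distribution as $h^{(0)}$. The proof then proceeds by noting that the expected value must equal its own negation, and is therefore zero.

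More explicitly, I would first invoke Proposition~\ref{prop:prop-att-odd} to write, pointwise,
\[
\text{Attn}(-h^{(0)}) = -\text{Attn}(h^{(0)}).
\]
Taking expectations on both sides and using the distributional equality $h^{(0)} \stackrel{d}{=} -h^{(0)}$,
\[
\mathbb{E}[\text{Attn}(h^{(0)})] \;=\; \mathbb{E}[\text{Attn}(-h^{(0)})] \;=\; -\,\mathbb{E}[\text{Attn}(h^{(0)})],
\]
so $2\,\mathbb{E}[\text{Attn}(h^{(0)})] = 0$, which yields the claim.

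There is essentially no hard step; the main thing to verify is that the expectations exist and that the manipulations above are legitimate. Since the softmax produces a row-stochastic matrix, each row of $\text{Attn}(h^{(0)})$ is a convex combination of the value vectors, hence bounded entrywise by a linear function of $\|h^{(0)}\|_\infty$. Together with the Gaussian tails from Assumption~\ref{asm:A1} and the deterministic weights from Assumption~\ref{asm:A3}, this ensures integrability and justifies pushing the negation through the expectation. Thus Proposition~\ref{propssm:prop10} reduces to a one-line symmetry argument once Proposition~\ref{prop:prop-att-odd} is in hand.
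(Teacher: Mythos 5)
Your argument is exactly the paper's: invoke the oddness of $\text{Attn}$ (Proposition~\ref{prop:prop-att-odd}) together with the symmetric distribution of $h^{(0)}$ around zero, so that $\mathbb{E}[\text{Attn}(h^{(0)})]$ equals its own negation and must vanish. The added integrability check is a nice bit of extra rigor the paper omits, but the approach is the same.
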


\vspace{1mm}
\begin{theorem}\label{theorem:thrm-1}
Let $\tilde{h}^{(0)} = h^{(0)} + \text{Attn}(h^{(0)})$ and consider the feed-forward layer
\[
\text{FFN}(\tilde{h}^{(0)}) = W_2 \, \text{GELU}(W_1 \tilde{h}^{(0)} + b_1) + b_2,
\]
where $W_1, W_2, b_1, b_2$ are independent. Then, the expected output of the feed-forward block is approximately
\begin{equation}
\mu_{F,\text{Trans}} \approx \frac{1}{2} W_2 b_1 + b_2.
\end{equation}
\end{theorem}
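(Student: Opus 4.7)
The plan is to linearize the feed-forward block via Assumption~\ref{asm:A4}, push the expectation through the (now affine) map, and then show that the expected value of the intermediate representation $\tilde h^{(0)}$ vanishes. Concretely, I would first apply the Taylor approximation $\mathrm{GELU}(x)\approx \tfrac{1}{2}x$ to rewrite
\[
\text{FFN}(\tilde h^{(0)}) \;\approx\; \tfrac{1}{2}\,W_2(W_1 \tilde h^{(0)} + b_1) + b_2
\;=\; \tfrac{1}{2} W_2 W_1 \tilde h^{(0)} + \tfrac{1}{2} W_2 b_1 + b_2 .
\]
Because $W_1, W_2, b_1, b_2$ are deterministic by Assumption~\ref{asm:A3} (and independent of the only source of randomness, $h^{(0)}$), expectation commutes with multiplication by these parameters, giving
\[
\mathbb{E}[\text{FFN}(\tilde h^{(0)})] \;\approx\; \tfrac{1}{2} W_2 W_1\, \mathbb{E}[\tilde h^{(0)}] + \tfrac{1}{2} W_2 b_1 + b_2 .
\]

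The second step is to prove $\mathbb{E}[\tilde h^{(0)}]=0$. By definition $\tilde h^{(0)} = h^{(0)} + \text{Attn}(h^{(0)})$, so linearity of expectation splits this into two terms. The first, $\mathbb{E}[h^{(0)}]$, equals zero by Assumption~\ref{asm:A1}. The second, $\mathbb{E}[\text{Attn}(h^{(0)})]$, equals zero by Proposition~\ref{propssm:prop10}, which in turn follows from the oddness of $\text{Attn}$ in Proposition~\ref{prop:prop-att-odd} together with the symmetry of the Gaussian law of $h^{(0)}$. Substituting $\mathbb{E}[\tilde h^{(0)}]=0$ into the previous display collapses the linear term, leaving the desired expression $\mu_{F,\text{Trans}} \approx \tfrac{1}{2} W_2 b_1 + b_2$. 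If one interprets $\mu_{F,\text{Trans}}$ as the mean of the full block $h^{(0)} + \text{Attn}(h^{(0)}) + \text{FFN}(\tilde h^{(0)})$ (rather than just the FFN output), the two extra residual terms also vanish in expectation for the same reason, so the conclusion is unchanged.

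There is no real obstacle here: the argument is essentially a bookkeeping exercise once the linearization is in place. The only subtle point is justifying $\mathbb{E}[\text{Attn}(h^{(0)})]=0$, but this is already packaged as Proposition~\ref{propssm:prop10} and rests on the oddness of $\text{Attn}$ applied to a symmetric input distribution; since we may cite it, the main step reduces to a one-line application. Were the proposition not available, the mildly delicate part would be verifying that the softmax-based attention truly satisfies $\text{Attn}(-x)=-\text{Attn}(x)$, which requires noting that the row-stochastic weights produced by softmax on $QK^\top$ are invariant under the sign flip $h^{(0)}\mapsto -h^{(0)}$ while the value projection flips sign.
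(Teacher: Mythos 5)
Your proposal is correct and follows essentially the same route as the paper's proof: linearize the GELU via the Taylor approximation of Assumption~\ref{asm:A4}, use $\mathbb{E}[\tilde h^{(0)}]=0$ (from Assumption~\ref{asm:A1} together with Proposition~\ref{propssm:prop10}, i.e.\ the oddness of attention on a symmetric input), and pull the expectation through the affine map so that only $\tfrac{1}{2}W_2 b_1 + b_2$ survives. Your side remark that the residual/attention terms in the full block also vanish in expectation matches how the paper handles $F_{\text{Trans}} = \text{Attn}(h^{(0)}) + \text{FFN}(\tilde h^{(0)})$, so there is no substantive difference.
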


\vspace{1mm}
\begin{theorem}\label{theorem:thrm-2}
Let $F_{\text{Trans}} = \text{Attn}(h^{(0)}) + \text{FFN}(\tilde{h}^{(0)})$ and given that $\text{Attn} (h^{(0)})$ is odd and $h^{(0)}$ is symmetrically distributed around zero, the bias terms do not affect the covariance. We have:
\begin{equation}
\begin{split}
\text{Tr}(\Sigma_{F,\text{Trans}}) &= \sigma^2 \text{Tr}(T_1 T_1^\top) + n \sigma^2 \text{Tr}(T_2 T_2^\top) \\
&\quad + 2 \sigma^2 \text{Tr}(T_1 W_V T_2^\top)
\end{split}
\end{equation}
where $T_1 = I + \frac{1}{2}W_2 W_1$, $T_2 = \frac{1}{2}W_2 W_1$

\end{theorem}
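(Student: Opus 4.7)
The plan is to linearize $F_{\text{Trans}}$ as an affine map in the two sources of randomness, $h^{(0)}$ and $\text{Attn}(h^{(0)})$, and then evaluate the three resulting contributions to $\operatorname{Tr}(\Sigma_{F,\text{Trans}})$ using the Gaussian, independence, and symmetry assumptions.

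First I would apply Assumption~\ref{asm:A4} to replace $\text{GELU}$ by $x \mapsto x/2$, collapsing the feed-forward block to the affine map $\text{FFN}(\tilde{h}) \approx \tfrac{1}{2} W_2 W_1 \tilde{h} + \tfrac{1}{2} W_2 b_1 + b_2$. Substituting $\tilde{h}^{(0)} = h^{(0)} + \text{Attn}(h^{(0)})$ and collecting terms by which random quantity multiplies them yields the compact decomposition
$$F_{\text{Trans}}(h^{(0)}) = T_1 \, \text{Attn}(h^{(0)}) + T_2 \, h^{(0)} + c,$$
with $T_1 = I + \tfrac{1}{2}W_2 W_1$, $T_2 = \tfrac{1}{2}W_2 W_1$, and $c = \tfrac{1}{2}W_2 b_1 + b_2$ a deterministic constant vector that is exactly the content of Theorem~\ref{theorem:thrm-1}.

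Next I would invoke Proposition~\ref{propssm:prop10} to obtain $\mathbb{E}[\text{Attn}(h^{(0)})] = 0$, and Assumption~\ref{asm:A1} to obtain $\mathbb{E}[h^{(0)}] = 0$, so that $\mu_{F,\text{Trans}} = c$. Because covariance is translation invariant, the bias term cancels in the centered quantity, leaving
$$\operatorname{Tr}(\Sigma_{F,\text{Trans}}) = \mathbb{E}\,\bigl\| T_1 \text{Attn}(h^{(0)}) + T_2 h^{(0)} \bigr\|_F^{\,2},$$
which splits into a pure-attention term, a pure-input term, and a cross-term. This three-way split is exactly the three summands in the statement. For the pure-input piece, Assumptions~\ref{asm:A1}--\ref{asm:A2} give $\mathbb{E}[h^{(0)}_i (h^{(0)}_i)^\top] = \sigma^2 I$ with independent rows, so summing per-token contributions gives $n\sigma^2 \operatorname{Tr}(T_2 T_2^\top)$. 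For the pure-attention piece, I would write $\text{Attn}(h^{(0)})_i = \sum_j \alpha_{ij} W_V h^{(0)}_j$ and exploit the symmetry of the input distribution (together with Proposition~\ref{prop:prop-att-odd}) to linearize the softmax around uniform weights $\alpha_{ij} \approx 1/n$; the $1/n^2$ prefactor in $\mathbb{E}[\text{Attn}_i \text{Attn}_i^\top]$ cancels an $n$ from the inner sum and an $n$ from the outer row sum, reducing the contribution to $\sigma^2 \operatorname{Tr}(T_1 T_1^\top)$. For the cross-term, independence across rows collapses the double sum to its diagonal $j = i$ contribution, and cyclic invariance of the trace delivers $2\sigma^2 \operatorname{Tr}(T_1 W_V T_2^\top)$. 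Summing the three evaluated pieces yields the stated identity.

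The principal obstacle is rigorously controlling the softmax non-linearity inside attention, which is neither affine nor easily moment-computable. My plan is to neutralize the odd moments by combining Proposition~\ref{prop:prop-att-odd} with the symmetric distribution of $h^{(0)}$ (so odd-order softmax-score contributions vanish under the expectation), and to retain only the leading-order uniform-weight approximation of softmax for the even moments, which is where the trace identities above arise cleanly. The remaining manipulations—expanding Frobenius inner products, applying row-wise independence, and cyclically permuting inside the trace—are routine once this linearization is in place.
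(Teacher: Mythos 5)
Your route is essentially the paper's: the same affine collapse $F_{\text{Trans}} \approx T_1\,\text{Attn}(h^{(0)}) + T_2\,h^{(0)} + c$ via the $\text{GELU}(x)\approx x/2$ approximation, the same use of $\mathbb{E}[\text{Attn}(h^{(0)})]=0$ and $\mathbb{E}[h^{(0)}]=0$ so that the bias drops out of the covariance, the same three-way split (attention term, input term, cross term), the same uniform-softmax moment approximations $\mathbb{E}[a_{tj}]\approx 1/n$, $\mathbb{E}[a_{tj}^2]\approx 1/n^2$, and the same per-token-then-sum bookkeeping, so the $n\sigma^2\operatorname{Tr}(T_2T_2^\top)$ and $2\sigma^2\operatorname{Tr}(T_1W_VT_2^\top)$ pieces come out exactly as in the paper. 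The one step you elide is in the pure-attention piece: counting the $n$'s as you describe gives per token $\Sigma_{\text{Attn}_t}\approx \tfrac{\sigma^2}{n}W_VW_V^\top$, so after summing over rows you land on $\sigma^2\operatorname{Tr}(T_1 W_V W_V^\top T_1^\top)$, not $\sigma^2\operatorname{Tr}(T_1T_1^\top)$; the paper closes this gap by explicitly invoking the standard-initialization approximation $W_VW_V^\top\approx I_d$. You need to state that additional approximation (it is also what explains the asymmetry in the final formula, where $W_V$ survives only in the cross term), and, more minorly, your cross-term reduction tacitly uses the same mean-field decoupling $\mathbb{E}[a_{tt}\,h_th_t^\top]\approx\mathbb{E}[a_{tt}]\,\mathbb{E}[h_th_t^\top]$ that the paper makes explicit. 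With those two points spelled out, your argument matches the paper's proof.
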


\vspace{1mm}
\begin{theorem}\label{theorem:thrm-3}
Let $W_{h'_j} = W_{c_j} W_h$, where $W_{c_j}$ is a causal convolution of kernel size $j$ and $W_h$ is a linear projection. Then, the expected output of the Mamba block satisfies
\begin{equation}
    \mu_{F,\text{Mamba}} = \frac{\sigma^2}{4n} \sum_{t=1}^n \mathrm{diag}\big(C_t \bar{B}_t W_{h'_0} W_z^\top\big),
\end{equation}
where $C_t$, $\bar{B}_t$, $W_{h'}$, and $W_z$ are Mamba parameters computed via the SiLU approximation and SSM recursion.
\end{theorem}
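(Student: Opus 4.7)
The plan is to compute $\mu_{F,\text{Mamba}} = \mathbb{E}[F_{\text{Mamba}}(h^{(0)})]$ by unfolding a single Mamba layer from Equation~\ref{eq:ssm-eq}, linearising every nonlinearity via Assumption~\ref{asm:A4}, and then collapsing the expectation to a diagonal form using the Gaussian prior (Assumption~\ref{asm:A1}) and inter-token independence (Assumption~\ref{asm:A2}). The non-residual contribution is $F_{\text{Mamba}}(h^{(0)}) = g\!\left(\text{S6}(f, h^{(0)}) \circ z\right)$ with $z = \text{SiLU}(W_z h^{(0)})$. Since $g$ is linear it can be absorbed into the downstream projections, and Assumption~\ref{asm:A4} immediately gives $z \approx \tfrac{1}{2} W_z h^{(0)}$, which accounts for one of the two factors of $\tfrac{1}{2}$ in the target coefficient $\tfrac{1}{4}$.

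Next I would expand the selective SSM recursion at position $t$ as
\[ \text{S6}(f, h^{(0)})_t = C_t \sum_{j=1}^{t} \Big(\textstyle\prod_{k=j+1}^{t} \bar{A}_k\Big)\, \bar{B}_j\, h'_j, \]
where $h'_j$ is the post-convolutional signal obtained from the decomposition $W_{h'_j} = W_{c_j} W_h$ given in the statement. Because $(a \circ b) = \mathrm{diag}(a b^\top)$ when viewing the factors as column vectors, computing $\mathbb{E}[\text{S6}_t \circ z_t]$ reduces to evaluating $\mathbb{E}[h'_j (h^{(0)}_t)^\top] W_z^\top$. Assumption~\ref{asm:A2} kills all cross-token terms with $j \neq t$, and the remaining $j = t$ term picks out the zero-lag convolutional component $W_{h'_0}$ as the only part of $h'_t$ that correlates with $h^{(0)}_t$ itself; Assumption~\ref{asm:A1} then inserts $\mathbb{E}[h^{(0)}_t (h^{(0)}_t)^\top] = \sigma^2 I$. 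Absorbing a second $\tfrac{1}{2}$ from applying Assumption~\ref{asm:A4} to the SiLU activation inside the selective gate of S6 produces the full $\tfrac{1}{4}$ prefactor, and averaging across the $n$ token positions (matching the $\tfrac{1}{n}$ normalisation inherited from Equation~\ref{eq:st-formula}) assembles the target
\[ \mu_{F,\text{Mamba}} = \frac{\sigma^2}{4n} \sum_{t=1}^n \mathrm{diag}\!\left(C_t \bar{B}_t W_{h'_0} W_z^\top\right). \]

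The hard part will be handling the selective mechanism cleanly, since in a faithful Mamba block the matrices $\bar{B}_t$, $C_t$ and the discretisation step $\Delta_t$ are themselves data-dependent linear projections of $h^{(0)}$, which formally tensions with the deterministic-parameter premise of Assumption~\ref{asm:A3}. My strategy is to keep only the leading-order mean contribution: any extra factor of $h^{(0)}$ introduced by these projections must pair with another to yield a non-vanishing expectation, but the symmetry-of-zero-mean-Gaussian argument used in Proposition~\ref{propssm:prop10} eliminates these odd moments, allowing $C_t$ and $\bar{B}_t$ to be treated as the effective constants appearing in the statement. A secondary but purely mechanical step is verifying that the causal-convolution bookkeeping routes the only surviving cross-correlation through $W_{h'_0}$ rather than through a longer-range lag matrix, and that the state-transition product $\prod_k \bar{A}_k$ reduces to the identity in the $j=t$ term.
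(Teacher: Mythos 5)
Your proposal is correct and follows essentially the same route as the paper's proof: unroll S6 as a data-controlled linear operator, apply the SiLU linearization twice to obtain the $\tfrac{1}{4}$ prefactor, convert the Hadamard product to $\mathrm{diag}(\cdot)$, let Assumptions~\ref{asm:A1}--\ref{asm:A2} annihilate every cross-token/cross-lag term so that only the $i=t$, lag-$0$ component survives (hence $W_{h'_0}$, $\mathbb{E}[h^{(0)}_t (h^{(0)}_t)^\top]=\sigma^2 I$, and an empty $\bar A$-product), and average over the $n$ positions. The only quibbles are cosmetic: the second $\tfrac{1}{2}$ actually comes from the SiLU applied to the convolution branch $h'=\text{SiLU}(\text{Conv1D}(\text{Linear}(h^{(0)})))$ feeding S6 rather than from a SiLU ``inside the selective gate'' (the gate's SiLU already supplied the first $\tfrac{1}{2}$), and the data-dependence of $\bar B_t, C_t$ that you propose to neutralize via vanishing odd moments is handled in the paper simply by invoking Assumption~\ref{asm:A3} and treating them as deterministic, which matches your ``effective constants'' resolution.
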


\vspace{1mm}
\begin{theorem}\label{theorem:thrm-4}
For $1 \le t, j \le n$, let $M_{t,j} = C_t \left( \prod_{k=j+1}^t \bar{A}_k \right) \bar{B}_j W_{h'}$. Then, the trace of the covariance of the Mamba block is
\begin{equation}
\begin{split}
\operatorname{Tr}(\Sigma_{F,\text{Mamba}}) &= \frac{\sigma^4}{4} \Big\{ \operatorname{Tr}\big[(W_z W_z^\top) \circ S_t\big] \\
&\quad + \operatorname{Tr}\big[(M_{t,t} W_z^\top) \circ (W_z M_{t,t}^\top)\big] \Big\}
\end{split}
\end{equation}
where $S_t = \sum_{j=1}^{t-1} M_{t,j} M_{t,j}^\top$
and $\bar{A}_k$ is a Mamba parameter.
\end{theorem}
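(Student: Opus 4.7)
The plan is to compute $\operatorname{Tr}(\Sigma_{F,\text{Mamba}})$ by expanding the Mamba block row-by-row and reducing everything to Gaussian moments of $h^{(0)}$ under Assumptions~\ref{asm:A1}--\ref{asm:A4}. First I would unroll the S6 recursion exactly as in the proof of Theorem~\ref{theorem:thrm-3} and apply the SiLU approximation $z^{(1)} \approx \tfrac{1}{2}W_z h^{(0)}$ to obtain, for each output row,
\[
o_t \;=\; \Bigl(\sum_{j=1}^{t} M_{t,j}\, h^{(0)}_j\Bigr) \;\circ\; \tfrac{1}{2} W_z h^{(0)}_t.
\]
The Hadamard product with $z^{(l)}$ is the only source of fourth-order moments, so the rest of the argument boils down to handling it cleanly.

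Second, I would split the inner sum into a part independent of $h^{(0)}_t$ and a coupled part: writing $u_t := \sum_{j=1}^{t-1} M_{t,j} h^{(0)}_j$, we have $o_t = \tfrac{1}{2}\bigl(u_t \circ W_z h^{(0)}_t\bigr) + \tfrac{1}{2}\bigl(M_{t,t}h^{(0)}_t \circ W_z h^{(0)}_t\bigr)$. By Assumption~\ref{asm:A2}, the two summands are uncorrelated after centering, so their covariance contributions add. For the independent part I would use the identity $(a \circ b)(a\circ b)^{\top} = (aa^{\top}) \circ (bb^{\top})$ together with $\mathbb{E}[h^{(0)}_j (h^{(0)}_j)^{\top}] = \sigma^2 I$ to obtain $\sigma^4 (W_z W_z^{\top}) \circ S_t$; taking its trace yields the first bracketed term in the theorem.

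Third, I would handle the coupled part via Isserlis' (Wick's) theorem. With the abbreviations $A := M_{t,t}$, $B := W_z$, and $h := h^{(0)}_t \sim \mathcal{N}(0,\sigma^2 I)$, each product $(Ah)_i (Bh)_i (Ah)_k (Bh)_k$ decomposes into three Gaussian pairings. One pairing reproduces $\mathbb{E}[(Ah)_i(Bh)_i]\,\mathbb{E}[(Ah)_k(Bh)_k]$, which is exactly the $\mu_F \mu_F^{\top}$ contribution already captured by Theorem~\ref{theorem:thrm-3} and therefore cancels when forming $\Sigma_F$. The remaining two pairings collapse on the diagonal to $\sigma^4\bigl[(AA^{\top})_{ii}(BB^{\top})_{ii} + (AB^{\top})_{ii}(BA^{\top})_{ii}\bigr]$, and summing over $i$ together with the identity $\operatorname{Tr}(X \circ Y) = \sum_i X_{ii}Y_{ii}$ gives $\sigma^4 \operatorname{Tr}[(M_{t,t}W_z^{\top})\circ(W_z M_{t,t}^{\top})]$, matching the second bracketed term. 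Combining both contributions and including the factor $\tfrac{1}{4}$ from the squared SiLU approximation produces the stated identity.

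The main obstacle will be the fourth-moment bookkeeping in the coupled term: of the three Wick pairings, one must be matched exactly against the mean formula of Theorem~\ref{theorem:thrm-3} so that it cancels, while the residual $(M_{t,t}M_{t,t}^{\top}) \circ (W_z W_z^{\top})$ pairing has to be reconciled with the $S_t$ block (either by extending its summation range to include $j = t$ or by treating it as a subleading contribution). A secondary technical concern is the $h^{(0)}$-dependence of the selective SSM parameters $\bar A_k$, $\bar B_j$, $C_t$; Assumption~\ref{asm:A3} is precisely what allows us to freeze these as deterministic quantities, so that the Gaussian moment calculations above decouple cleanly from the recursive structure of S6.
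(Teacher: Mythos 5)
Your proposal follows essentially the same route as the paper's proof: unroll S6 into the data-controlled operator with rows $M_{t,j}$, linearize SiLU, separate the $j<t$ (independent) contribution---which gives $\sigma^4 (W_z W_z^\top)\circ S_t$ via the Hadamard identity---from the $j=t$ coupled term handled by Isserlis/Wick, and cancel the $vv^\top$ pairing against $\mu_F\mu_F^\top$. The obstacle you flag is real but it is equally present in the paper itself: the pairing $(M_{t,t}M_{t,t}^\top)\circ(W_z W_z^\top)$ appears in the paper's own Isserlis expansion and is then silently dropped when the terms are collected (its $S_t$ still runs only to $j=t-1$), so your remark that it must either be absorbed by extending $S_t$ to $j=t$ or argued to be negligible points to a loose end in the stated formula rather than a defect in your approach.
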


\vspace{1mm}
\begin{theorem} \label{theorem:thrm-5}
In addition to Assumptions~\ref{asm:A1}--\ref{asm:A4}, we impose an extra condition to bound the Mamba parameters. Specifically, we assume that Mamba is uniformly contractive: there exists $\rho \in (0,1)$ such that $\|\bar{A}_t\|_2 \le \rho$ for all $t$, and that the operator and Frobenius norms are bounded as $\|C_t\|_2 \le c$, $\|\bar{B}_t\|_2 \le b$, $\|W_{h'}\|_2 \le h$, and $\|W_z\|_F \le z$.

Under these conditions, it holds that
\begin{equation}
        \mathbb{E}[V_\text{glob, Trans}^2] > \mathbb{E}[V_\text{glob, Mamba}^2] \quad \forall n \ge 1.
\end{equation}
\end{theorem}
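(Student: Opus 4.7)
The plan is to apply Proposition~\ref{prop:prop-common-formula} to both architectures, reducing the claim to
\[
\operatorname{Tr}(\Sigma_{F,\text{Trans}}) + \|\mu_{F,\text{Trans}}\|_F^2 > \operatorname{Tr}(\Sigma_{F,\text{Mamba}}) + \|\mu_{F,\text{Mamba}}\|_F^2.
\]
Closed-form expressions for all four quantities are already supplied by Theorems~\ref{theorem:thrm-1}--\ref{theorem:thrm-4}, so the remaining work is to establish a strict lower bound for the left-hand side and a uniform upper bound for the right-hand side under the contractivity $\|\bar{A}_t\|_2 \le \rho < 1$ and the operator/Frobenius norm constraints on $C_t, \bar{B}_t, W_{h'}, W_z$, and to then show that the former dominates the latter uniformly in $n$.

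For the Transformer lower bound, I would expand Theorem~\ref{theorem:thrm-2} as $\sigma^2 \operatorname{Tr}(T_1 T_1^\top) + n\sigma^2 \operatorname{Tr}(T_2 T_2^\top) + 2\sigma^2 \operatorname{Tr}(T_1 W_V T_2^\top)$ and add the mean contribution $\|\mu_{F,\text{Trans}}\|_F^2 = n \bigl\|\tfrac{1}{2} W_2 b_1 + b_2\bigr\|_2^2$ from Theorem~\ref{theorem:thrm-1}, where the factor of $n$ arises because the per-token mean is identical across positions by the symmetry in Assumption~\ref{asm:A1}. The potentially negative cross term is absorbed via a weighted Cauchy--Schwarz inequality, $\bigl|2 \operatorname{Tr}(T_1 W_V T_2^\top)\bigr| \le \varepsilon \operatorname{Tr}(T_1 T_1^\top) + \varepsilon^{-1} \|W_V\|_2^2 \operatorname{Tr}(T_2 T_2^\top)$; choosing $\varepsilon < 1$ yields a strictly positive lower bound of the form $\alpha + n\beta$ for positive constants $\alpha,\beta$ depending only on $T_1, T_2, W_V$ and the bias term. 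In particular, $\alpha \ge (1-\varepsilon)\sigma^2 \operatorname{Tr}(T_1 T_1^\top) \ge (1-\varepsilon)\sigma^2 d$ since $T_1 = I + \tfrac{1}{2}W_2 W_1$ contains an identity block coming from the Transformer residual connection.

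For the Mamba upper bound, the contractivity $\|\bar{A}_k\|_2 \le \rho$ forces $\|M_{t,j}\|_2 \le c \rho^{t-j} b h$, so $\operatorname{Tr}(M_{t,j} M_{t,j}^\top) \le d (cbh)^2 \rho^{2(t-j)}$ decays geometrically in $t-j$. Summing gives $\operatorname{Tr}(S_t) \le d(cbh)^2 \rho^2/(1-\rho^2)$, an $n$-independent bound. Substituting into Theorem~\ref{theorem:thrm-4} together with $\|W_z\|_F \le z$ and the remaining norm constraints yields $\operatorname{Tr}(\Sigma_{F,\text{Mamba}}) \le C_M$ for a constant $C_M = C_M(\rho, c, b, h, z, \sigma, d)$ independent of $n$. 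Theorem~\ref{theorem:thrm-3} similarly shows that $\|\mu_{F,\text{Mamba}}\|_F^2$ is bounded by a constant; in fact the $\tfrac{1}{n}$ averaging in its summation causes this mean to \emph{decay} with $n$, so both Mamba terms are uniformly bounded above.

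The main obstacle is ensuring the inequality holds uniformly for all $n \ge 1$, not only asymptotically. For large $n$, the result is immediate: the linear growth $n\beta$ on the Transformer side eventually dominates the constant ceiling $C_M$ on the Mamba side. The delicate case is $n = 1$, where the linear term provides no leverage and one must rely on the residual-driven constant $\alpha \ge (1-\varepsilon)\sigma^2 d$ together with the positive bias contribution $\bigl\|\tfrac{1}{2}W_2 b_1 + b_2\bigr\|_2^2$ to dominate $C_M$. This is precisely where the operator/Frobenius norm constraints in Theorem~\ref{theorem:thrm-5} are needed: they effectively force $C_M$ to be small relative to the residual and bias contributions, so that strict inequality persists at $n = 1$. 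The cleanest way to finish is a split argument: verify the base case $n = 1$ by direct constant comparison using the imposed norm bounds, then observe that the Transformer right-hand side increases linearly in $n$ while the Mamba right-hand side is non-increasing, so the inequality propagates to every $n \ge 1$.
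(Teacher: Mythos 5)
Your overall route matches the paper's: reduce via Proposition~\ref{prop:prop-common-formula}, lower-bound the Transformer side using Theorems~\ref{theorem:thrm-1}--\ref{theorem:thrm-2} (a covariance term growing linearly in $n$ plus positive constants), upper-bound the Mamba side using Theorems~\ref{theorem:thrm-3}--\ref{theorem:thrm-4} together with contractivity (geometric decay of the $M_{t,j}$ giving an $n$-independent ceiling, with the mean term decaying like $1/n^2$), then settle $n=1$ and propagate by monotonicity; the paper packages that last step as positivity of a cubic $Q(n)=an^3+bn^2+d$, which is the same comparison. One genuine difference is the cross term $2\sigma^2\operatorname{Tr}(T_1 W_V T_2^\top)$: the paper removes it by taking expectations over the weights, using $\mathbb{E}[W_V]=0$ and treating $W_1,W_2,b_1,b_2$ as centered and independent (which quietly departs from Assumption~\ref{asm:A3}), whereas you keep the parameters deterministic and absorb it by weighted Cauchy--Schwarz. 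Your version is more faithful to the stated assumptions, but as written the absorbed bound $\varepsilon\operatorname{Tr}(T_1T_1^\top)+\varepsilon^{-1}\|W_V\|_2^2\operatorname{Tr}(T_2T_2^\top)$ can swallow the $n\sigma^2\operatorname{Tr}(T_2T_2^\top)$ term when $n$ is small unless you also bound $\|W_V\|_2$; choosing $\varepsilon<1$ alone does not deliver ``$\alpha+n\beta$ with $\beta>0$'' at $n=1$.

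The substantive gap is the base case $n=1$, which you assert rather than prove: the theorem's hypotheses bound only the Mamba quantities $\rho,c,b,h,z$, and nothing in them makes your $C_M$ small relative to $(1-\varepsilon)\sigma^2 d$ plus the bias contribution. With large $\sigma$, $z$, or $cbh$, the Mamba ceiling $\tfrac{\sigma^4}{4}z^2c^2b^2h^2\bigl(1+\tfrac{1}{1-\rho^2}\bigr)$ can exceed the Transformer constants and the claimed strict inequality at $n=1$ fails. The paper confronts exactly this point: its proof establishes $Q(1)>0$ only when $\sigma^2$ lies below an explicit threshold $x_{\max}$ expressed through its constants $\alpha_T,\beta_T,\gamma_T,\alpha_M,\beta_M$, and then argues informally that typical initializations have $\sigma^2\in(0,1)\ll x_{\max}$. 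So the result is really conditional on a smallness relation between $\sigma^2$ (and the Mamba norm constants) and the Transformer-side constants; your proof needs to state and invoke such a condition at $n=1$ instead of claiming the listed norm bounds ``effectively force'' it. Your monotonicity step (Transformer side increasing linearly in $n$, Mamba upper bound non-increasing) is sound and mirrors the paper's $Q(n)>Q(1)$ computation, and the extra factor of $n$ you attach to $\|\mu_{F,\mathrm{Trans}}\|_F^2$ is a harmless bookkeeping difference that only strengthens your lower bound.
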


\noindent Proofs for the above propositions and theorems are provided in \Cref{appdx:theory-proofs}. In summary, \Cref{theorem:thrm-5} provides a qualitative explanation for the representation flow and stabilization patterns observed in \Cref{sec:empircal-analysis}. 
It shows that, when layers are contractive, SSM blocks admit tighter variance bounds than TBM blocks. 
This explains the gradual stabilization of SSM representations (Findings~\ref{F1},\ref{F4}) and their stronger dependence on parameterization (Finding~\ref{F5}).

\section{Conclusions}
We present a unified token- and layer-wise comparison of representation flow in TBMs and SSMs, revealing opposing oversmoothing trajectories: TBMs homogenize early then recover, while SSMs preserve early token uniqueness but converge to homogenization deeper. This divergence explains why both architectures can perform well yet fail differently as context grows, shifting the focus from ``which is better'' to how each routes and re-encodes information across depth. Our analysis also yields practical diagnostics: intermediate layers often contain the most usable knowledge, and combining layerwise probes with $V_\text{loc}$/$V_\text{glob}$ and inter-token similarity metrics can flag failures before deployment. These findings motivate targeted interventions (e.g., intermediate supervision, contrastive regularization, contractive constraints) and suggest a concrete hybrid design principle: place SSM blocks early to preserve token diversity, then apply attention blocks late to reintroduce global mixing. A natural next step is to implement and systematically evaluate such hybrids under controlled long-context benchmarks to validate these design choices. Finally, we provide a reproducible toolkit of similarity measures, variance-based analysis, and layer-wise evaluation to support future work on multimodal tasks, hybrid architectures, and long-context modeling.

\section*{Limitations}
While our study sheds light on representation flow in SSMs and TBMs, we acknowledge several limitations. First, cosine similarity has several limitations: (i) it is sensitive to non-orthogonal reparameterizations that can alter angular alignment without changing functional behavior; (ii) adjacent-layer cosine in residual architectures is biased upward by skip connections, which introduce background similarity even when updates are large; and (iii) it captures only local directional changes and cannot fully characterize global manifold structure or information content.  In this work, it is worth noting that we interpret cosine strictly as a directional flow diagnostic in the model's native parameterization and require its trends to align with rotation/scale-invariant CKA and functional probing results \cite{10.1145/3589335.3651526, cka}. While orthogonal invariance holds mathematically, trained TBMs and SSMs lack orthogonal symmetries due to nonlinearities and normalization, so cosine remains empirically informative for layerwise comparisons \cite{10.1145/3728458}. Moreover, we apply the same metric consistently to both TBMs and SSMs and focus on differential patterns across architectures. Consequently, even if residual connections introduce an absolute similarity offset, our analysis depends on comparative dynamics rather than absolute similarity values.

In addition, our findings are limited to the specific tasks and models examined. While MDQA and KVPR capture key aspects of long-context reasoning, they might not exhaust the range of settings in which TBMs and SSMs are applied. Likewise, although we evaluate four representative Pile-trained models under controlled and comparable conditions, this scope might limit the generality of our conclusions. Extending our analysis to a broader set of tasks, architectures, training regimes, and data sources remains an important direction for future work to assess the robustness and universality of the observed representation dynamics.


\section*{Ethics Considerations}
Our research does not involve human subjects, personal data collection, or direct societal applications that could cause harm. However, we acknowledge that our findings about representation flow and layer-wise performance could potentially be misused for adversarial purposes, though they primarily enable beneficial applications such as more efficient model design. Our analysis uses models trained on the Pile dataset, which may contain societal biases that could be reflected in the representation patterns we observe, and future work should consider how architectural differences interact with bias mitigation strategies. While this research contributes to fundamental understanding that may inform more efficient language models with societal benefits, it also raises broader considerations about computational resource concentration and the responsible development of increasingly capable AI systems.

\section*{Significance of Our Findings}
Our analysis uncovers a fundamental divergence in how SSMs and TBMs propagate information across layers, revealing an architecture-level trade-off that reframes long-context modeling. By showing that TBMs and SSMs follow opposite oversmoothing trajectories—TBMs exhibit early homogenization followed by late recovery, while SSMs preserve early uniqueness but suffer late homogenization—we explain why architectures that both perform well on language tasks nonetheless fail in systematically different ways as context grows. This insight transforms debates about "which family is better" into the more productive question of how each family routes and re-encodes information across layers.

Our findings provide concrete diagnostics for common failure modes. The empirical and theoretical link between layerwise representation flow and probing accuracy explains why final-layer readouts can be misleading and why intermediate layers often contain the most usable knowledge. 
This enables a reliable diagnostic pipeline: layerwise probes combined with $V_\text{loc}$/$V_\text{glob}$ and inter-token similarity metrics can detect whether a model will lose token-wise detail or over-compress context before deployment.

These insights point to immediate, practical interventions. Because representational collapse occurs at predictable layers, we can target fixes precisely: intermediate supervision, contrastive regularization to maintain token distinctiveness, contractive constraints for SSM dynamics, and hybrid architectures that route early processing through SSMs and global reconfiguration through attention. Our analysis also informs scaling strategies, showing that larger state dimensions can destabilize representations, providing principled guidance for when and where to scale capacity.

Finally, our work establishes a new experimental toolkit. The $V_\text{loc}$/$V_\text{glob}$ similarity measures, comparative variance bounds, and layerwise analysis framework provide the community with reproducible metrics and theory-backed baselines for rigorous architecture comparison, opening avenues for provable regularizers, architecture-aware training, and benchmarks tailored to layerwise information retention.

\section*{Future Directions}
Our findings open several promising directions. First, while our analysis focused on text-based sequence modeling, it would be valuable to extend the framework to multimodal domains such as video or speech, where long-context fidelity is equally critical. Second, integrating our similarity measures into the training loop may enable adaptive regularization schemes that detect and counteract oversmoothing in real time. Third, hybrid architectures that combine the local retention of SSMs with the global reconfiguration capacity of TBMs remain largely unexplored; systematic exploration of such designs could yield models with both efficiency and fidelity. Finally, future research could investigate scaling laws under these new diagnostics, asking how model size, layers, and state dimensionality interact with representation collapse across architectures. Together, these directions point toward a principled roadmap for developing the next generation of long-context models.

\section*{Acknowledgements}
We thank members in NAIL (NTU) and WING (NUS) labs for their valuable feedback. Do Xuan Long is supported by the A$^*$STAR Computing and Information Science Scholarship.

\bibliography{anthology,custom}

@inproceedings{vaswani2017attention,
 author = {Vaswani, Ashish and Shazeer, Noam and Parmar, Niki and Uszkoreit, Jakob and Jones, Llion and Gomez, Aidan N and Kaiser, \L ukasz and Polosukhin, Illia},
 booktitle = {Advances in Neural Information Processing Systems},
 editor = {I. Guyon and U. Von Luxburg and S. Bengio and H. Wallach and R. Fergus and S. Vishwanathan and R. Garnett},
 pages = {},
 publisher = {Curran Associates, Inc.},
 title = {Attention is All you Need},
 url = {https://proceedings.neurips.cc/paper_files/paper/2017/file/3f5ee243547dee91fbd053c1c4a845aa-Paper.pdf},
 volume = {30},
 year = {2017}
}

@inproceedings{goldman-etal-2024-really,
    title = "Is It Really Long Context if All You Need Is Retrieval? Towards Genuinely Difficult Long Context {NLP}",
    author = "Goldman, Omer  and
      Jacovi, Alon  and
      Slobodkin, Aviv  and
      Maimon, Aviya  and
      Dagan, Ido  and
      Tsarfaty, Reut",
    editor = "Al-Onaizan, Yaser  and
      Bansal, Mohit  and
      Chen, Yun-Nung",
    booktitle = "Proceedings of the 2024 Conference on Empirical Methods in Natural Language Processing",
    month = nov,
    year = "2024",
    address = "Miami, Florida, USA",
    publisher = "Association for Computational Linguistics",
    url = "https://aclanthology.org/2024.emnlp-main.924/",
    doi = "10.18653/v1/2024.emnlp-main.924",
    pages = "16576--16586",
    abstract = "Improvements in language models' capabilities have pushed their applications towards longer contexts, making long-context evaluation and development an active research area. However, many disparate use-cases are grouped together under the umbrella term of ``long-context'', defined simply by the total length of the model{'}s input, including - for example - Needle-in-a-Haystack tasks, book summarization, and information aggregation. Given their varied difficulty, in this position paper we argue that conflating different tasks by their context length is unproductive. As a community, we require a more precise vocabulary to understand what makes long-context tasks similar or different. We propose to unpack the taxonomy of long-context based on the properties that make them more difficult with longer contexts. We propose two orthogonal axes of difficulty: (I) Diffusion: How hard is it to find the necessary information in the context? (II) Scope: How much necessary information is there to find? We survey the literature on long-context, provide justification for this taxonomy as an informative descriptor, and situate the literature with respect to it. We conclude that the most difficult and interesting settings, whose necessary information is very long and highly diffused within the input, is severely under-explored. By using a descriptive vocabulary and discussing the relevant properties of difficulty in long-context, we can implement more informed research in this area. We call for a careful design of tasks and benchmarks with distinctly long context, taking into account the characteristics that make it qualitatively different from shorter context."
}

@article{Beltagy2020Longformer,
  title={Longformer: The Long-Document Transformer},
  author={Iz Beltagy and Matthew E. Peters and Arman Cohan},
  journal={arXiv:2004.05150},
  year={2020},
}

@inproceedings{
kajitsuka2024are,
title={Are Transformers with One Layer Self-Attention Using Low-Rank Weight Matrices Universal Approximators?},
author={Tokio Kajitsuka and Issei Sato},
booktitle={The Twelfth International Conference on Learning Representations},
year={2024},
url={https://openreview.net/forum?id=nJnky5K944}
}

@article{klabunde2025similarity,
  title={Similarity of neural network models: A survey of functional and representational measures},
  author={Klabunde, Max and Schumacher, Tobias and Strohmaier, Markus and Lemmerich, Florian},
  journal={ACM Computing Surveys},
  volume={57},
  number={9},
  pages={1--52},
  year={2025},
  publisher={ACM New York, NY}
}

@inproceedings{men2025shortgpt,
  title={Shortgpt: Layers in large language models are more redundant than you expect},
  author={Men, Xin and Xu, Mingyu and Zhang, Qingyu and Yuan, Qianhao and Wang, Bingning and Lin, Hongyu and Lu, Yaojie and Han, Xianpei and Chen, Weipeng},
  booktitle={Findings of the Association for Computational Linguistics: ACL 2025},
  pages={20192--20204},
  year={2025}
}

@article{skean2024does,
  title={Does representation matter? exploring intermediate layers in large language models},
  author={Skean, Oscar and Arefin, Md Rifat and LeCun, Yann and Shwartz-Ziv, Ravid},
  journal={arXiv preprint arXiv:2412.09563},
  year={2024}
}

@article{10.1145/3728458,
    author = {Klabunde, Max and Schumacher, Tobias and Strohmaier, Markus and Lemmerich, Florian},
    title = {Similarity of Neural Network Models: A Survey of Functional and Representational Measures},
    year = {2025},
    issue_date = {September 2025},
    publisher = {Association for Computing Machinery},
    address = {New York, NY, USA},
    volume = {57},
    number = {9},
    issn = {0360-0300},
    url = {https://doi.org/10.1145/3728458},
    doi = {10.1145/3728458},
    abstract = {Measuring similarity of neural networks to understand and improve their behavior has become an issue of great importance and research interest. In this survey, we provide a comprehensive overview of two complementary perspectives of measuring neural network similarity: (i) representational similarity, which considers how activations of intermediate layers differ, and (ii) functional similarity, which considers how models differ in their outputs. In addition to providing detailed descriptions of existing measures, we summarize and discuss results on the properties of and relationships between these measures, and point to open research problems. We hope our work lays a foundation for more systematic research on the properties and applicability of similarity measures for neural network models.},
    journal = {ACM Comput. Surv.},
    month = may,
    articleno = {242},
    numpages = {52},
    keywords = {Deep learning, representational similarity, functional similarity}
}

@misc{smith2023distributioncosinesimilarityapplication,
      title={On the distribution of cosine similarity with application to biology}, 
      author={Ian Smith and Janosch Ortmann and Farnoosh Abbas-Aghababazadeh and Petr Smirnov and Benjamin Haibe-Kains},
      year={2023},
      eprint={2310.13994},
      archivePrefix={arXiv},
      primaryClass={stat.AP},
      url={https://arxiv.org/abs/2310.13994}, 
}

@article{fan2024not,
  title={Not all layers of llms are necessary during inference},
  author={Fan, Siqi and Jiang, Xin and Li, Xiang and Meng, Xuying and Han, Peng and Shang, Shuo and Sun, Aixin and Wang, Yequan and Wang, Zhongyuan},
  journal={arXiv preprint arXiv:2403.02181},
  year={2024}
}

@inproceedings{ali2025hidden,
  title={The hidden attention of mamba models},
  author={Ali, Ameen Ali and Zimerman, Itamar and Wolf, Lior},
  booktitle={Proceedings of the 63rd Annual Meeting of the Association for Computational Linguistics (Volume 1: Long Papers)},
  pages={1516--1534},
  year={2025}
}

@article{liu2025comprehensive,
  title={A Comprehensive Survey on Long Context Language Modeling},
  author={Liu, Jiaheng and Zhu, Dawei and Bai, Zhiqi and He, Yancheng and Liao, Huanxuan and Que, Haoran and Wang, Zekun and Zhang, Chenchen and Zhang, Ge and Zhang, Jiebin and others},
  journal={arXiv preprint arXiv:2503.17407},
  year={2025},
  url={https://arxiv.org/abs/2503.17407}
}

@article{ali2024hidden,
  title={The hidden attention of mamba models},
  author={Ali, Ameen and Zimerman, Itamar and Wolf, Lior},
  journal={arXiv preprint arXiv:2403.01590},
  year={2024}
}

@inproceedings{poli2023hyena,
  title={Hyena hierarchy: Towards larger convolutional language models},
  author={Poli, Michael and Massaroli, Stefano and Nguyen, Eric and Fu, Daniel Y and Dao, Tri and Baccus, Stephen and Bengio, Yoshua and Ermon, Stefano and R{\'e}, Christopher},
  booktitle={International Conference on Machine Learning},
  pages={28043--28078},
  year={2023},
  organization={PMLR}
}

@article{hendrycks2016gaussian,
  title={Gaussian error linear units (gelus)},
  author={Hendrycks, Dan and Gimpel, Kevin},
  journal={arXiv preprint arXiv:1606.08415},
  year={2016}
}

@inproceedings{
feng2023towards,
title={Towards Revealing the Mystery behind Chain of Thought: A Theoretical Perspective},
author={Guhao Feng and Bohang Zhang and Yuntian Gu and Haotian Ye and Di He and Liwei Wang},
booktitle={Thirty-seventh Conference on Neural Information Processing Systems},
year={2023},
url={https://openreview.net/forum?id=qHrADgAdYu}
}

@misc{paulo2024doestransformerinterpretabilitytransfer,
      title={Does Transformer Interpretability Transfer to RNNs?}, 
      author={Gonçalo Paulo and Thomas Marshall and Nora Belrose},
      year={2024},
      eprint={2404.05971},
      archivePrefix={arXiv},
      primaryClass={cs.LG},
      url={https://arxiv.org/abs/2404.05971}, 
}

@inproceedings{
    gu2022efficiently,
    title={Efficiently Modeling Long Sequences with Structured State Spaces},
    author={Albert Gu and Karan Goel and Christopher Re},
    booktitle={International Conference on Learning Representations},
    year={2022},
    url={https://openreview.net/forum?id=uYLFoz1vlAC}
}

@inproceedings{jelassi2024repeat,
    title = 	 {Repeat After Me: Transformers are Better than State Space Models at Copying},
    author =       {Jelassi, Samy and Brandfonbrener, David and Kakade, Sham M. and Malach, Eran},
    booktitle = 	 {Proceedings of the 41st International Conference on Machine Learning},
    pages = 	 {21502--21521},
    year = 	 {2024},
    editor = 	 {Salakhutdinov, Ruslan and Kolter, Zico and Heller, Katherine and Weller, Adrian and Oliver, Nuria and Scarlett, Jonathan and Berkenkamp, Felix},
    volume = 	 {235},
    series = 	 {Proceedings of Machine Learning Research},
    month = 	 {21--27 Jul},
    publisher =    {PMLR},
    pdf = 	 {https://raw.githubusercontent.com/mlresearch/v235/main/assets/jelassi24a/jelassi24a.pdf},
    url = 	 {https://proceedings.mlr.press/v235/jelassi24a.html},
    abstract = 	 {Transformers are the dominant architecture for sequence modeling, but there is growing interest in models that use a fixed-size latent state that does not depend on the sequence length, which we refer to as ”generalized state space models” (GSSMs). In this paper we show that while GSSMs are promising in terms of inference-time efficiency, they are limited compared to transformer models on tasks that require copying from the input context. We start with a theoretical analysis of the simple task of string copying and prove that a two layer transformer can copy strings of exponential length while GSSMs are fundamentally limited by their fixed-size latent state. Empirically, we find that transformers outperform GSSMs in terms of efficiency and generalization on synthetic tasks that require copying the context. Finally, we evaluate pretrained large language models and find that transformer models dramatically outperform state space models at copying and retrieving information from context. Taken together, these results suggest a fundamental gap between transformers and GSSMs on tasks of practical interest.}
}

@article{chen2024stuffed,
    title={Stuffed mamba: State collapse and state capacity of rnn-based long-context modeling},
    author={Chen, Yingfa and Zhang, Xinrong and Hu, Shengding and Han, Xu and Liu, Zhiyuan and Sun, Maosong},
    journal={arXiv preprint arXiv:2410.07145},
    year={2024},
    url={https://arxiv.org/abs/2410.07145}
}

@inproceedings{knowbutdonttell,
    title = "Insights into {LLM} Long-Context Failures: When Transformers Know but Don`t Tell",
    author = "Gao, Muhan  and
      Lu, TaiMing  and
      Yu, Kuai  and
      Byerly, Adam  and
      Khashabi, Daniel",
    editor = "Al-Onaizan, Yaser  and
      Bansal, Mohit  and
      Chen, Yun-Nung",
    booktitle = "Findings of the Association for Computational Linguistics: EMNLP 2024",
    month = nov,
    year = "2024",
    address = "Miami, Florida, USA",
    publisher = "Association for Computational Linguistics",
    url = "https://aclanthology.org/2024.findings-emnlp.447/",
    doi = "10.18653/v1/2024.findings-emnlp.447",
    pages = "7611--7625",
    abstract = "Large Language Models (LLMs) exhibit positional bias, struggling to utilize information from the middle or end of long contexts. Our study explores LLMs' long-context reasoning by probing their hidden representations. We find that while LLMs encode the position of target information, they often fail to leverage this in generating accurate responses. This reveals a disconnect between information retrieval and utilization, a {\textquotedblleft}know but don`t tell{\textquotedblright} phenomenon. We further analyze the relationship between extraction time and final accuracy, offering insights into the underlying mechanics of transformer models."
}

@inproceedings{
    mamba1,
    title={Mamba: Linear-Time Sequence Modeling with Selective State Spaces},
    author={Albert Gu and Tri Dao},
    booktitle={First Conference on Language Modeling},
    year={2024},
    url={https://openreview.net/forum?id=tEYskw1VY2}
}

@inproceedings{
    mamba2,
    title={Transformers are {SSM}s: Generalized Models and Efficient Algorithms Through Structured State Space Duality},
    author={Tri Dao and Albert Gu},
    booktitle={Forty-first International Conference on Machine Learning},
    year={2024},
    url={https://openreview.net/forum?id=ztn8FCR1td}
}

@software{gpt-neo,
  author       = {Black, Sid and
                  Gao, Leo and
                  Wang, Phil and
                  Leahy, Connor and
                  Biderman, Stella},
  title        = {{GPT-Neo: Large Scale Autoregressive Language 
                   Modeling with Mesh-Tensorflow}},
  month        = mar,
  year         = 2021,
  publisher    = {Zenodo},
  version      = {1.0},
  doi          = {10.5281/zenodo.5297715},
  url          = {https://doi.org/10.5281/zenodo.5297715}
}

@InProceedings{cka,
  title = 	 {Similarity of Neural Network Representations Revisited},
  author =       {Kornblith, Simon and Norouzi, Mohammad and Lee, Honglak and Hinton, Geoffrey},
  booktitle = 	 {Proceedings of the 36th International Conference on Machine Learning},
  pages = 	 {3519--3529},
  year = 	 {2019},
  editor = 	 {Chaudhuri, Kamalika and Salakhutdinov, Ruslan},
  volume = 	 {97},
  series = 	 {Proceedings of Machine Learning Research},
  month = 	 {09--15 Jun},
  publisher =    {PMLR},
  pdf = 	 {http://proceedings.mlr.press/v97/kornblith19a/kornblith19a.pdf},
  url = 	 {https://proceedings.mlr.press/v97/kornblith19a.html},
  abstract = 	 {Recent work has sought to understand the behavior of neural networks by comparing representations between layers and between different trained models. We examine methods for comparing neural network representations based on canonical correlation analysis (CCA). We show that CCA belongs to a family of statistics for measuring multivariate similarity, but that neither CCA nor any other statistic that is invariant to invertible linear transformation can measure meaningful similarities between representations of higher dimension than the number of data points. We introduce a similarity index that measures the relationship between representational similarity matrices and does not suffer from this limitation. This similarity index is equivalent to centered kernel alignment (CKA) and is also closely connected to CCA. Unlike CCA, CKA can reliably identify correspondences between representations in networks trained from different initializations.}
}

@InProceedings{pythia,
  title = 	 {Pythia: A Suite for Analyzing Large Language Models Across Training and Scaling},
  author =       {Biderman, Stella and Schoelkopf, Hailey and Anthony, Quentin Gregory and Bradley, Herbie and O'Brien, Kyle and Hallahan, Eric and Khan, Mohammad Aflah and Purohit, Shivanshu and Prashanth, Usvsn Sai and Raff, Edward and Skowron, Aviya and Sutawika, Lintang and Van Der Wal, Oskar},
  booktitle = 	 {Proceedings of the 40th International Conference on Machine Learning},
  pages = 	 {2397--2430},
  year = 	 {2023},
  editor = 	 {Krause, Andreas and Brunskill, Emma and Cho, Kyunghyun and Engelhardt, Barbara and Sabato, Sivan and Scarlett, Jonathan},
  volume = 	 {202},
  series = 	 {Proceedings of Machine Learning Research},
  month = 	 {23--29 Jul},
  publisher =    {PMLR},
  pdf = 	 {https://proceedings.mlr.press/v202/biderman23a/biderman23a.pdf},
  url = 	 {https://proceedings.mlr.press/v202/biderman23a.html},
  abstract = 	 {How do large language models (LLMs) develop and evolve over the course of training? How do these patterns change as models scale? To answer these questions, we introduce <em>Pythia</em>, a suite of 16 LLMs all trained on public data seen in the exact same order and ranging in size from 70M to 12B parameters. We provide public access to 154 checkpoints for each one of the 16 models, alongside tools to download and reconstruct their exact training dataloaders for further study. We intend <em>Pythia</em> to facilitate research in many areas, and we present several case studies including novel results in memorization, term frequency effects on few-shot performance, and reducing gender bias. We demonstrate that this highly controlled setup can be used to yield novel insights toward LLMs and their training dynamics. Trained models, analysis code, training code, and training data can be found at https://github.com/EleutherAI/pythia.}
}

@article{pile,
  title={The Pile: An 800GB Dataset of Diverse Text for Language Modeling},
  author={Gao, Leo and Biderman, Stella and Black, Sid and Golding, Laurence and Hoppe, Travis and Foster, Charles and Phang, Jason and He, Horace and Thite, Anish and Nabeshima, Noa and others},
  journal={arXiv preprint arXiv:2101.00027},
  year={2020}
}

@article{liu-etal-2024-lost,
    title = "Lost in the Middle: How Language Models Use Long Contexts",
    author = "Liu, Nelson F.  and
      Lin, Kevin  and
      Hewitt, John  and
      Paranjape, Ashwin  and
      Bevilacqua, Michele  and
      Petroni, Fabio  and
      Liang, Percy",
    journal = "Transactions of the Association for Computational Linguistics",
    volume = "12",
    year = "2024",
    address = "Cambridge, MA",
    publisher = "MIT Press",
    url = "https://aclanthology.org/2024.tacl-1.9/",
    doi = "10.1162/tacl_a_00638",
    pages = "157--173",
    abstract = "While recent language models have the ability to take long contexts as input, relatively little is known about how well they use longer context. We analyze the performance of language models on two tasks that require identifying relevant information in their input contexts: multi-document question answering and key-value retrieval. We find that performance can degrade significantly when changing the position of relevant information, indicating that current language models do not robustly make use of information in long input contexts. In particular, we observe that performance is often highest when relevant information occurs at the beginning or end of the input context, and significantly degrades when models must access relevant information in the middle of long contexts, even for explicitly long-context models. Our analysis provides a better understanding of how language models use their input context and provides new evaluation protocols for future long-context language models."
}

@inproceedings{adam,
    author={Diederik P. Kingma and Jimmy Ba},
    title={Adam: A Method for Stochastic Optimization},
    year={2015},
    cdate={1420070400000},
    url={http://arxiv.org/abs/1412.6980},
    booktitle={The Third International Conference on Learning Representations},
}

@InProceedings{he,
  title={Delving Deep into Rectifiers: Surpassing Human-Level Performance on ImageNet Classification},
  author={Kaiming He and X. Zhang and Shaoqing Ren and Jian Sun},
  booktitle={2015 IEEE International Conference on Computer Vision (ICCV)},
  year={2015},
  pages={1026-1034},
  url={https://api.semanticscholar.org/CorpusID:13740328}
}

@InProceedings{xavier,
  title = 	 {Understanding the difficulty of training deep feedforward neural networks},
  author = 	 {Glorot, Xavier and Bengio, Yoshua},
  booktitle = 	 {Proceedings of the Thirteenth International Conference on Artificial Intelligence and Statistics},
  pages = 	 {249--256},
  year = 	 {2010},
  editor = 	 {Teh, Yee Whye and Titterington, Mike},
  volume = 	 {9},
  series = 	 {Proceedings of Machine Learning Research},
  address = 	 {Chia Laguna Resort, Sardinia, Italy},
  month = 	 {13--15 May},
  publisher =    {PMLR},
  pdf = 	 {http://proceedings.mlr.press/v9/glorot10a/glorot10a.pdf},
  url = 	 {https://proceedings.mlr.press/v9/glorot10a.html},
  abstract = 	 {Whereas before 2006 it appears that deep multi-layer neural networks were not successfully trained, since then several algorithms have been shown to successfully train them, with experimental results showing the superiority of deeper vs less deep architectures. All these experimental results were obtained with new initialization or training mechanisms. Our objective here is to understand better why standard gradient descent from random initialization is doing so poorly with deep neural networks, to better understand these recent relative successes and help design better algorithms in the future.  We first observe the influence of the non-linear activations functions. We find that the logistic sigmoid activation is unsuited for deep networks with random initialization because of its mean value, which can drive especially the top hidden layer into saturation. Surprisingly, we find that saturated units can move out of saturation by themselves, albeit slowly, and explaining the plateaus sometimes seen when training neural networks. We find that a new non-linearity that saturates less can often be beneficial. Finally, we study how activations and gradients vary across layers and during training, with the idea that training may be more difficult when the singular values of the Jacobian associated with each layer are far from 1.  Based on these considerations, we propose a new initialization scheme that brings substantially faster convergence.}
}

@inproceedings{
    skean2025layer,
    title={Layer by Layer: Uncovering Hidden Representations in Language Models},
    author={Oscar Skean and Md Rifat Arefin and Dan Zhao and Niket Nikul Patel and Jalal Naghiyev and Yann LeCun and Ravid Shwartz-Ziv},
    booktitle={Forty-second International Conference on Machine Learning},
    year={2025},
    url={https://openreview.net/forum?id=WGXb7UdvTX}
}

@inproceedings{
wang2025understanding,
title={Understanding and Mitigating Bottlenecks of State Space Models through the Lens of Recency and Over-smoothing},
author={Peihao Wang and Ruisi Cai and Yuehao Wang and Jiajun Zhu and Pragya Srivastava and Zhangyang Wang and Pan Li},
booktitle={The Thirteenth International Conference on Learning Representations},
year={2025},
url={https://openreview.net/forum?id=pymXpl4qvi}
}

@inproceedings{10.1145/3589335.3651526,
author = {Steck, Harald and Ekanadham, Chaitanya and Kallus, Nathan},
title = {Is Cosine-Similarity of Embeddings Really About Similarity?},
year = {2024},
isbn = {9798400701726},
publisher = {Association for Computing Machinery},
address = {New York, NY, USA},
url = {https://doi.org/10.1145/3589335.3651526},
doi = {10.1145/3589335.3651526},
abstract = {Cosine-similarity is the cosine of the angle between two vectors, or equivalently the dot product between their normalizations. A popular application is to quantify semantic similarity between high-dimensional objects by applying cosine-similarity to a learned low-dimensional feature embedding. This can work better but sometimes also worse than the unnormalized dot-product between embedded vectors in practice. To gain insight into this empirical observation, we study embeddings derived from regularized linear models, where closed-form solutions facilitate analytical insights. We derive analytically how cosine-similarity can yield arbitrary and therefore meaningless 'similarities.' For some linear models the similarities are not even unique, while for others they are implicitly controlled by the regularization. We discuss implications beyond linear models: a combination of different regularizations are employed when learning deep models; these have implicit and unintended effects when taking cosine-similarities of the resulting embeddings, rendering results opaque and possibly arbitrary. Based on these insights, we caution against blindly using cosine-similarity and outline alternatives.},
booktitle = {Companion Proceedings of the ACM Web Conference 2024},
pages = {887–890},
numpages = {4},
keywords = {cosine similarity, latent embedding},
location = {Singapore, Singapore},
series = {WWW '24}
}

\newpage

\onecolumn

\appendix

\section{Mathematical Derivations}

\subsection{Mathematical Setups} 

The metric \( V_\text{glob}^2 \) measuring the variance of layer representations with \( L=1 \) is defined as:

\begin{equation} \label{eqssm:appx-eq4}
    V_\text{glob}^2 \propto ||h^{(1)} - h^{(0)}||^2
\end{equation}

where \( h^{(l)} \) is the representation at layer \( l \). Our goal is to compute the expected value \( \mathbb{E}[V_\text{glob}^2] \). For both Transformers and Mamba, the representation evolves as:

\begin{equation}
    h^{(1)} = h^{(0)} + F(h^{(0)})
\end{equation}

From our assumptions in \Cref{subsec:theory-setups}, we have:
\begin{itemize}
    \item \( h^{(0)} \), \( h^{(1)} \) are Gaussian matrices, $\mathbb{E}[h^{(0)}] = 0$ and $\text{Cov}(h^{(0)}) = \Sigma$. \( h_t \sim N(0, \sigma^2 I) \).
    \item \( F\) is the update at layer \( 1 \) with deterministic parameters. We treat \( F(x) \) as random matrices with {\( \mathbb{E}[F] = \mu_F \) and \( \text{Cov}(F) = \Sigma_F \)}.
    \item \( h^{(0)}_j \) and \( h^{(0)}_t \) are independent for \( j \neq t \) and \( \mathbb{E}[h^{(0)}_j] = 0 \) for all \( j \), and $\mathbb{E}[h_ih_i^T] = \sigma^2I_d$.
    \item \( \mathbb{E}[h^{(0)}_j (h^{(0)}_t)^T] = 0 \) for \( j \neq t \).
\end{itemize}

\paragraph{Objective:} Under some mild assumptions, we simplify and compare St$^2$ from both architectures.

\subsection{Proofs} \label{appdx:theory-proofs}


\begin{proof}[Proof of Proposition~\ref{prop:prop-common-formula}]

Substitute into \Cref{eqssm:appx-eq4}:

\[
h^{(1)} - h^{(0)} = (h^{(0)} + F(h^{(0)})) - h^{(0)} = F(h^{(0)})
\]

Thus:

\[
    V_\text{glob}^2 \propto ||h^{(1)} - h^{(0)}||^2 = ||F(h^{(0)})||^2
\]

Denote \(h^{(0)} \in \mathbb{R}^{n \times d}\), and \(F: \mathbb{R}^{n \times d} \to \mathbb{R}^{n \times d}\), the squared Frobenius norm is:

\[
    ||F(h^{(0)})||^2 = \sum_{i=1}^n \sum_{j=1}^d |F(h^{(0)})_{ij}|^2
\]

The expectation is:

\[
    \mathbb{E}[||F(h^{(0)})||^2] = \mathbb{E}\left[ \sum_{i,j} |F(h^{(0)})_{ij}|^2 \right] = \sum_{i,j} \mathbb{E}[|F(h^{(0)})_{ij}|^2]
\]

Since \(F\) has deterministic parameters, the randomness in \(F(h^{(0)})\) comes from \(h^{(0)}\). The assumption states \(\mathbb{E}[F(h^{(0)})] = \mu_F\), where \(\mu_F \in \mathbb{R}^{n \times d}\), so:

\[
\mathbb{E}[F(h^{(0)})_{ij}] = (\mu_F)_{ij}
\]

The variance of \(F(h^{(0)})\) is given by a covariance matrix \(\Sigma\), but for a matrix-valued random variable, we interpret \(\mathbb{E}[|F(h^{(0)})_{ij}|^2] = \text{Var}(F(h^{(0)})_{ij}) + |\mathbb{E}[F(h^{(0)})_{ij}]|^2\). Thus:

\[
    \mathbb{E}[|F(h^{(0)})_{ij}|^2] = \text{Var}(F(h^{(0)})_{ij}) + |(\mu_F)_{ij}|^2
\]

Summing over all elements:

\[
    \mathbb{E}[||F(h^{(0)})||^2] = \sum_{i,j} \left( \text{Var}(F(h^{(0)})_{ij}) + |(\mu_F)_{ij}|^2 \right).
\]

Thus, we have:

\begin{equation}\label{eqssm:eqssm100}
    \mathbb{E}[V_\text{glob}^2] \propto \mathbb{E}\left[ ||F(h^{(0)})||^2 \right] = \text{Tr}(\Sigma_F) + ||(\mu_F)||^2
\end{equation}

\end{proof}

From \Cref{eqssm:eqssm100}, for Transformers:

\begin{equation}
    \mathbb{E}[V_\text{glob, Trans}^2] \propto \text{Tr}(\Sigma_{F,\text{Trans}}) + ||\mu_{F,\text{Trans}}||^2 
\end{equation}

where: 
\begin{itemize}
    \item 
    $F_{\text{Trans}} = \text{Attn}(h^{(0)}) + \text{FFN}(\tilde{h}^{(0)}), \quad \tilde{h}^{(0)} = h^{(0)} + \text{Attn}(h^{(0)})$

    \item $\mu_{F,\text{Trans}} = \mathbb{E}[F_{\text{Trans}}] = \mathbb{E}[\text{Attn}(h^{(0)})] + \mathbb{E}[\text{FFN}(\tilde{h}^{(0)})]$
    
    \item $\Sigma_{F,\text{Trans}} = \text{Cov}(\text{Attn} + \text{FFN})$
\end{itemize}

\noindent For Mamba (a state-space model with selective mechanism):

\begin{equation}
    \mathbb{E}[V_\text{glob, Mamba}^2] \propto \text{Tr}(\Sigma_{F,\text{Mamba}}) + ||\mu_{F,\text{Mamba}}||^2 
\end{equation}

where:
\begin{itemize}
    \item  
$F_{\text{Mamba}} = \text{S6}(h') \circ z(h^{(0)}), \quad z = \text{SiLU}(\text{Linear}(h^{(0)})), \quad h' = \text{SiLU}(\text{Conv1D}(\text{Linear}(h^{(0}))$ 
\item $\mu_{F,\text{Mamba}} = \mathbb{E}[\text{S6} \circ z] = \mathbb{E}[\text{S6}] \cdot \mathbb{E}[z] + \text{Cov}(S6, z)$
\item $\Sigma_{F,\text{Mamba}} = \text{Cov}(\text{S6} \circ z)$
\end{itemize}


\begin{proof}[Proof of \Cref{prop:prop-att-odd}]
Consider the effect of negating the input: if we replace \( x \) with \( -x \):

\( Q_{\text{new}} = (-x) W_Q = -x W_Q = -Q, \quad 
   K_{\text{new}} = (-x) W_K = -x W_K = -K, \quad 
   V_{\text{new}} = (-x) W_V = -x W_V = -V \)

The new attention scores become:

\[
\frac{(-Q) (-K)^T}{\sqrt{d_k}} = \frac{(-Q) (-K^T)}{\sqrt{d_k}} = \frac{Q K^T}{\sqrt{d_k}}
\]

Thus, the softmax remains unchanged:

\[
\text{softmax}\left( \frac{(-Q) (-K)^T}{\sqrt{d_k}} \right) = \text{softmax}\left( \frac{Q K^T}{\sqrt{d_k}} \right)
\]

The output is:

\[
\text{Attn}(-x) = \text{softmax}\left( \frac{Q_{new} K_{new}^T}{\sqrt{d_k}} \right) (V_{new}) = -\text{softmax}\left( \frac{Q K^T}{\sqrt{d_k}} \right) V = -\text{Attn}(x)
\]

This shows that the attention mechanism $Attn(x)$ is odd.
\end{proof}


\begin{proof}[Proof of \Cref{propssm:prop10}]
    Because \( \text{Attn}(h^{(0)}) \) is odd and \( h^{(l)} \) is symmetrically distributed, we have $\mathbb{E}[\text{Attn}(h^{(0)})] = \mathbb{E}[\text{Attn}(h^{(1)})] = 0$.
\end{proof}


\begin{proof}[Proof of \Cref{theorem:thrm-1}]
We have $\mu_{F,\text{Trans}} = \mathbb{E}[F_{\text{Trans}}] = \mathbb{E}[\text{Attn}(h^{(0)})] + \mathbb{E}[\text{FFN}(\tilde{h}^{(0)})]$ and \( \mu_{h^{(0)}} = 0\). From \Cref{propssm:prop10}, we have $\mathbb{E}[\text{Attn}(h^{(0)})] = 0$ and $\mathbb{E}[\tilde{h}^{(0)})] = \mathbb{E}[{h}^{(0)} + \text{Attn}({h}^{(0)})] = 0$.

We have $\text{FFN}(\tilde{h}^{(0)}) = W_2 (\text{GELU}(W_1 \tilde{h}^{(0)} + b_1)) + b_2$ and assume that $W_1$, $W_2$, $b_1$, $b_2$ are independent, we approximate $\text{GELU}(a) \approx a\cdot\sigma(1.702a)$ following \citet{hendrycks2016gaussian}, and further approximate  $a\cdot\sigma(1.702a) \approx \frac{1}{2}a$ via Taylor expansion via removing higher-order terms, we have: 

\begin{align} \label{eqssm:eqssm11}
\mu_{F,Trans} = \mathbb{E}[\text{FFN}(\tilde{h}^{(0)})] &= \mathbb{E}[W_2 \text{GELU}(W_1 \tilde{h}^{(0)} + b_1) + b_2] \\
&\approx \mathbb{E}[W_2 \frac{1}{2}(W_1 \tilde{h}^{(0)} + b_1) + b_2] = \frac{1}{2}W_2 b_1 + b_2 
\end{align}
\end{proof}


\begin{proof}[Proof of \Cref{theorem:thrm-2}]
We need to compute \( \text{Tr}(\Sigma_{F,\text{Trans}}) \) for the Transformer-based model's update function \( F_{\text{Trans}}(h^{(0)}) = \text{Attn}(h^{(0)}) + \text{FFN}(\tilde{h}^{(0)}) \). We have:

\begin{align}
    F(h^{(0)}_t) &= \text{Attn}(h^{(0)}_t) + \text{FFN}(\tilde{h}^{(0)}_t) ß\approx \text{Attn}(h^{(0)}_t) + \frac{1}{2}W_2(W_1 \tilde{h}^{(0)}_t + b_1) + b_2 \\
        &= \text{Attn}(h^{(0)}_t) + \frac{1}{2}W_2 W_1 (h^{(0)_t} + \text{Attn}(h^{(0)}_t)) + \frac{1}{2}W_2 b_1 + b_2 \\
        &= \left(I + \frac{1}{2}W_2 W_1\right) \text{Attn}(h^{(0)}_t) + \frac{1}{2}W_2 W_1 h^{(0)}_t + \frac{1}{2}W_2 b_1 + b_2 \\
     \therefore F_t &= T_1 \text{Attn}_t + T_2 h_t + \mu
\end{align}

where $T_1 = I + \frac{1}{2}W_2 W_1$, $T_2 = \frac{1}{2}W_2 W_1$, and $\mu = \frac{1}{2}W_2 b_1 + b_2$. We also denote $F_t, \text{Attn}_t, h_t$ refers to $F(h^{(0)}_t), \text{Attn}(h^{(0)}_t), h^{(0)}_t$ respectively for ease of notation. We aim to compute $\Sigma_{F_t} = \mathbb{E}[F_t F_t^T] - \mathbb{E}[F_t]\mathbb{E}[F_t]^T$. Thus:
\begin{align}
    F_t F_t^T &= (T_1 \text{Attn}_t + T_2 h_t + \mu) (T_1 \text{Attn}_t + T_2 h_t + \mu)^T \\
    \therefore\quad \mathbb{E}[F_t F_t^T] &= T_1 \mathbb{E}[\text{Attn}_t\text{Attn}_t^T] T_1^T + T_2 \mathbb{E}[h h^T] T_2^T + \mu \mu^T \\
    &= T_1 \Sigma_{\text{Attn}_t} T_1^T + T_2 \Sigma_{h_t} T_2^T + \mu \mu^T
\end{align}

by using $\mathbb{E}[\text{Attn}_t] = \mathbb{E}[h_t] = 0$ and $\text{Attn}_t$ depends on $h_t$, we have:

\begin{align}
    \Sigma_{F_t} &= \mathbb{E}[F_t F_t^T] - \mathbb{E}[F_t]\mathbb{E}[F_t]^T \\
    &= T_1 \Sigma_{\text{Attn}_t} T_1^T + T_2 \Sigma_{h_t} T_2^T + T_1 \mathrm{Cov}(\text{Attn}_t, h_t) T_2^T + T_2 \mathrm{Cov}(\text{Attn}_t, h_t)^T T_1^T
\end{align}

Now we need to compute the covariance of the attention output, $\Sigma_{\text{Attn}_t}$. Given $h^{(0)} \in \mathbb{R}^{n \times d}$, the attention output for a single token $t$ is:

\begin{equation}
    \text{Attn}_t = \sum_{j=1}^n a_{tj} h_j^{(0)} W_V, \quad a_{tj} = \text{softmax} \Bigg( \frac{(h_t^{(0)} W_Q)(h_j^{(0)} W_K)^T}{\sqrt{d_k}} \Bigg)
 \end{equation}

As $\mathbb{E}[\text{Attn}_t] = 0$, the covariance matrix is:

\begin{align}
    \Sigma_{\text{Attn}_t} &= \mathbb{E}[\text{Attn}_t \text{Attn}_t^T] = \mathbb{E} \Bigg[ \Bigg( \sum_{j=1}^n a_{tj} h_j^{(0)} W_V \Bigg) \Bigg( \sum_{k=1}^n a_{tk} h_k^{(0)} W_V \Bigg)^T \Bigg] \\
        &= \mathbb{E} \Bigg[ \sum_{j, k=1}^n a_{tj} a_{tk} (h_j^{(0)} W_V)(h_k^{(0)} W_V)^T \Bigg]
\end{align}

Since $h_j^{(0)}$ and $h_k^{(0)}$ are independent for $j \neq k$, and $\mathbb{E}[h_j^{(0)}] = \mathbb{E}[h_k^{(0)}] = 0$, the cross-terms vanish unless $j = k$:

\begin{align}
    \Sigma_{\text{Attn}_t} &= \sum_{j=1}^n \mathbb{E}[a_{tj}^2] \mathbb{E}[(h_j^{(0)} W_V)(h_k^{(0)} W_V)^T] = \sum_{j=1}^n \mathbb{E}[a_{tj}^2] (W_V \Sigma_{h^{(0)}} W_V^T) \\
        &= \sum_{j=1}^n \mathbb{E}[a_{tj}^2] (\sigma^2 W_V  W_V^T) = \sigma^2 \left( \sum_{j=1}^n \mathbb{E}[a_{tj}^2] \right) W_V  W_V^T 
\end{align}

The attention weights $a_{tj}$ depend on the softmax output. For large $d_k$, the dot product $(h^{(0)}_t W_Q)(h^{(0)}_j W_K)^T / \sqrt{d_k}$ is approximately Gaussian. Assuming standard initialization $(W_Q, W_K \sim \mathcal{N}(0, 1/\sqrt{d}))$, the variance of the dot product before scaling is:

\begin{align}
    \text{Var}((h_t^{(0)}W_Q)(h_j^{(0)}W_K)^T) &\approx \sigma^2 \cdot \frac{1}{d} \cdot \sigma^2 \cdot d = \sigma^4 \\
    \therefore\quad \text{Var}\Bigg( \frac{(h_t^{(0)}W_Q)(h_j^{(0)}W_K)^T)}{\sqrt{d_k}} \Bigg) &\approx \frac{\sigma^4}{\sqrt{d_k}}
\end{align}

The softmax normalizes these scores, and for symmetrically distributed inputs, $\mathbb{E}[a_{tj}] \approx 1/n$. The variance of the softmax output is small, and we approximate $\mathbb{E}[a_{tj}^2] \approx 1/n^2$. And by assuming that $W_V W_V^T \approx I_d$ (standard initialization ensures $\text{Tr}(W_V W_V^T) \approx d$). The covariance $\Sigma_{\text{Attn}_t}$ becomes:

\begin{equation}
    \Sigma_{\text{Attn}_t} \approx \sigma^2 \left(\sum_{j=1}^n \frac{1}{n^2}\right) W_V W_V^T \approx \frac{\sigma^2}{n} I_d
\end{equation}

Now we need to compute $\mathrm{Cov}(\text{Attn}_t, h_t) = \mathbb{E}[\text{Attn}_t h_t] = \sum_{j=1}^t \mathbb{E}[a_{tj} v_j h_t^T]$. For $j \neq t$, $v_j$ is independent of $h_t$ and zero-mean so those terms are vanish. Thus, with $j = t$ and the same ``mean-field'' decoupling, we have:

\begin{align}
    \mathrm{Cov}(\text{Attn}_t, h_t) &\approx \mathbb{E}[a_{tt}] \mathbb{E}[v_t h_t^T] \\
    &= \frac{1}{n}  W_V \mathbb{E}[h_t h_t^T] \\
    &= \frac{\sigma^2}{n} W_V
\end{align}

Put everything together, we have the covariance trace for token $t$ as:

\begin{align}
    \text{Tr}(\Sigma_t) &= \text{Tr}(T_1 \Sigma_{\text{Attn}_t} T_1^T) + \text{Tr}(T_2 \Sigma_{h_t} T_2^T) + 2 \text{Tr}(T_1 \mathrm{Cov}(\text{Attn}_t, h_t) T_2^T) \\
    &\approx \frac{\sigma^2}{n} \text{Tr}(T_1 T_1^T) + \sigma^2 \text{Tr}(T_2 T_2^T) + \frac{2\sigma^2}{n} \text{Tr}(T_1 W_V T_2^T)
\end{align}

Sum over $t$ across $n$ tokens, we finally have:
\begin{equation}
    \text{Tr}(\Sigma_{F,\text{Trans}}) = \sigma^2 \text{Tr}(T_1 T_1^T) + n \sigma^2 \text{Tr}(T_2 T_2^T) + 2\sigma^2 \text{Tr}(T_1 W_V T_2^T)
\end{equation}
\end{proof}


\begin{proof}[Proof of \Cref{theorem:thrm-3}]
We have $\mu_{F,\text{Mamba}} = \mathbb{E}[\text{S6} \circ z]$. The S6 layer can be represented as a data-controlled linear operator \citep{poli2023hyena,ali2024hidden}. Specifically, for a sequence of inputs \( h' = [h'_1, h'_2, \ldots, h'_n]^T \), the outputs \( o = [o_1, o_2, \ldots, o_n]^T \) are computed through the following recursive equations. Assuming that \( s_0 = 0 \):

\begin{align}
s_1 &= \bar{B}_1 h'_1, \quad o_1 = C_1 \bar{B}_1 h'_1, \\
s_2 &= \bar{A}_2 \bar{B}_1 h'_1 + \bar{B}_2 h'_2, \quad o_2 = C_2 \bar{A}_2 \bar{B}_1 h'_1 + C_2 \bar{B}_2 h'_2,
\end{align}

These equations define \( s_t \) and \( h^{(1)}_t \) recursively using matrices \( \bar{A}_t \), \( \bar{B}_t \), and \( C_t \), applied to input vectors \( h'_t \). The general form, given in Equation (10), is:

\begin{align}\label{eqssm:eqssm200}
s_t &= \sum_{j=1}^t \left( \prod_{k=j+1}^t \bar{A}_k \right) \bar{B}_j h'_j, \quad o_t = C_t \sum_{j=1}^t \left( \prod_{k=j+1}^t \bar{A}_k \right) \bar{B}_j h'_j.
\end{align}

In matrix form, this can be expressed as:

\begin{equation}
o = \hat{\alpha} h',
\end{equation}

where \( \hat{\alpha} \) is the matrix:

\begin{equation}
\hat{\alpha} = \begin{bmatrix}
C_1 \bar{B}_1 & 0 & 0 & \cdots & 0 \\
C_2 \bar{A}_2 \bar{B}_1 & C_2 \bar{B}_2 & 0 & \cdots & 0 \\
\vdots & \vdots & \ddots & \ddots & \vdots \\
C_n \bar{A}_n \bar{A}_{n-1} \cdots \bar{A}_2 \bar{B}_1 & C_n \bar{A}_n \bar{A}_{n-1} \cdots \bar{A}_3 \bar{B}_2 & \cdots & C_n \bar{B}_n
\end{bmatrix}.
\end{equation}

The element at row \( i \) and column \( j \) of \( \hat{\alpha} \), as specified in Equation (12), is computed as:

\begin{equation}
\hat{\alpha}_{i,j} = 
\begin{cases}
C_i \left( \prod_{k=j+1}^i \bar{A}_k \right) \bar{B}_j & \text{if } i \geq j, \\
0 & \text{if } i < j.
\end{cases}
\end{equation}

This matrix \( \hat{\alpha} \in \mathbb{R}^{n \times n} \) is a function of the input and the parameters \( \bar{A}, \bar{B}, \bar{C}, \bar{\Delta} \), encapsulating the data-controlled linear transformations applied by the S6 layer at layer \( l \).

Next, we have $z = \text{SiLU}(\text{Linear}(h^{(0)}))$. Assume that \(\text{Linear}(h^{(0)}) = W_z h^{(0)}\), where \(W_z  \in \mathbb{R}^{d \times ed}\) is a weight matrix and $e$ is the expansion factor (typically 2). Denote \(u = \text{Linear}(h^{(0)}) = W_z h^{(0)}\), by approximating SiLU function as $\text{SiLU}(x) \approx \frac{x}{2}$ via Taylor expansion, we have $z_{i} \approx \frac{u_{i}}{2}$. In other words, we have $    \mathbb{E}[o_{i} z_{i}] \approx \mathbb{E}\left[ o_{i} \frac{u_{i}}{2} \right] = \frac{1}{2} \mathbb{E}[o_{i} u_{i}]$. Computing \(\mathbb{E}[o \circ z]\), we have:

\begin{equation}
    \mathbb{E}[o \circ z] = \mathbb{E}\left[ \begin{bmatrix} o_{1} z_{1} \\ \vdots \\ o_{n} z_{n} \end{bmatrix} \right] = \frac{1}{2}\mathbb{E}\left[ \begin{bmatrix} o_{1} u_{1} \\ \vdots \\ o_{n} u_{n} \end{bmatrix} \right] = \frac{1}{2}\mathrm{diag}(\mathbb{E}[o u^T])
\end{equation}

Substituting \Cref{eqssm:eqssm200}, we have:

\begin{equation}
    \mathbb{E}[o_t u_t^T] = \mathbb{E}\left[ C_t \sum_{j=1}^t \left( \prod_{k=j+1}^t \bar{A}_k \right) \bar{B}_j h'_j (W_z h^{(0)}_t)^T \right]
\end{equation}

Now, for $h' = \text{SiLU}(\text{Conv1D}(\text{Linear}(h^{(0)})))$, assume that the linear transformation parameter is $W_h \in \mathbb{R}^{d \times ed}$ and the causal convolution with kernel size \(K\) (typically 4) is $W_c$, for the time $t \in \{1,2,\dots,n\}$, we have:

\begin{align}
    h'_t &= \text{SiLU}(\text{Conv1D}(\text{Linear}(h^{(0)}_t))) = \text{SiLU}(W_c (W_h h^{(0)}_t)) \\ &= \text{SiLU} \Bigg(\sum_{j=0}^{K-1} W_{c_j} (W_h h^{(0)}_{t-j}) \Bigg) \approx \frac{1}{2} W_{c_j} W_h \sum_{j=0}^{K-1} h^{(0)}_{t-j}
\end{align}

Define \( W_{h'_j} = W_{c_j} W_h \), where \( W_{h'_j} \) is the combined transformation matrix. Thus:

\begin{align}
    \mathbb{E}[o_t u_t^T] &= \mathbb{E}\left[ C_t \sum_{i=1}^t \left( \prod_{k=i+1}^t \bar{A}_k \right) \bar{B}_i \left( \frac{1}{2} W_{h'_j} \sum_{j=0}^{K-1} h^{(0)}_{i-j} \right) (W_z h^{(0)}_t)^T \right] \\
    &= \frac{1}{2}  C_t \sum_{i=1}^t \sum_{j=0}^{K-1} \left( \prod_{k=i+1}^t \bar{A}_k \right) \bar{B}_i W_{h'_j} \mathbb{E}\left[h^{(0)}_{i-j} (h^{(0)}_t)^T\right] W_z^T
\end{align}

Under assumptions, in the sum over \( i = 1 \) to \( t \), the expectation \( \mathbb{E}[h^{(0)}_{i-j} (h^{(0)}_t)^T] \) is either 0 when \( i-j \neq t \), or \( \sigma^2 I_d \) when \( i-j = t \), which means \(i=t, j=0\). Thus:

\begin{align}
    \mathbb{E}[o_t u_t^T] &= \frac{1}{2}  C_t \sum_{i=1}^t \sum_{j=0}^{K-1} \left( \prod_{k=i+1}^t \bar{A}_k \right) \bar{B}_i W_{h'_j} \mathbb{E}\left[h^{(0)}_{i-j} (h^{(0)}_t)^T\right] W_z^T \\
    &= \frac{1}{2} C_t \left( \prod_{k=i+1}^t \bar{A}_k \right) \bar{B}_t W_{h'_0} \mathbb{E}\left[h^{(0)}_t (h^{(0)}_t)^T\right] W_z^T\\
    &= \frac{1}{2} C_t \left( \prod_{k=i+1}^t \bar{A}_k \right) \bar{B}_t W_{h'_0} \sigma^2 W_z^T = \frac{\sigma^2}{2} C_t \bar{B}_t W_{h'_0} W_z^T
\end{align}

and $\mu_{F,\text{Mamba}} = \mathbb{E}[\text{S6} \circ z] = \frac{1}{2}\mathrm{diag}(\mathbb{E}[o u^T]) = \frac{\sigma^2}{4n} \sum_{t=1}^n \mathrm{diag} (C_t \bar{B}_t W_{h'_0} W_z^T)$.
\end{proof}


\begin{proof}[Proof of \Cref{theorem:thrm-4}]
We need to compute \( \text{Tr}(\Sigma_{F,\text{Mamba}}^2) \) for the Mamba model's update function \( F_{\text{Mamba}}(h^{(0)}) = o \circ z \), where \( o \) is the output of the S6 layer and \( z \) is the gating term, both dependent on the input \( h^{(0)} \). 

Under the first-order nonlinearity approximation (S4), the Mamba update at time $t$ is:

\begin{align}
    F(h^{(0)}_t) &= \frac{1}{2} (o_t \circ z_t) \\
    &= \frac{1}{2} \mathrm{diag}(W_z h^{(0)}_t) C_t \sum_{j=1}^t \left( \prod_{k=j+1}^t \bar{A}_k \right) \bar{B}_j W_{h'} h^{(0)}_j \\
    &= \frac{1}{2} \mathrm{diag}(W_z h^{(0)}_t) \sum_{j=1}^t M_{t,j} h^{(0)}_j, \quad \text{where} \quad M_{t,j} = C_t \left( \prod_{k=j+1}^t \bar{A}_k \right) \bar{B}_j W_{h'} \in \mathbb{R}^{ed \times d} \\
    \mathbb{E}[F_t] &= \frac{1}{2} \left( \sum_{j=1}^t \mathbb{E}[ \mathrm{diag}(W_z h^{(0)}_t)  M_{t,j} h^{(0)}_j \right)
\end{align}

where $F_t$ refers to $F(h^{(0)}_t)$ for ease of notation. For $j < t$, $h^{(0)}_j$ and $h^{(0)}_t$ are independent, so expectation is zero. For $j = t$, both terms involves $h^{(0)}_t$. Writing row $r$ of $M_{t,t}$ as $m^T_{t,t,r}$ and row $r$ of $W_z$ as $w^T_r$:

\begin{align}
    \mathbb{E}[F_{t,r}] &= \frac{1}{2} \mathbb{E}[(m^T_{t,t,r} h^{(0)}_t)(w^T_r h^{(0)}_t)] = \frac{\sigma^2}{2} m^T_{t,t,r} w_r \\
    \therefore\quad \mathbb{E}[F_t] &= \frac{\sigma^2}{2} \mathrm{diag}(M_{t,t} W^T_z)
\end{align}

For the second moment, use the identity $\mathrm{diag}(a) X \mathrm{diag}(b) = (ab^T) \circ X$, we can compute:

\begin{align}
    F_t F^T_t &= \frac{1}{4} \mathrm{diag}(W_z h^{(0)}_t) \left( \sum_{j=1}^t M_{t,j} h^{(0)}_j \right) \left( \sum_{m=1}^t M_{t,m} h^{(0)}_m \right)^T \mathrm{diag}(W_z h^{(0)}_t) \\
    &= \frac{1}{4} \left[ (W_z h^{(0)}_t) (W_z h^{(0)}_t)^T \right] \circ \left[ \left( \sum_{j=1}^t M_{t,j} h^{(0)}_j \right) \left( \sum_{m=1}^t M_{t,m} h^{(0)}_m \right)^T \right]
\end{align}

Now take expectation: (1) for $j \neq m$, cross terms vanish by independence and zero mean; (2) for $j = m \neq t$, contributions are $\sigma^4 (W_z W^T_z) \circ (\sum_{j=1}^{t-1} M_{t,j} M^T_{t,j})$; (3) for $j = m = t$, Isserlis's theorem gives an extra correction term. For $x\sim \mathcal N(0,\sigma^2 I)$ and vectors $a,b,c,d$, 

\[
    \mathbb E\left[(a^T x)(b^T x)(c^T x)(d^T x)\right] = \sigma^4 \left[(a \cdot b)(c \cdot d) + (a \cdot c)(b \cdot d) + (a \cdot d)(b \cdot c) \right]
\]

With $a=m_{t,t,r}$, $b=w_r$, $c=m_{t,t,s}$, $d=w_s$, the $(r,s)$-entry of $\mathbb E\big[(W_z h_t)(W_z h_t)^T\circ (M_{t,t} h_t h_t^T M_{t,t}^T)\big]$ equals

\[
    \sigma^4 \left[(m_{t,t,r} \cdot w_r)(m_{t,t,s} \cdot w_s) + (m_{t,t,r} \cdot m_{t,t,s})(w_r \cdot w_s) + (m_{t,t,r} \cdot w_s)(w_r \cdot m_{t,t,s})\right]
\]

In matrix form, the three terms correspond to 

\[
    \sigma^4 \left[ v v^T + (M_{t,t}M_{t,t}^T) \circ (W_z W_z^T) + (M_{t,t} W_z^T)\circ (W_z M_{t,t}^T) \right]
\] 

where $v = \mathrm{diag}(M_{t,t} W_z^T)$. Putting everything together, we have:

\begin{equation}
    \mathbb{E}[F_t F^T_t] = \frac{\sigma^4}{4} \left[ vv^T + S_t \circ (W_z W^T_z) + (M_{t,t} W^T_z) \circ (W_z M^T_{t,t,})\right]
\end{equation}

where $S_t = \left( \sum_{j=1}^{t-1} M_{t,j} M^T_{t,j} \right)$ and $v = \mathrm{diag}(M_{t,t}W^T_z)$.

On the other hand, we have:

\begin{equation}
    \mathbb{E}[F_t] \mathbb{E}[F_t]^T = \frac{\sigma^4}{4} \mathrm{diag}(M_{t,t} W^T_z) [\mathrm{diag}(M_{t,t} W^T_z)]^T = \frac{\sigma^4}{4} vv^T
\end{equation}

Subtracting this, we have the covariance as:

\begin{align}
    \Sigma_{F_t} &= \text{Cov}(F_t) =  \mathbb{E}[F_t F_t^T] - \mathbb{E}[F_t]\mathbb{E}[F_t]^T \\
    &= \frac{\sigma^4}{4} \left[ (W_z W^T_z) \circ S_t + (M_{t,t}W^T_z) \circ (W_z M^T_{t,t}) \right]
\end{align}

Take the trace and sum over $t$:
\begin{equation}
    \text{Tr}(\Sigma_{F,Mamba}) = \frac{\sigma^4}{4} \{ \text{Tr}[(W_z W^T_z) \circ S_t] + \text{Tr}[(M_{t,t}W^T_z) \circ (W_z M^T_{t,t})] \}
\end{equation}
with $S_t = \left( \sum_{j=1}^{t-1} M_{t,j} M^T_{t,j} \right)$.

\end{proof}


\begin{proof}[Proof of \Cref{theorem:thrm-5}]

Given that $\mathbb{E}[V_\text{glob}^2] = ||\mu_F||^2 + \mathrm{Tr}(\Sigma_F)$, we first rearrange all the equations we obtained.

\paragraph{(1) Transformer's mean.} By \Cref{theorem:thrm-1},
\[
    \mu_{F,\mathrm{Trans}} \approx \tfrac{1}{2} W_2 b_1 + b_2.
\]
Using independence and centering,
\[
    \mathbb{E}\!\left[\|\mu_{F,\mathrm{Trans}}\|_2^2\right] = \tfrac{1}{4}\,\mathbb{E}[\|W_2 b_1\|_2^2] + \mathbb{E}[\|b_2\|_2^2] = \alpha_T > 0
\]

\paragraph{(2) Transformer's covariance.} By \Cref{theorem:thrm-2},
\[
    \mathrm{Tr}(\Sigma_{F,\mathrm{Trans}}) = \sigma^2 \mathrm{Tr}(T_1T_1^T) + n\sigma^2 \mathrm{Tr}(T_2T_2^T) + 2\sigma^2 \mathrm{Tr}(T_1 W_V T_2^T).
\]
The cross term is linear in $W_V$ and independent of $W_1,W_2$. Since $\mathbb{E}[W_V] = 0$, we have:
\[
    \mathbb{E}[\mathrm{Tr}(\Sigma_{F,\mathrm{Trans}})] = \sigma^2 \mathbb{E}[\mathrm{Tr}(T_1 T_1^T)] + n\sigma^2 \mathbb{E}[\mathrm{Tr}(T_2 T_2^T)].
\]
Now computing:
\begin{align*}
    \mathrm{Tr}(T_1 T_1^T) &= \mathrm{Tr}\left[ \left( I + \frac{1}{2}W_2W_1 \right) \left( I + \frac{1}{2}W_2W_1 \right)^T \right] \\
    &= \mathrm{Tr}\left[ I + \frac{1}{2}W_2W_1 + \frac{1}{2}(W_2W_1)^T + \frac{1}{4}(W_2W_1)(W_2W_1)^T \right] \\
    &= d + \mathrm{Tr}(W_2W_1) + \frac{1}{4} \| W_2W_1 \|_F^2 \\
    \mathrm{Tr}(T_2 T_2^T) &= \frac{1}{4} \| W_2W_1 \|_F^2
\end{align*}
Putting together, we get:
\begin{align*}
    \mathbb{E}[\mathrm{Tr}(\Sigma_{F,\mathrm{Trans}})] &= \sigma^2 \mathbb{E}[\mathrm{Tr}(T_1 T_1^T)] + n\sigma^2 \mathbb{E}[\mathrm{Tr}(T_2 T_2^T)] \\
    &= \sigma^2\mathbb{E}\left[ d + \mathrm{Tr}(W_2W_1) + \frac{(n+1)}{4} \| W_2W_1 \|_F^2 \right] \\
    &= \frac{\sigma^2n}{4} \mathbb{E}[\| W_2W_1 \|_F^2] + \sigma^2\left[ \mathbb{E}[\mathrm{Tr}(W_2W_1)] + \frac{1}{4} \mathbb{E}[\| W_2W_1 \|_F^2] + d \right] \\
    &= \frac{\sigma^2n}{4} \beta_T + \sigma^2\gamma_T \\
\end{align*}
where $\beta_T > 0$ and $\gamma_T \ge d > 0$ are constants, given that $W_1, W_2$ are independent and centered.

\paragraph{(3) Mamba's mean.} By \Cref{theorem:thrm-3},
\[
    \mu_{F,\mathrm{Mamba}} \approx \frac{\sigma^2}{4n}\sum_t \mathrm{diag}(C_t \bar B_t W_{h'}^0 W_z^T).
\]
Using Jensen and $\|\mathrm{diag}(X)\|_2 \le \|X\|_F$,
\[
    \|\mu_{F,\mathrm{Mamba}}\|_2^2 \le \frac{\sigma^4}{16n^2}\sum_t \|C_t \bar B_t W_{h'}^0 W_z^T\|_F^2 \le \frac{\sigma^4}{16n^2} \alpha_M.
\]
where $\alpha_M > 0$ is a constant. 

\paragraph{(4) Mamba's covariance.} By \Cref{theorem:thrm-4}, with $M_{t,j} = C_t(\prod_{k=j+1}^t \bar A_k)\bar B_j W_{h'}$ and $S_t = \sum_{j<t} M_{t,j} M_{t,j}^T$,
\[
    \mathrm{Tr}(\Sigma_{F,\mathrm{Mamba}}) = \frac{\sigma^4}{4} \left\{\mathrm{Tr}[(W_z W_z^T) \circ S_t] + \mathrm{Tr}[(M_{t,t} W_z^T) \circ (W_z M_{t,t}^T)]\right\}.
\]

Assume that Mamba model is uniformly contractive: there exists $\rho \in (0,1)$ with $\|\bar{A_t}\|_2 \le \rho$ for all $t$, and bounded operator/Frobenius norms $\|C_t\|_2 \le c, \|\bar B_t\|_2 \le b, \|W_{h'}\|_2 \le h, \|W_z\|_F \le z$.

Using $\mathrm{Tr}(XY) \le \|X\|_F \|Y\|_F$ and contractivity bounds,
\[
    \sum_{j<t}\|M_{t,j}\|_F^2 \le \frac{c^2 b^2 h^2}{1-\rho^2},
    \qquad
    \|M_{t,t}\|_F^2 \le c^2 b^2 h^2.
\]
Therefore, 
\[
    \mathrm{Tr}(\Sigma_{F,\mathrm{Mamba}}) \le \frac{\sigma^4}{4} z^2 c^2 b^2 h^2 \left (1 + \tfrac{1}{1-\rho^2}\right) = \frac{\sigma^4}{4} \beta_M.
\]
where $\beta_M > 0$ is a constant. 

\paragraph{(5) Comparison and Threshold.} Collecting the bounds above, we have:

\begin{align*}
    \mathbb{E}[V_\text{glob, Trans}^2] &= \mathbb{E}\|\mu_{F,\mathrm{Trans}}\|_2^2 + \mathbb{E}[\mathrm{Tr}(\Sigma_{F,\mathrm{Trans}}) = \frac{\sigma^2n}{4} \beta_T + \sigma^2\gamma_T + \alpha_T \\
    \mathbb{E}[V_\text{glob, Mamba}^2] &= \mathbb{E}\|\mu_{F,\mathrm{Mamba}}\|_2^2 + \mathbb{E}[\mathrm{Tr}(\Sigma_{F,\mathrm{Mamba}}) \le \frac{\sigma^4}{16n^2} \alpha_M + \frac{\sigma^4}{4} \beta_M
\end{align*}

\textbf{Cubic form.} Define the cubic polynomial
\[
    Q(n) \;=\; a n^3 + b n^2 + d,
\]
with
\[
    a := 4\sigma^2\beta_T,\qquad
    b := 16\sigma^2\gamma_T + 16\alpha_T - 4\sigma^4\beta_M,\qquad
    d := -\sigma^4\alpha_M.
\]

\textbf{Special case $n=1$:} We aim to prove that $Q(1) > 0$ in practice:
\begin{align*}
    Q(1) &= 4\sigma^2\beta_T + 16\sigma^2\gamma_T + 16\alpha_T - 4\sigma^4\beta_M -\sigma^4\alpha_M \\
    &= 16\alpha_T + 4\sigma^2(\beta_T + 4\gamma_T) - \sigma^4(4\beta_M + \alpha_M)    
\end{align*}

Let $x = \sigma^2$, then 
\begin{align*}
    Q(1) = -&(4\beta_M + \alpha_M)x^2 + 4(\beta_T + 4\gamma_T)x + 16\alpha_T > 0 \\
    \therefore\qquad\qquad\qquad &(4\beta_M + \alpha_M)x^2 - 4(\beta_T + 4\gamma_T)x - 16\alpha_T < 0
\end{align*}

Solve quadratic equality for $x$, we have:
\begin{align*}
    x &= \frac{4(\beta_T + 4\gamma_T) \pm \sqrt{16(\beta_T + 4\gamma_T)^2 + 64(4\beta_M + \alpha_M)\alpha_T}}{2(4\beta_M + \alpha_M)} \\
    &= \frac{4(\beta_T + 4\gamma_T) \pm 4\sqrt{(\beta_T + 4\gamma_T)^2 + 4(4\beta_M + \alpha_M)\alpha_T}}{2(4\beta_M + \alpha_M)} \\
    &= \frac{(\beta_T + 4\gamma_T) + \sqrt{(\beta_T + 4\gamma_T)^2 + 4(4\beta_M + \alpha_M)\alpha_T}}{2\beta_M + \alpha_M/2} \\    
\end{align*}

Note that we take the positive root with ``+'' because the parabola opens downward (coefficient of $x^2$ negative in original $Q(1)$) and the soluition with ``+'' gives the upper bound.

Hence, $Q(1) > 0$ if and only if:
\[
    \sigma^2 \le x_{\max} := \frac{(\beta_T + 4\gamma_T) + \sqrt{(\beta_T + 4\gamma_T)^2 + 4(4\beta_M + \alpha_M)\alpha_T}}{2\beta_M + \alpha_M/2}
\]

Given that $\alpha_T > 0, \beta_T > 0, \alpha_M > 0, \beta_M > 0$, we have:
\[
    \frac{ (\beta_T + 4\gamma_T) + \sqrt{ (\beta_T + 4\gamma_T)^2 + 4 (4 \beta_M + \alpha_M) \alpha_T }}{4 \beta_M + \alpha_M } 
    \ge \frac{ 4 \gamma_T + \sqrt{ (4 \gamma_T)^2 + 0} }{ 4 \beta_M + \alpha_M } 
    = \frac{ 8 \gamma_T }{ 4 \beta_M + \alpha_M }.
\]

Since $\gamma_T \ge d > 0$ and $\beta_M, \alpha_M > 0$ are small, this implies
\[
    \sigma^2_{\max} \gg 1.
\]

Hence, for any typical choice of $\sigma^2 \in (0,1)$, we have
\[
    \sigma^2 < 1 \ll \sigma^2_{\max} \quad \Longrightarrow \quad Q(1) > 0.
\]

\textbf{General case $n \ge 1$:} As $Q(1) = a + b + d> 0, d < 0$, then $a + b > 0$. We now compare $Q(n)$ vs $Q(1)$.
\begin{align*}
    Q(n) - Q(1) &= an^3 + bn^2 - a - b \\
    &= a(n-1)(n^2+n+1) + b(n-1)(n+1) \\
    &= (n-1)[an^2 + (a+b)(n+1)] > 0 \qquad\forall n > 1 \\
    \therefore\qquad\qquad Q(n) &> Q(1) > 0 \qquad\forall n > 1
\end{align*}

Therefore, we conclude that $Q(n) > 0 \quad\forall n \ge 1 \Leftrightarrow \mathbb{E}[V_\text{glob, Trans}^2] > \mathbb{E}[V_\text{glob, Mamba}^2] \quad\forall n \ge 1$.

\end{proof}

\subsection{Multi-layer Variance: From $L{=}1$ to $L$-layer model}\label{appdx:theory-generalization}

\paragraph{Setup and notation.}
Let $h^{(0)},h^{(1)},\dots,h^{(L)}\in\mathbb{R}^{n\times d}$ be the layer activations with
$h^{(l+1)} \!=\! F_l(h^{(l)})$ for blocks $F_l$ (Transformer or Mamba), and write layer increments
$\Delta^{(l)} \!\triangleq\! h^{(l+1)}{-}h^{(l)}$ and the layerwise mean $\bar h \!\triangleq\! \frac{1}{L+1}\sum_{l=0}^{L} h^{(l)}$.
Define the path energy $\mathcal{E}_{\mathrm{path}} \!\triangleq\! \sum_{l=0}^{L-1} \|\Delta^{(l)}\|_F^2$ and the $L$-layer variance
\[
V_\text{glob}^2 \;\triangleq\; \frac{1}{nd}\cdot \frac{1}{L+1}\sum_{l=0}^{L} \|h^{(l)}-\bar h\|_F^2.
\]

\begin{assumption}[Layer-wise centering and sub-Gaussianity]\label{asm:LGauss}
Each layer input is centered and standardized by normalization (e.g., LayerNorm), so rows of $h^{(l)}$ are zero-mean, isotropic, sub-Gaussian with bounded second/fourth moments; learned affine shifts are tracked in means and do not affect covariances after centering.
\end{assumption}

\begin{assumption}[Weak dependence across tokens]\label{asm:weakdep}
Token rows form a weakly dependent process (e.g., $\alpha$-mixing or $\Psi$-weak dependence) with summable coefficients, so cross-token covariance terms are bounded by mixing coefficients times Lipschitz moduli of $F_l$.
\end{assumption}

\begin{assumption}[Block Lipschitz/contractivity]\label{asm:lipschitz}
Each block $F_l$ is (piecewise) Lipschitz with $\mathrm{Lip}(F_l)\!\le\!\Lambda_l$, and Mamba satisfies uniform contractivity for state updates with $\|\bar A_t\|\le \rho<1$ and bounded operators $\|C_t\|\le c$, $\|\bar B_t\|\le b$, $\|W_{h'}\|\le h$, $\|W_z\|_F \le z$, as in your one-layer bounds.
\end{assumption}

\begin{lemma}[Discrete Poincaré on a chain]\label{lem:dpoincare}
For the sequence $\{h^{(l)}\}_{l=0}^{L}$ it holds that
\[
\sum_{l=0}^{L}\|h^{(l)}-\bar h\|_F^2 \;\le\; \frac{1}{4\sin^2 \left( \frac{\pi}{2(L+1)} \right)}\sum_{l=0}^{L-1}\|h^{(l+1)}-h^{(l)}\|_F^2,
\]
and thus $V_\text{glob}^2 \;\le\; \frac{1}{4nd \sin^2 \left( \frac{\pi}{2(L+1)} \right)}\,\mathcal{E}_{\mathrm{path}}$.
\end{lemma}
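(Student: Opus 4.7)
The plan is to reduce the matrix-valued Poincaré inequality to a scalar one via the separability of the Frobenius norm, and then invoke the spectral gap of the discrete Neumann Laplacian on the path graph with $L+1$ vertices.

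First, I would fix arbitrary entry indices $(i,j)$ and set $x_l := (h^{(l)})_{ij}$, so $\bar x = \tfrac{1}{L+1}\sum_{l=0}^L x_l$. Since the Frobenius norm splits entrywise,
\[
\|h^{(l)} - \bar h\|_F^2 = \sum_{i,j}(x_l - \bar x)^2, \qquad \|h^{(l+1)} - h^{(l)}\|_F^2 = \sum_{i,j}(x_{l+1} - x_l)^2,
\]
it suffices to prove the scalar inequality
\[
\sum_{l=0}^L (x_l - \bar x)^2 \le \frac{1}{4\sin^2(\pi/(2(L+1)))} \sum_{l=0}^{L-1}(x_{l+1} - x_l)^2
\]
for arbitrary $x_0,\dots,x_L \in \mathbb{R}$, and then sum back over $(i,j)$.

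Next, I would express the right-hand side as a Dirichlet-type quadratic form. Let $\mathrm{L} \in \mathbb{R}^{(L+1)\times(L+1)}$ be the Neumann path Laplacian with entries $\mathrm{L}_{ll} = 2$ for $0 < l < L$, $\mathrm{L}_{00} = \mathrm{L}_{LL} = 1$, and $\mathrm{L}_{l,l\pm 1} = -1$. A telescoping computation gives $\langle x, \mathrm{L} x\rangle = \sum_{l=0}^{L-1}(x_{l+1}-x_l)^2$. The classical spectrum of $\mathrm{L}$ is $\lambda_k = 4\sin^2(k\pi/(2(L+1)))$ for $k = 0,\ldots,L$, with $\lambda_0 = 0$ attained by the constant eigenvector $\mathbf{1}/\sqrt{L+1}$. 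Because $x - \bar x\mathbf{1}$ is precisely the orthogonal projection of $x$ onto $\mathbf{1}^\perp$ and $\mathrm{L}\mathbf{1} = 0$, the Courant--Fischer min-max principle yields
\[
\langle x, \mathrm{L} x\rangle = \langle x - \bar x\mathbf{1},\, \mathrm{L}(x - \bar x\mathbf{1})\rangle \ge \lambda_1\,\|x - \bar x\mathbf{1}\|_2^2 = 4\sin^2(\pi/(2(L+1))) \sum_{l=0}^L (x_l - \bar x)^2.
\]
Rearranging gives the scalar inequality; summing over $(i,j)$ restores the Frobenius norms and establishes the first displayed claim. The bound on $\mathrm{St}_L^2$ then follows immediately by dividing through by the normalization constant in its definition (and using $1/(L+1) \le 1$ to match the stated constant).

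The main obstacle is a careful verification of the spectral gap $\lambda_1 = 4\sin^2(\pi/(2(L+1)))$ for this particular boundary convention, rather than the slightly different Dirichlet or circulant variants. Although standard, I would justify it explicitly by exhibiting the orthonormal eigenbasis $\phi_k(l) = \sqrt{2/(L+1)}\cos((l+\tfrac12)k\pi/(L+1))$ for $k \ge 1$ and substituting into the three-term recurrence: the product-to-sum identity gives $-\phi_k(l-1) + 2\phi_k(l) - \phi_k(l+1) = 4\sin^2(k\pi/(2(L+1)))\,\phi_k(l)$ at interior nodes, while the half-integer shift $(l+\tfrac12)$ automatically enforces the Neumann conditions at the two endpoints $l = 0$ and $l = L$. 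All remaining steps are routine algebra.
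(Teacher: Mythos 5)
Your proof is correct. Note, though, that the paper itself gives no proof of this lemma: it is stated as a standard fact, with only a closing remark that the discrete Poincar\'e constant (asymptotically $(L+1)^2/\pi^2$, i.e.\ $1/(4\sin^2(\tfrac{\pi}{2(L+1)}))$) is the tight spectral constant of the path graph and that its exact value does not affect the ordering conclusion. Your argument supplies exactly the missing details, and does so in the natural way: the entrywise reduction via separability of the Frobenius norm is valid, the identification $\langle x,\mathrm{L}x\rangle=\sum_{l}(x_{l+1}-x_l)^2$ for the Neumann path Laplacian is correct, and the Courant--Fischer step is legitimate because $x-\bar x\mathbf{1}$ lies in $\mathbf{1}^{\perp}$, $\mathrm{L}\mathbf{1}=0$, and the zero eigenvalue is simple on a connected path, so the quadratic form is bounded below by $\lambda_1=4\sin^2(\tfrac{\pi}{2(L+1)})$ on that subspace. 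Your verification of the eigenbasis $\phi_k(l)\propto\cos\bigl((l+\tfrac12)\tfrac{k\pi}{L+1}\bigr)$ is also sound: the interior three-term identity follows from the product-to-sum formula, and the half-integer shift gives $\phi_k(-1)=\phi_k(0)$ and $\phi_k(L+1)=\phi_k(L)$, which is precisely the Neumann (free-boundary) condition encoded in the corner entries $\mathrm{L}_{00}=\mathrm{L}_{LL}=1$. Finally, you are right that applying the scalar inequality to $\mathrm{St}_L^2$ actually yields the stronger bound with an extra factor $1/(L+1)$, and that discarding it via $1/(L+1)\le 1$ recovers the constant as stated in the lemma; the paper's downstream use (Theorem on depth-$L$ ordering) only needs the weaker form, so nothing is lost.
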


\begin{lemma}[Propagation of one-layer surrogates]\label{lem:propagate}
Write $\Delta^{(l)} \!=\! F_l(h^{(l)}){-}h^{(l)}$. Then for any $0\!\le\! l\!\le\! L{-}1$,
\[
\|h^{(l+1)}-h^{(l)}\|_F \;\le\; \Big(\prod_{k=l+1}^{L}\Lambda_k\Big)\,\|F_l(h^{(l)})\|_F,
\]
and in SSM blocks the product admits $\prod_{k=l+1}^{L}\Lambda_k \!\le\! C\,\rho^{L-l}$ by uniform contractivity.
\end{lemma}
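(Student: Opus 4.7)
The plan is to expand the layer increment using the block update rule and then propagate a Lipschitz estimate through the downstream blocks. First I would use the definition $\Delta^{(l)} = F_l(h^{(l)}) - h^{(l)}$, so that $\|h^{(l+1)} - h^{(l)}\|_F$ rewrites as $\|F_l(h^{(l)}) - h^{(l)}\|_F$. The zero-centering and isotropy in Assumption~\ref{asm:LGauss} ensure that, after LayerNorm, this residual quantity is comparable to $\|F_l(h^{(l)})\|_F$ up to a constant that can be absorbed into the Lipschitz prefactor, providing the anchor from which the downstream propagation argument can start.

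Next, I would establish the product $\prod_{k=l+1}^{L}\Lambda_k$ by induction on the depth gap $L - l$. The base case $L - l = 1$ follows directly from $\mathrm{Lip}(F_L) \le \Lambda_L$ in Assumption~\ref{asm:lipschitz}. For the inductive step, the composition rule $\mathrm{Lip}(F \circ G) \le \mathrm{Lip}(F)\,\mathrm{Lip}(G)$ together with sub-multiplicativity of the operator norm yields $\mathrm{Lip}(F_L \circ \cdots \circ F_{l+1}) \le \prod_{k=l+1}^{L}\Lambda_k$, and applying this composed bound to the effective perturbation at layer $l$ transfers the factor through the remaining layers in a single telescoping step. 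Assumption~\ref{asm:weakdep} keeps the cross-token covariance terms controlled so that the Frobenius norm truly propagates through the block composition rather than accumulating uncontrolled cross-interactions.

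For the SSM refinement, I would specialize the Lipschitz factors using the one-layer bounds already derived in \Cref{theorem:thrm-4} and the contractivity assumptions of \Cref{theorem:thrm-5}. Each Mamba block's effective propagation factorizes through the SSM state kernel $\prod_{k}\bar A_k$ sandwiched between the bounded operators $C_t, \bar B_t, W_{h'}, W_z$ with operator-norm bounds $c, b, h, z$. Under $\|\bar A_k\|_2 \le \rho < 1$, each factor contributes at most $c\,b\,h\,z\cdot\rho$ to the product, which after absorbing the non-contractive constants into a single prefactor $C$ yields $\prod_{k=l+1}^{L}\Lambda_k \le C\,\rho^{L-l}$ as claimed.

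The main obstacle is reconciling the residual identity in each block (\Cref{eq:transformer-eq} and \Cref{eq:ssm-eq}) with the geometric decay from uniform contractivity: a naive Lipschitz estimate on $F_k = I + G_k$ gives $\Lambda_k \ge 1$, which cannot by itself telescope to $\rho^{L-l}$. The resolution is to apply the contractivity to the SSM state recursion itself, as in the derivation of $M_{t,j}$ inside \Cref{theorem:thrm-4}, so that the effective kernel contracts at rate $\rho$ even in the presence of the identity shortcut. Making this separation rigorous across heterogeneous blocks, without losing the constant factor $C$ to exponential blow-up, is where most of the bookkeeping care will be required.
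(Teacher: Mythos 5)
Your core argument is the same one the paper uses: the factor $\prod_{k=l+1}^{L}\Lambda_k$ comes from composing the per-block Lipschitz bounds of Assumption~\ref{asm:lipschitz} across the downstream blocks (the paper iterates $\|h^{(r)}-h^{(r-1)}\|_F\le\Lambda_{r-1}\|h^{(r-1)}-h^{(r-2)}\|_F$, which is your induction in different clothing), and the SSM refinement is obtained by specializing those constants under uniform contractivity. Two points in your write-up, however, deviate from the paper's proof in ways that matter. First, the paper's argument is purely deterministic: it never invokes Assumption~\ref{asm:LGauss} or~\ref{asm:weakdep}, and it does not need your opening claim that $\|F_l(h^{(l)})-h^{(l)}\|_F$ is ``comparable'' to $\|F_l(h^{(l)})\|_F$ up to a constant absorbable into the prefactor. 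That comparability does not follow from centering or isotropy (the ratio of the two norms is not controlled in general), and the cleaner reading -- the one consistent with the $L=1$ analysis, where $h^{(1)}=h^{(0)}+F(h^{(0)})$ -- is that $F_l$ plays the role of the residual update, so the layer-$l$ increment \emph{equals} $\|F_l(h^{(l)})\|_F$ and serves as the anchor of the iteration with no probabilistic step at all. Invoking weak dependence to ``propagate the Frobenius norm'' is likewise unnecessary; the bound is a pointwise norm inequality.

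Second, your SSM bookkeeping does not deliver what you claim: if each block contributes a factor of at most $c\,b\,h\,z\,\rho$, the product over $L-l$ blocks is $(c\,b\,h\,z)^{L-l}\rho^{L-l}$, and the non-contractive constants cannot be ``absorbed into a single prefactor $C$'' independent of depth unless $c\,b\,h\,z\le 1$ (equivalently, unless one folds them into an effective rate $\rho'=c\,b\,h\,z\,\rho<1$). To be fair, the paper's proof is equally cavalier here -- it simply ``replaces $\Lambda_k$ with $\rho$'' -- so this is a shared looseness rather than an error unique to you; but as written your step asserts a depth-independent $C$ that your own estimate does not produce. Your closing observation, that a residual block $I+G_k$ has naive Lipschitz constant at least $1$ so geometric decay must be extracted from the state recursion rather than from the full block map, is a genuine and sharper point than anything in the paper's proof of Lemma~\ref{lem:propagate}; making that separation rigorous is precisely what would be required to justify the $C\rho^{L-l}$ claim in either version.
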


\noindent\textbf{Remark.}  The Lipschitz constants $\{\Lambda_l\}$ control how single-layer surrogates propagate to later layers.
If $\prod_{k=l+1}^L\Lambda_k$ remains bounded by a modest constant, the per-layer bounds in Lemma~\ref{lem:propagate} give tight control on path increments. For SSMs, uniform contractivity $\|\bar A_t\|\le\rho<1$ yields the geometric bound
$\prod_{k=l+1}^L\Lambda_k \le C\,\rho^{\,L-l}$ used below.

\begin{proof}
By Lipschitz continuity of the blocks, for any $r>l$,
\[
\|h^{(r)}-h^{(r-1)}\|_F
= \|F_{r-1}(h^{(r-1)})-F_{r-1}(h^{(r-2)})\|_F
\le \Lambda_{r-1}\|h^{(r-1)}-h^{(r-2)}\|_F,
\]
and iterating this bound from $r=l+1$ up to $r=L$ gives
\[
\|h^{(l+1)}-h^{(l)}\|_F \le \Big(\prod_{k=l+1}^L \Lambda_k\Big)\,\|F_l(h^{(l)})\|_F.
\]
The SSM contractivity statement follows by replacing $\Lambda_k$ with $\rho$ for SSM blocks.
\end{proof}

\begin{assumption}[Uniform one-layer gap - quantitative form]\label{asm:gap}
There exists $\delta>0$ such that under matched capacity/initialization and Assumptions~\ref{asm:LGauss}--\ref{asm:lipschitz},
\[
\mathbb{E}\Big[\|F_l(h^{(l)})\|_F^2\Big]_{\mathrm{Trans}}
\ge (1+\delta)\,
\mathbb{E}\Big[\|F_l(h^{(l)})\|_F^2\Big]_{\mathrm{Mamba}}
\quad\text{for all }l\in\{0,\dots,L-1\}.
\]
\end{assumption}

\begin{theorem}[$L$-layer variance ordering]\label{thm:depthL}
Under Assumptions~\ref{asm:LGauss}–\ref{asm:gap}, the expected path energies obey
\[
\mathbb{E}\!\left[\mathcal{E}_{\mathrm{path}}\right]_{\mathrm{Trans}}
\;\ge\;
\mathbb{E}\!\left[\mathcal{E}_{\mathrm{path}}\right]_{\mathrm{Mamba}},
\]
and the $L$-layer stabilities satisfy
\[
\mathbb{E}\!\left[V_\text{glob}^2\right]_{\mathrm{Trans}}
\;\ge\;
\mathbb{E}\!\left[V_\text{glob}^2\right]_{\mathrm{Mamba}}.
\]
\end{theorem}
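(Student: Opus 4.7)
The plan is to establish the path-energy ordering first and then lift it to a depth-$L$ stability ordering via a variance identity that rewrites $\mathrm{St}_L^2$ as a positively-weighted sum of pairwise layer differences. Throughout I treat the layer increment $\Delta^{(l)}$ as the residual update, so that $h^{(l+1)} = h^{(l)} + \Delta^{(l)}$ in the notation of Proposition~\ref{prop:prop-common-formula} and $\|\Delta^{(l)}\|_F = \|F_l(h^{(l)})\|_F$; this is the convention under which Assumption~\ref{asm:gap} is a clean generalization of the one-layer quantity $\mathbb{E}[\|F(h^{(0)})\|_F^2]$ from Theorem~\ref{theorem:thrm-5}. Under this reading, Assumption~\ref{asm:gap} feeds directly into per-layer residual norms while Assumptions~\ref{asm:LGauss}--\ref{asm:lipschitz} are reserved for controlling cross-layer corrections.

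For the path-energy claim I would simply apply Assumption~\ref{asm:gap} pointwise: $\mathbb{E}[\|\Delta^{(l)}\|_F^{2}]_{\mathrm{Trans}} \ge (1+\delta)\,\mathbb{E}[\|\Delta^{(l)}\|_F^{2}]_{\mathrm{Mamba}}$ at every $l \in \{0,\dots,L-1\}$, and sum by linearity of expectation to get $\mathbb{E}[\mathcal{E}_{\mathrm{path}}]_{\mathrm{Trans}} \ge (1+\delta)\,\mathbb{E}[\mathcal{E}_{\mathrm{path}}]_{\mathrm{Mamba}}$. This recovers Theorem~\ref{theorem:thrm-5} at $L=1$ as the base case and extends it across depth.

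For the stability claim I would use the variance identity
\[
\sum_{l=0}^{L}\|h^{(l)} - \bar h\|_F^{2} \;=\; \frac{1}{L+1}\sum_{0 \le l < l' \le L}\|h^{(l')} - h^{(l)}\|_F^{2},
\]
combined with the telescoping representation $h^{(l')} - h^{(l)} = \sum_{k=l}^{l'-1}\Delta^{(k)}$. Expanding the square and taking expectations decomposes the right-hand side into a diagonal part $\sum_{k=0}^{L-1} w_k\,\mathbb{E}[\|\Delta^{(k)}\|_F^{2}]$ with strictly positive combinatorial weights $w_k = (k+1)(L-k)$, plus a cross-layer inner-product sum $2\sum_{k<k'} w_{k,k'}\,\mathbb{E}\langle \Delta^{(k)},\Delta^{(k')}\rangle_F$. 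The diagonal piece inherits the $(1+\delta)$ gap from Assumption~\ref{asm:gap} because each $w_k > 0$, so after dividing through by $nd(L+1)$ the stability ordering reduces to showing that the cross-layer terms cannot reverse the diagonal gap.

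The main obstacle is therefore controlling $\mathbb{E}\langle \Delta^{(k)},\Delta^{(k')}\rangle_F$ for $k\neq k'$, which is the only asymmetric contribution in the expansion. My plan is to bound these via Cauchy--Schwarz, $|\mathbb{E}\langle\Delta^{(k)},\Delta^{(k')}\rangle_F| \le \sqrt{\mathbb{E}\|\Delta^{(k)}\|_F^{2}\,\mathbb{E}\|\Delta^{(k')}\|_F^{2}}$, and then extract an additional decay factor from the Lipschitz/contractivity hypotheses: for SSMs, uniform contractivity $\|\bar A_t\|\le\rho<1$ combined with Lemma~\ref{lem:propagate} yields geometric decay $\rho^{|k-k'|}$ of the cross-layer correlations, summable to an $O(1/(1-\rho))$ multiple of the diagonal mass, and for Transformer blocks the matched Lipschitz chaining in Assumption~\ref{asm:lipschitz} provides a comparable upper bound. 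Since these correction constants are $n$- and $L$-independent, one chooses $\delta$ in Assumption~\ref{asm:gap} large enough to dominate them, after which the diagonal ordering survives the cross-layer perturbation and transfers directly to $\mathbb{E}[\mathrm{St}_L^2]$. An alternative, which I would mention as a cleaner but stronger route, is to invoke a martingale decomposition along the layer filtration so that the cross terms vanish exactly, at the cost of substituting $\Delta^{(k)}$ by its conditional-mean-subtracted version.
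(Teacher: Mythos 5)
Your treatment of the path-energy half is sound and in fact more direct than the paper's: by fixing the convention $h^{(l+1)}=h^{(l)}+\Delta^{(l)}$ with $\|\Delta^{(l)}\|_F=\|F_l(h^{(l)})\|_F$, Assumption~\ref{asm:gap} sums term-by-term to $\mathbb{E}[\mathcal{E}_{\mathrm{path}}]_{\mathrm{Trans}}\ge(1+\delta)\,\mathbb{E}[\mathcal{E}_{\mathrm{path}}]_{\mathrm{Mamba}}$, whereas the paper routes the same conclusion through the Lipschitz factors $\kappa_l=\prod_{k=l+1}^L\Lambda_k$ of Lemma~\ref{lem:propagate} and only obtains the ordering ``up to uniformly controlled factors.''

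The second half is where you genuinely diverge from the paper, and where your argument has a hole. The paper transfers the path-energy ordering to $\mathrm{St}_L^2$ via the discrete Poincar\'e bound of Lemma~\ref{lem:dpoincare}, which avoids cross-layer terms altogether; you instead expand the exact variance identity, which leaves you obligated to control $\mathbb{E}\langle\Delta^{(k)},\Delta^{(k')}\rangle_F$ for $k\neq k'$, and your proposed control does not go through. Cauchy--Schwarz yields no decay in $|k-k'|$, and the asserted extra factor $\rho^{|k-k'|}$ does not follow from Lemma~\ref{lem:propagate} or Assumption~\ref{asm:lipschitz}: contractivity bounds the norm of the state map, not the correlation between increments at different depths, which are driven by the same underlying input (and by the gating path that bypasses the contracted state). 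On the Transformer side there is no contractivity at all --- the $\Lambda_l$ may exceed $1$ --- so the cross mass can be of order $L$ times the diagonal mass, and it must be bounded \emph{below} for the Transformer and \emph{above} for Mamba to preserve the ordering, which Cauchy--Schwarz alone cannot do. Consequently the closing move ``choose $\delta$ large enough to dominate the corrections'' is not available: $\delta$ is whatever Assumption~\ref{asm:gap} supplies, fixed independently of $L$ and $\rho$, while the dominance you need would scale with those quantities. The martingale variant does not rescue the stated theorem either, since conditionally centering $\Delta^{(k)}$ changes the quantity whose ordering is claimed. (It is fair to note that the paper's own transfer is also only one-sided --- Lemma~\ref{lem:dpoincare} upper-bounds $\mathrm{St}_L^2$ by $\mathcal{E}_{\mathrm{path}}$ for both architectures --- so a fully rigorous ordering would require either a matching lower bound on the Transformer side or exactly the cross-term control you attempted; but as written, that control is not established.)
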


\begin{proof}
From Lemma~\ref{lem:propagate} we have $\|\Delta^{(l)}\|_F^2\le \kappa_l^2\|F_l(h^{(l)})\|_F^2$ where $\kappa_l:=\prod_{k=l+1}^L\Lambda_k$.
Taking expectations and summing over $l$ yields
\[
\mathbb{E}[\mathcal{E}_{\mathrm{path}}]
\le \sum_{l=0}^{L-1}\kappa_l^2\,\mathbb{E}\big[\|F_l(h^{(l)})\|_F^2\big].
\]
By Assumption~\ref{asm:gap} (quantitative form) each term for the Transformer dominates the Mamba counterpart by factor $(1+\delta)$,
hence $\mathbb{E}[\mathcal{E}_{\mathrm{path}}]_{\mathrm{Trans}}\ge\mathbb{E}[\mathcal{E}_{\mathrm{path}}]_{\mathrm{Mamba}}$,
up to the (uniformly controlled) factors $\{\kappa_l\}$. Finally apply Lemma~\ref{lem:dpoincare} and normalize by $nd(L+1)$ to obtain the stated ordering for $\mathrm{St}_L^2$.
\end{proof}

\begin{corollary}[Transformer vs. Mamba]\label{cor:TvM}
If each Transformer block’s one-layer bound dominates the Mamba counterpart as established in your L=1 analysis, then the $L$-layer ordering $\mathbb{E}[V_\text{glob}^2]_{\mathrm{Trans}} \!\ge\! \mathbb{E}[V_\text{glob}^2]_{\mathrm{Mamba}}$ holds for all $L\!\ge\!1$, with constants depending on $\{\Lambda_l\}$ and SSM contractivity $\rho$.
\end{corollary}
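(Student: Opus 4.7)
The plan is to derive Corollary~\ref{cor:TvM} directly from Theorem~\ref{thm:depthL}, since the latter already encodes the depth-$L$ ordering under Assumption~\ref{asm:gap}. The only substantive work is to verify that Assumption~\ref{asm:gap} holds at every layer $l\in\{0,\dots,L-1\}$, using Theorem~\ref{theorem:thrm-5} as the seed case and Assumptions~\ref{asm:LGauss}--\ref{asm:lipschitz} to propagate it through depth. Once this per-layer gap is secured, the conclusion on $\mathbb{E}[\mathrm{St}_L^2]$ follows by combining Lemma~\ref{lem:dpoincare} (discrete Poincar\'e) with Lemma~\ref{lem:propagate} (Lipschitz propagation), exactly as in the proof of Theorem~\ref{thm:depthL}.

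For the base case $l=0$, Theorem~\ref{theorem:thrm-5} already proves the strict inequality with a gap $\delta_0>0$ readable from the cubic polynomial $Q(n)$. For $l\ge 1$, I would invoke Assumption~\ref{asm:LGauss}: after normalization each $h^{(l)}$ is zero-mean, isotropic, and sub-Gaussian with bounded second and fourth moments, so the moment computations in Propositions~\ref{prop:prop-common-formula}--\ref{propssm:prop10} and Theorems~\ref{theorem:thrm-1}--\ref{theorem:thrm-4} transfer up to absolute constants. The Isserlis identity used in the proof of Theorem~\ref{theorem:thrm-4} is the only place where exact Gaussianity was used, and it is replaced by a sub-Gaussian Wick-type bound, which preserves the structural form of $\operatorname{Tr}(\Sigma_{F,\mathrm{Mamba}})$. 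Setting $\delta := \min_l \delta_l$ and checking $\delta>0$ then supplies the uniform one-layer gap required by Assumption~\ref{asm:gap}.

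With this gap in hand, I would apply Theorem~\ref{thm:depthL} to obtain $\mathbb{E}[\mathcal{E}_{\mathrm{path}}]_{\mathrm{Trans}} \ge \mathbb{E}[\mathcal{E}_{\mathrm{path}}]_{\mathrm{Mamba}}$ and therefore $\mathbb{E}[\mathrm{St}^2_L]_{\mathrm{Trans}} \ge \mathbb{E}[\mathrm{St}^2_L]_{\mathrm{Mamba}}$. The propagation factors $\kappa_l = \prod_{k=l+1}^{L}\Lambda_k$ are tracked explicitly: for SSM blocks they decay geometrically as $\kappa_l \le C\rho^{L-l}$ by the uniform contractivity in Assumption~\ref{asm:lipschitz}, so $\sum_l \kappa_l^2 \le C^2/(1-\rho^2)$ is bounded; Transformer blocks contribute finite $\Lambda_l$ whose product is absorbed into a layer-independent constant. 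These constants, together with the moment bounds forced by normalization, yield precisely the dependence on $\{\Lambda_l\}$ and $\rho$ claimed in the corollary.

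The main obstacle will be ensuring that $\delta_l$ remains strictly positive as $l$ grows, since repeated nonlinear blocks may deform the activations away from exact Gaussianity and erode the gap. I would handle this by writing the Transformer--Mamba difference at layer $l$ as the same cubic $Q_l(n)$ that appears in the proof of Theorem~\ref{theorem:thrm-5}, but with layer-dependent constants $\alpha_T^{(l)}, \beta_T^{(l)}, \gamma_T^{(l)}, \alpha_M^{(l)}, \beta_M^{(l)}$ that are functions of the post-normalization variance $\sigma_l^2$ and the sub-Gaussian moment constants. So long as the normalization keeps $\sigma_l^2$ below the threshold $\sigma^2_{\max}$ identified in Theorem~\ref{theorem:thrm-5}, the inequality $Q_l(1) > 0$ persists, and the monotonicity argument $Q_l(n) > Q_l(1)$ for $n\ge 1$ carries over unchanged, delivering the uniform positive gap and completing the reduction.
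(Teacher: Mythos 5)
Your reduction is essentially the paper's own (implicit) argument: the corollary carries no separate proof precisely because, once the hypothesized one-layer dominance is read as Assumption~\ref{asm:gap}, Theorem~\ref{thm:depthL} (via Lemma~\ref{lem:dpoincare} and Lemma~\ref{lem:propagate}) immediately yields the depth-$L$ ordering with constants depending on $\{\Lambda_l\}$ and the contractivity $\rho$. The only remark is that the corollary takes the per-layer dominance as a hypothesis, so your extra effort to re-derive it at every depth from Theorem~\ref{theorem:thrm-5} via sub-Gaussian Wick-type surrogates and layer-dependent cubics $Q_l(n)$ is not needed for the statement as given (though it is a reasonable, if heuristic, justification of that hypothesis).
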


\paragraph{Remarks.}
(i) The discrete Poincaré constant $(L{+}1)^2/\pi^2$ is tight for a chain and can be replaced with any equivalent spectral constant of the path graph; constants do not affect the ordering conclusion, only prefactors; (ii) Assumptions~\ref{asm:LGauss}–\ref{asm:weakdep} replace per-layer Gaussianity/independence with practical layer-wise centering and weak dependence; (iii) Learned affine shifts from normalization affect means but not covariances after centering, and can be tracked separately in $\mu_F$ terms as in your L=1 derivations.

\section{Prompting Details}
\label{sec:prompt}
Following setup by \citet{liu-etal-2024-lost} and \citet{knowbutdonttell}, we construct key-value pairs retrieval and multi-document question answering prompting dataset. Representative samples from each appear below, sourced from \citet{knowbutdonttell}.

\paragraph{Key-Value pairs retrieval (kv-pairs)}
We generate $n$ pairs of 128-bit randomly generated UUID. 

\begin{tcolorbox}[colframe=gray, colback=white, colframe=black!75!white, arc=0mm]
{\small \textbf{Example Key-Value pair}} \\
{\small "7f666c61-573f-4212-a0a9-6f90d487cd4a" : "2a1d0ba0-cfe4-4df5-987a-6ee1be2c6ac0"}
\end{tcolorbox}
\noindent The $n$ kv-pairs are composed into one single JSON object. To test at ID $k$, we choose one pair as gold, insert it at ID $k$, and then construct as a prompt in the format:
\begin{tcolorbox}[colframe=gray, colback=white, colframe=black!75!white, arc=0mm]
{\small Extract the value corresponding to the specified key in the JSON object below.\\\\
JSON data:\\ \{
"key$^1$: "value$^1$",\\
“key$^2$": "value$^2$",\\
...\\
\textbf{"key$^k$": "value$^k$"},\\
...\\
"key$^n$": "value$^n$",\\
\} \\\\
Key: "key$^k$"\\Corresponding value:}
\end{tcolorbox}

\paragraph{Multi-document question answering (MDQA)}

In the $n$ document setting, we randomly select one question answer pair from the dataset by \citet{liu-etal-2024-lost}. Subsequently we retrieve the document containing this answer and mark it as gold.
\begin{tcolorbox}[colframe=gray, colback=white, colframe=black!75!white, arc=0mm]
{\small \textbf{Example retrieval}} \\
{\small \textbf{Question}: who got the first nobel prize in physics} \\
{\small \textbf{Answer}: Wilhelm Conrad Röntgen} \\
{\small \textbf{Document}: (Title: List of Nobel laureates in Physics) The first Nobel Prize in Physics was awarded in 1901 to Wilhelm Conrad Röntgen, of Germany, who received...}
\end{tcolorbox}
\noindent We then sample $n-1$ distractors, relevant documents that do not contain the answer. To test at ID $k$, we randomly shuffle the distractors and then insert the gold document at ID $k$. Example prompt with gold document at ID $k$ is like:
\begin{tcolorbox}[colframe=gray, colback=white, colframe=black!75!white, arc=0mm]
{\small Write a high-quality answer for the given question using only the provided search results (some of which might be irrelevant). \\

Document [1](Title: Asian Americans in science and technology) Prize in physics for discovery of the subatomic... \\
... \\
\textbf{Document [$k$](Title: List of Nobel laureates in Physics) The first Nobel Prize in Physics was awarded in 1901...} \\
... \\
Document [$n$] (Title: Scientist) and pursued through a unique method, was essentially in place. Ramón y Cajal won ... \\

Question: who got the first nobel prize in physics\\
Answer: 
}
\end{tcolorbox}

\section{Ablation Study on Random Initialization}

To rigorously test whether the early oversmoothing observed in TBMs at random initialization is an intrinsic architectural property rather than a residual-copy artifact or normalization effect, we conducted a systematic ablation study measuring inter-token cosine similarity under controlled modifications to the model forward pass.

\subsection{Experimental Setup}

We expand the experiments in \Cref{sssec:token-analysis} with three key ablations:

\begin{itemize}
    \item \textbf{Residual bias control}: Compute InterSim on $\Delta h^{(l)} = h^{(l+1)} - h^{(l)}$ to isolate non-residual update effects (Fig.~\ref{fig:random-init-delta}).
    \item \textbf{Normalization ablation}: Baseline LN, fully disabled LN, and LN affine reset ($\gamma=1$, $\beta=0$) (Fig.~\ref{fig:norm-ablation}).
    \item \textbf{Embedding scaling}: Scale embedding outputs by $\{0.5, 1.0, 2.0\}$ (Fig.~\ref{fig:emb-scale-ablation}).
\end{itemize}

\subsection{Results}

\paragraph{Residual bias control (Fig.~\ref{fig:random-init-delta})}: TBMs remain systematically more oversmoothed than SSMs when measuring InterSim on residual \emph{updates} $\Delta h$ rather than full hidden states. The gap persists (though absolute InterSim decreases due to loss of skip-copy), confirming TBM blocks produce more homogenizing updates even after removing the trivial identity path.

\begin{figure*}[h]
\centering
\includegraphics[width=0.5\textwidth]{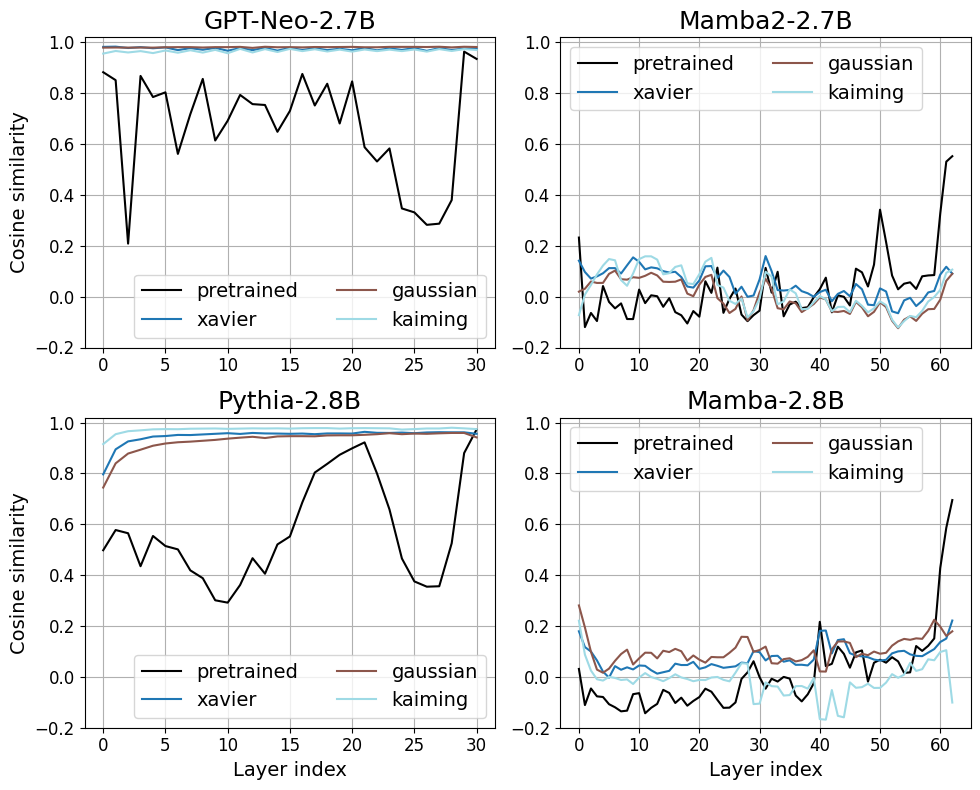}
\caption{
    Layer-wise probing accuracy of TBMs (left) and SSMs (right) with $n=2K$ tokens. \\
    Delta computed as $\Delta = h^{(l+1)} - h^{(l)}$ to isolate non-residual effects.
}
\label{fig:random-init-delta}
\end{figure*}

\paragraph{Normalization ablation (Fig.~\ref{fig:norm-ablation})}: Disabling LayerNorm causes a dramatic collapse in TBM InterSim, while resetting affine parameters ($\gamma=1$, $\beta=0$) produces intermediate behavior. This reveals normalization as a key mechanistic contributor to TBM's early token homogenization at initialization.

\begin{figure}[h]
\centering
\begin{subfigure}[t]{0.33\columnwidth}
    \centering
    \includegraphics[width=\linewidth]{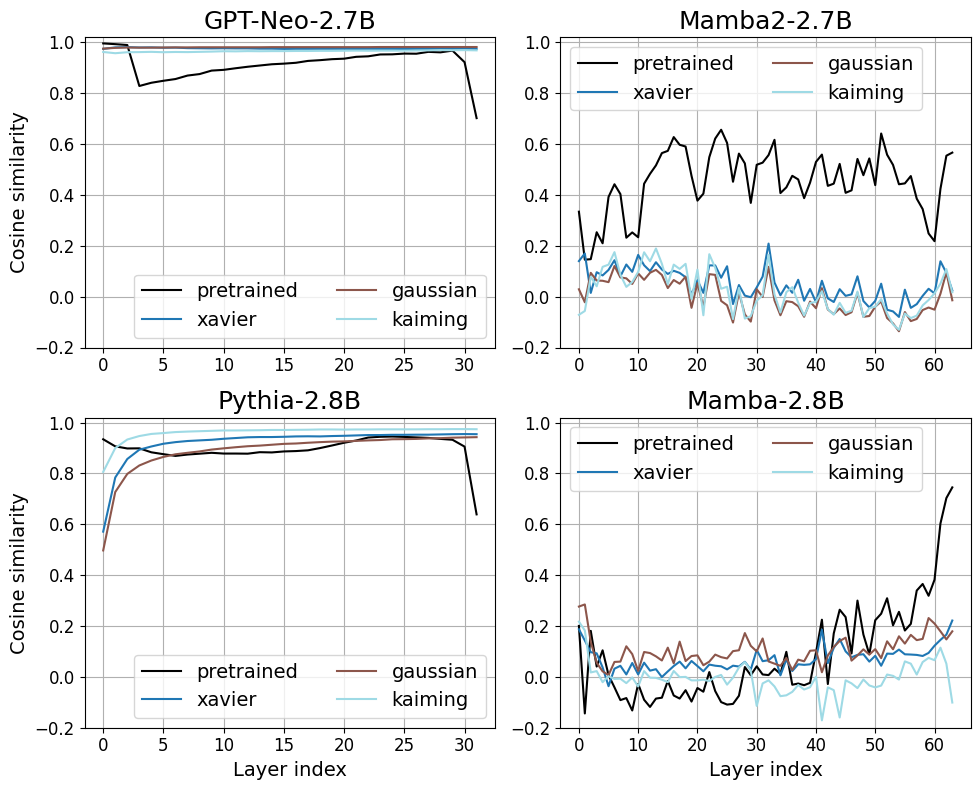}
    \caption{Pretrained norm}
\end{subfigure}\hfill
\begin{subfigure}[t]{0.33\columnwidth}
    \centering
    \includegraphics[width=\linewidth]{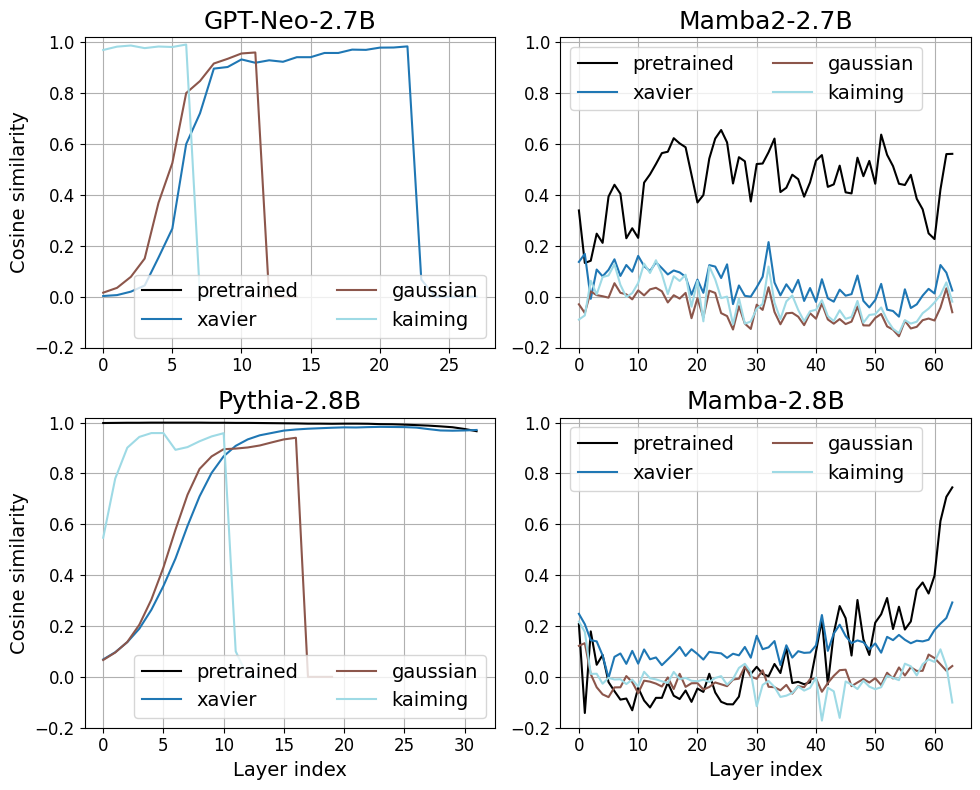}
    \caption{Norm disabled}
\end{subfigure}\hfill
\begin{subfigure}[t]{0.33\columnwidth}
    \centering
    \includegraphics[width=\linewidth]{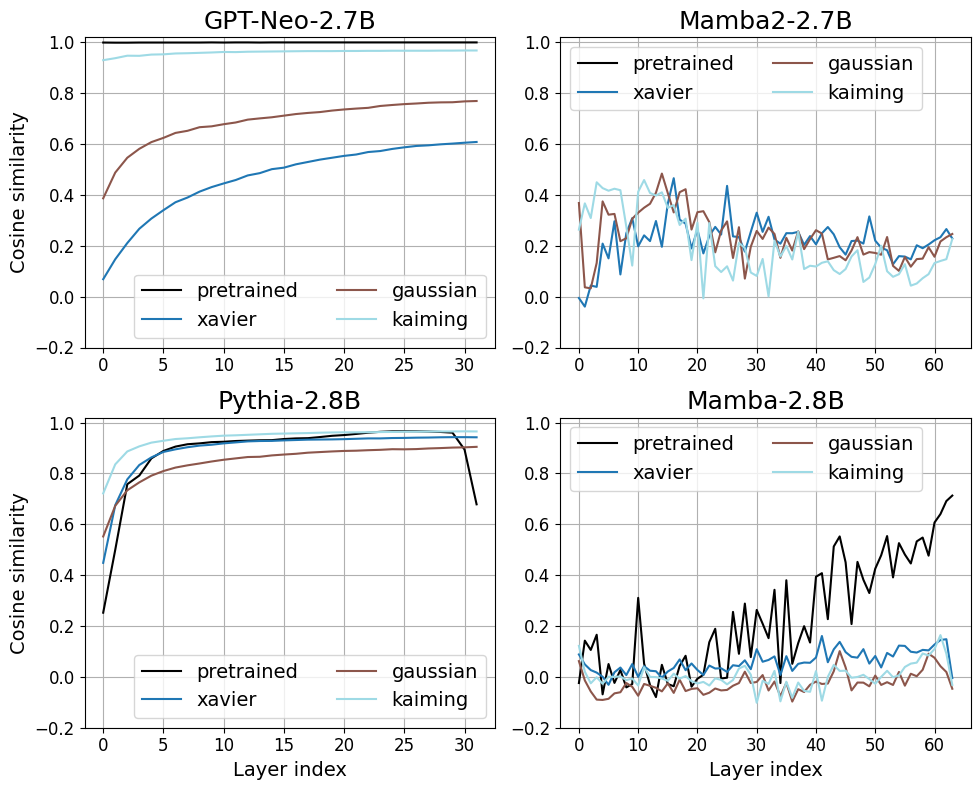}
    \caption{Reset affine params}
\end{subfigure}
\caption{Effect of embedding layer normalization variants under random initializations.}
\label{fig:norm-ablation}
\end{figure}

\paragraph{Embedding scaling (Fig.~\ref{fig:emb-scale-ablation})}: The TBM $>$ SSM oversmoothing ordering remains qualitatively stable across a 4x range of embedding scales $\{0.5, 1.0, 2.0\}$, ruling out activation magnitude as the primary driver of the observed gap.

\begin{figure}[h]
\centering
\begin{subfigure}[t]{0.33\columnwidth}
    \centering
    \includegraphics[width=\linewidth]{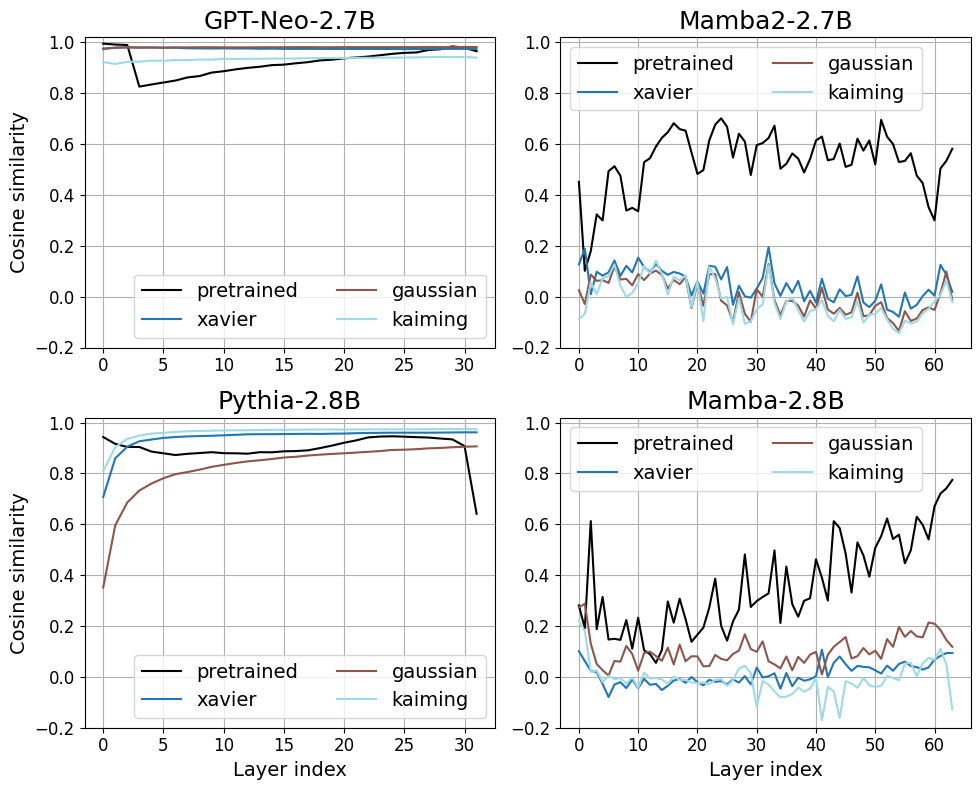}
    \caption{Embedding scaling = 0.5}
\end{subfigure}\hfill
\begin{subfigure}[t]{0.33\columnwidth}
    \centering
    \includegraphics[width=\linewidth]{imgs/random_init_norm-none_emb-1.0.png}
    \caption{Embedding scaling = 1.0}
\end{subfigure}\hfill
\begin{subfigure}[t]{0.33\columnwidth}
    \centering
    \includegraphics[width=\linewidth]{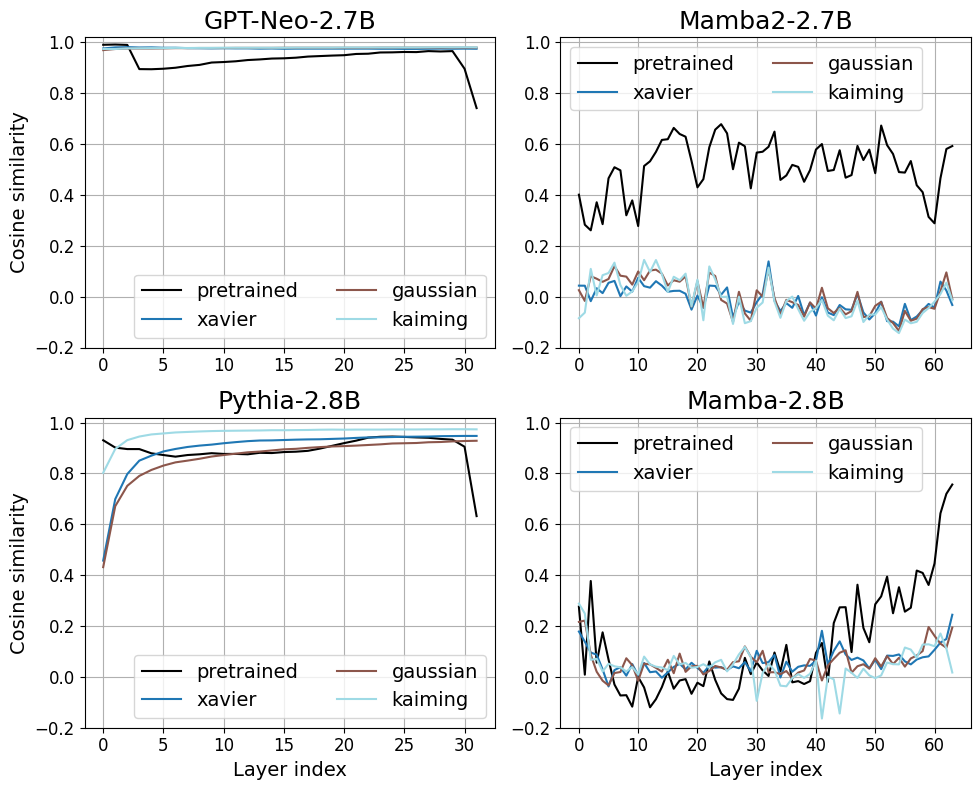}
    \caption{Embedding scaling = 2.0}
\end{subfigure}
\caption{Effect of embedding scaling on random initialization.}
\label{fig:emb-scale-ablation}
\end{figure}

\subsection{Discussion}

These results localize TBM oversmoothing at random initialization to the \emph{joint interaction} of residual pathways, LayerNorm preconditioning, and global self-attention mixing. The persistence of TBM $>$ SSM gaps under $\Delta h$ analysis and embedding scaling provides strong evidence against trivial ``skip-copy bias'' or ``norm-rescaling artifact'' explanations.

While LayerNorm ablation reveals its mechanistic importance, this is expected---modern TBMs are specifically architected around pre-/post-LN residual blocks. The oversmoothing bias emerges from this full design rather than attention in isolation. In contrast, SSM mixer blocks (selective scan + gating) produce substantially less homogenization across all conditions, requiring training to develop comparable token collapse.

This architectural distinction---global mixing with normalization vs. local recurrent refinement---underpins the divergent representation dynamics observed throughout the paper.


\end{document}